\numberwithin{equation}{section}
\newcommand{\la}{\lambda}
\newcommand{\btheta}{\bm\theta}
\title{Generalization Bounds of Surrogate Policies\\ for Combinatorial Optimization Problems}
\date{\today}
\author[1]{{\small Pierre-Cyril} Aubin-Frankowski}
\author[1,2,3]{{\small Yohann} De Castro}
\author[1]{{\small Axel} Parmentier}
\author[5]{{\small Alessandro}~Rudi}
\affil[1]{\it \small CERMICS, CNRS, ENPC, Institut Polytechnique de Paris, Marne-la-Vallée, France.}
\affil[2]{\it \small Institut Camille Jordan, École Centrale Lyon, CNRS UMR 5208, France.}
\affil[3]{\it \small Institut Universitaire de France (IUF)}
\affil[5]{\it \small SIERRA, INRIA Paris, France. }
\begin{document}

\maketitle

\begin{abstract}
Many real-world decision problems require solving, again and again, combinatorial optimization instances drawn from a common distribution. A recent line of structured learning methods exploits this regularity by learning policies that pair a statistical model with a tractable combinatorial oracle, instead of solving each instance independently. Training such policies is notoriously difficult, however: the resulting empirical risk is piecewise constant in the model parameters, which hinders gradient-based optimization, and only a few theoretical guarantees have been provided so far. We address this issue by analyzing smoothed (perturbed) policies: adding controlled random perturbations to the direction used by the linear oracle yields a differentiable surrogate risk and improves generalization. Our main contribution is a generalization bound that decomposes the excess risk into $(\mathit{i})$ perturbation bias, $(\mathit{ii})$ statistical estimation error, and $(\mathit{iii})$ optimization error. The perturbation bias is controlled by the \emph{fan-crossing probability}, a new geometric quantity measuring the likelihood that a perturbation changes the oracle solution. We introduce two complementary conditions to bound it~--- the \emph{Uniformly Bounded Density} (UBD) property, yielding a sharp $\mathcal{O}(\lambda)$ bias, and the weaker \emph{Uniform Weak moment} (UW) property, yielding a sub-linear bound~--- both capturing the geometric interaction between the statistical model and the normal fan of the feasible polytope. The statistical estimation error is controlled via a uniform deviation bound over the policy class, with rate $\mathcal{O}(1/(\lambda\sqrt{n}))$ that scales inversely in the smoothing parameter. Concerning the optimization error, we exploit kernel Sum-of-Squares methods to mitigate the curse of dimensionality of global optimization. We illustrate the scope of the results on applications such as stochastic vehicle scheduling, highlighting how smoothing enables both tractable training and controlled generalization. The framework is in particular applicable to contextual stochastic optimization.
\end{abstract}

\section{Introduction}

\subsection{Learning policies instead of minimizing separately}
Consider the following combinatorial optimization problem
\begin{equation}
    \label{eq:hardProblem}
    \min_{\vy \in \calY(\vx)}\fh(\vy,\vx)\,
\end{equation}
where $\calY(\vx)$ is the finite set of feasible solutions for instance $\vx\in\calX$, {\it e.g.,}\ a path $\vy$ in a weighted graph $\vx$. The $\R$-valued objective $f^0$ may model any complex phenomena, and is not restricted to any specific class of functions. We assume however that $f^0$ comes with an oracle, that can be mildly expensive to compute, {\it i.e.}, can take a few seconds to evaluate and for instance be the result of a simulation. We will focus on settings where the problem defined by $f^0$ is challenging in two ways: (i) its modelling is too complex to be cast as a mathematical programming formulation that can be solved with off-the-shelf algorithms; (ii) its computational cost prohibits the use of a local descent heuristic.

Instead of solving each instance independently, we consider a family $\calH$ of policies $h : \vx \in \calX \mapsto \vy\in \calY(\vx)$. Our goal is then to {\it learn} a mapping~$h^\star$ that minimizes the risk
\begin{equation}
    \label{eq:risk}
    \min_{h\in \calH} \calR(h)  \coloneqq \bbE_{X}\big[ \fh(h(X), X) \big],
\end{equation}
where $X$ is a random instance distributed according to some distribution $\bbP_X$. If the set $\calH$ corresponds to all the possible mappings $\calX\to (\calY(\vx))_{\vx\in \calX}$, then we get back to the classical approach where each instance is solved independently by  considering $\vx\mapsto \argmin_{\vy \in \calY(\vx)}\fh(\vy,\vx)$.

We instead restrict ourselves to a family of tractable policies parametrized by a vector $\bfw$ belonging to a set~$\calW$, and we learn the parameter $\bfw$ that minimizes the risk. This approach has proved empirically successful in the last few years on a variety of applications such as portfolio optimization~\citep{elmachtoubSmartPredictThen2021}, path finding in images~\citep{poganvcic2019differentiation}, vehicle scheduling~\citep{parmentierLearningApproximateIndustrial2021}, route planning~\citep{Ferber2023}, machine scheduling~\citep{parmentierLearningSolveSingle2021}, dynamic vehicle routing~\citep{batyCombinatorialOptimizationEnrichedMachine2024}. For such an approach to work, the policy mappings of $\calH$ need to be able to properly explore the large combinatorial set of solutions $\calY(\vx)$. The approach amounts to delegating the exploration of~$\calY(\vx)$ to the linear optimization problem
\begin{equation}
    \label{eq:ylinearProblem}
    \hat \vy_\vx(\btheta)\in\argmax_{\vy \in \calY(\vx)}\, \langle \vy , \btheta  \rangle\,,
\end{equation}
where $\hat \vy_\vx\,:\,\btheta\in\bbR^{d(\vx)} \mapsto \hat \vy_\vx(\btheta)\in\calY(\vx)$ is an oracle returning an optimal solution of~\eqref{eq:ylinearProblem}, vector~$\btheta$ is a direction in $\bbR^{d(\vx)}$ and $\langle \cdot,\cdot\rangle$ denotes the standard inner product. We assume in this article to have a practically efficient algorithm to solve \eqref{eq:ylinearProblem}. Our oracle for fixed $\btheta$ is for instance a LP solver, a MILP solver, a shortest path algorithm,\dots The statistical model $\featw$ is then used to find a direction vector $\btheta$ that leads to solutions $\vy$ with low $\fh(\vy,\vx)$. In other words, we consider the following family of parametrized policy mappings
\begin{equation}
    \label{eq:modelClass}
    \calH_\calW \coloneqq
    \big \{h_\vw\,:\,\vx\in\calX \mapsto \hat \vy_\vx(\featw(\vx))\in\calY(\vx)\,,\ \vw\in\calW \big\}\,,
\end{equation}
for some parameter space $\calW \subset \R^{d_\calW}$. Note that the objective of~\eqref{eq:ylinearProblem} nevertheless relies on an embedding of~$\calY(\vx)$ in $\bbR^{d(\vx)}$, whose dimension $d(\vx)$ may depend on the instance considered, and that different $\btheta$, and hence~$\vw$, can lead to the same solution $\hat \vy_\vx(\btheta)$. Figure~\ref{fig:stoVSP_policy} illustrates such a policy on a stochastic vehicle scheduling problem.

\begin{figure}[!ht]
    \centering
\begin{tikzpicture}
\def\l{0.6cm}

\node[align=center,draw] (sm) {Statistical model \\ $\psi_{\vw}$};
\node[align=center,draw,right=2.5 of sm] (co) {Comb.~optimization \\ (minimum cost flow) \\ $\displaystyle\max_{\vy \in \calY(\vx)}\langle \vtheta,\vy\rangle$};

\draw[<-] (sm) to node[midway, below] {Instance} node[midway,above] (x) {$\vx$}  ++(-4,0) node[left] {$h_\vw:$};
\draw[->] (sm) to node[midway, below] {Arc weights} node[midway,above] (theta) {$\vtheta = (\theta_a)_{a \in A}$} (co);
\draw[->] (co) to node[midway,below] {solution} node[midway,above] (y) {$\vy$} ++(5,0);

\node[above = 0.4 of x] (solution) {
    \begin{tikzpicture}[scale=0.6, transform shape]
        \node[draw, text width=1cm, align=center] (o)  at (1*\l,1) {$o$};
        \node[draw, text width=1cm, align=center] (d)  at (11*\l,1) {$d$};
        \node[draw,circle] (v1) at (2*\l,4) {$v_1$};
        \node[draw,circle] (v2) at (3.5*\l,1) {$v_2$};
        \node[draw,circle] (v3) at (4*\l,2) {$v_3$};
        \node[draw,circle] (v4) at (7*\l,4) {$v_4$};
        \node[draw,circle] (v5) at (7*\l,2.5) {$v_5$};
        \node[draw,circle] (v6) at (7*\l,1) {$v_6$};
        \node[draw,circle] (v7) at (9*\l,2) {$v_7$};
        \node[draw,circle] (v8) at (9*\l,4) {$v_8$};

        \draw[->,dashed] (3*\l,5) to node[midway]{$\phi(a,\vx)$} (8*\l,5);

        \draw[->,dashed] (v1) --  (v3);
        \draw[->,dashed] (v2) -- (v3);
        \draw[->,dashed] (v3) -- (v4);
        \draw[->,dashed] (v3) -- (v6);
        \draw[->,dashed] (v5) -- (v8);
        \draw[->,dashed] (v6) -- (v7);
        \draw[->,dashed] (o) -- (v2);
        \draw[->,dashed] (v2) to (v6);
        \draw[->,dashed] (v6) -- (d);
        \draw[->,dashed] (o) -- (v1);
        \draw[->,dashed] (v1) -- (v4);
        \draw[->,dashed] (v4) -- (v8);
        \draw[->,dashed] (v8) -- (d);
        \draw[->,dashed] (o) -- (v3);
        \draw[->,dashed] (v3) -- (v5);
        \draw[->,dashed] (v5) -- (v7);
        \draw[->] (v7) -- (d);

    \end{tikzpicture}
};

\node[above = 0.4 of theta] (solution) {
    \begin{tikzpicture}[scale=0.6, transform shape]
        \node[draw, text width=1cm, align=center] (o)  at (1*\l,1) {$o$};
        \node[draw, text width=1cm, align=center] (d)  at (11*\l,1) {$d$};
        \node[draw,circle] (v1) at (2*\l,4) {$v_1$};
        \node[draw,circle] (v2) at (3.5*\l,1) {$v_2$};
        \node[draw,circle] (v3) at (4*\l,2) {$v_3$};
        \node[draw,circle] (v4) at (7*\l,4) {$v_4$};
        \node[draw,circle] (v5) at (7*\l,2.5) {$v_5$};
        \node[draw,circle] (v6) at (7*\l,1) {$v_6$};
        \node[draw,circle] (v7) at (9*\l,2) {$v_7$};
        \node[draw,circle] (v8) at (9*\l,4) {$v_8$};

        \draw[->,dotted] (3*\l,5) to node[midway]{$\theta_a$} (8*\l,5);

        \draw[->,dotted] (v1) --  (v3);
        \draw[->,dotted] (v2) -- (v3);
        \draw[->,dotted] (v3) -- (v4);
        \draw[->,dotted] (v3) -- (v6);
        \draw[->,dotted] (v5) -- (v8);
        \draw[->,dotted] (v6) -- (v7);
        \draw[->,dotted] (o) -- (v2);
        \draw[->,dotted] (v2) to (v6);
        \draw[->,dotted] (v6) -- (d);
        \draw[->,dotted] (o) -- (v1);
        \draw[->,dotted] (v1) -- (v4);
        \draw[->,dotted] (v4) -- (v8);
        \draw[->,dotted] (v8) -- (d);
        \draw[->,dotted] (o) -- (v3);
        \draw[->,dotted] (v3) -- (v5);
        \draw[->,dotted] (v5) -- (v7);
        \draw[->] (v7) -- (d);
    \end{tikzpicture}
};

\node[above = 0.4 of y] (solution) {
    \begin{tikzpicture}[scale=0.6, transform shape]
        \node[draw, text width=1cm, align=center] (o)  at (1*\l,1) {$o$};
        \node[draw, text width=1cm, align=center] (d)  at (11*\l,1) {$d$};
        \node[draw,circle] (v1) at (2*\l,4) {$v_1$};
        \node[draw,circle] (v2) at (3.5*\l,1) {$v_2$};
        \node[draw,circle] (v3) at (4*\l,2) {$v_3$};
        \node[draw,circle] (v4) at (7*\l,4) {$v_4$};
        \node[draw,circle] (v5) at (7*\l,2.5) {$v_5$};
        \node[draw,circle] (v6) at (7*\l,1) {$v_6$};
        \node[draw,circle] (v7) at (9*\l,2) {$v_7$};
        \node[draw,circle] (v8) at (9*\l,4) {$v_8$};

        \draw[->,thick,color=purple] (o) -- (v2);
        \draw[->,thick,color=purple] (v2) to (v6);
        \draw[->,thick,color=purple] (v6) -- (d);

        \draw[->,thick,color=red] (o) -- (v1);
        \draw[->,thick,color=red] (v1) -- (v4);
        \draw[->,thick,color=red] (v4) -- (v8);
        \draw[->,thick,color=red] (v8) -- (d);

        \draw[->,thick,color=blue] (o) -- (v3);
        \draw[->,thick,color=blue] (v3) -- (v5);
        \draw[->,thick,color=blue] (v5) -- (v7);
        \draw[->,thick,color=blue] (v7) -- (d);

    \end{tikzpicture}
};
\end{tikzpicture}
    \caption{Illustration of the stochastic vehicle scheduling policy.
    }
    \label{fig:stoVSP_policy}
\end{figure}

As classical in machine learning, we then consider an empirical version $\calR_n$ of the risk $\calR$ in \eqref{eq:risk} which is approximated through samples $(X_1,\ldots,X_n)$ by
\begin{equation}
    \label{eq:regretMinimization}
    \min_{\vw \in \calW} \calR_n(h_\vw),
    \quad \text{where} \quad
    \calR_n(h_\vw) = \frac1n\sum_{i=1}^n f^0\big(h_\vw(X_i),X_i\big).
\end{equation}
Unfortunately, due to the combinatorial nature of~\eqref{eq:ylinearProblem}, this objective is piecewise constant in $\vw$, which has two major issues: it makes the optimization problem intractable, and leads to poor generalization properties.
Consequently, we rely also on a perturbation strategy developed in~\citet{berthetLearningDifferentiablePerturbed2020} to turn the linear optimization problem~\eqref{eq:ylinearProblem} into a family of tractable distributions $p_\lambda(\vy|\vtheta)$, with $\lambda>0$ a parameter controlling the smoothness of $\vtheta \mapsto p_\lambda(\vy|\vtheta)$. We then formulate the learning problem as
\begin{equation}
    \label{eq:def_pbm_reg}
    \min_{\vw \in \calW} \calR_{n,\lambda}(h_\vw) ,
    \quad \text{where} \quad
    \calR_{n,\lambda}(h_\vw) = \frac1n\sum_{i=1}^n \bbE_{\vy \sim p_\la(\vy|\psi_{\vw}(X_i))}\big[f^0\big(\vy,X_i\big)\big],
\end{equation}
where $p_\la$ is defined through
\begin{equation*}
    p_\la(\vy|\btheta)\coloneqq \mathbb P_Z\big[\vy \in \argmax_{\vy'\in\calY(\vx)} \langle \vy',\btheta+\lambda Z(\vx) \rangle \big],
\end{equation*}
with $Z:\calX\to \bbR^{d(\vx)}$ an isotropic random variable, {\it e.g.,}\ Gaussian. For $\lambda=0$, through $p_\la$ we recover  \eqref{eq:ylinearProblem}, and \eqref{eq:def_pbm_reg} coincides with \eqref{eq:regretMinimization}. We refer to the more formal Definitions~\ref{def:surrogate_proba} and \ref{def:perturbed_proba} for the handling of non-unique optimal~$\vy'$. In practice, we consider a Monte-Carlo approximation of the expectation in \eqref{eq:def_pbm_reg} by sampling over~$Z$~\citep{berthetLearningDifferentiablePerturbed2020, parmentierLearningStructuredApproximations2021}. Through the regularization, we obtain a smooth objective~\eqref{eq:def_pbm_reg} which is more amenable for optimization, whose error we will also have to control. In this paper, we choose to study the so-called kernel Sum-of-Squares method \citep{rudiFindingGlobalMinima2020} since it is among the few methods providing a bound on the optimization error for which the smoothness of the objective function $\calR_{n,\lambda}$ can mitigate the curse of dimensionality in the dimension $d_\calW$ of $\calW$.

\subsection{Discussion on the methodological choices}

\paragraph{Why consider a learning problem?}
The idea of transforming an optimization problem into a learning one is motivated by industrial practice.
In many real-world scenarios, the problem instances come from a specific but unknown distribution.
For example, an airline optimizes the routes of its airplanes based on a schedule and some operational constraints that do not vary much from one day to another. Learning approaches balance the computational burden between \emph{learning}, when the policy is trained offline on historical data, and there is typically no industrial process deadline implying a limited time budget, and \emph{inference}, when the calibrated policy is used on-the-fly to find a solution for a given instance, and a strong deadline must typically be respected for industrial settings.
While in classical approaches directly optimizing $\fh(\cdot,\vx)$, all the computation time occurs during inference, our approach makes most of the computation occur during learning: only the linear surrogate~\eqref{eq:ylinearProblem} needs to be solved  during inference. This is helpful in industrial settings, where the full information on the instance is often known at the last minute, which strongly constrains the computing time and forces one to use only objective functions simple enough, such as \eqref{eq:ylinearProblem}, to allow for fast optimization algorithms.

\paragraph{Why not approximate $f^0$ directly?}
Since combinatorial algorithms must scale to be useful in OR, practitioners generally simplify~\eqref{eq:hardProblem} into
\begin{equation}\label{eq:over_simplified}
    \min_{\vy \in \calY(\vx)}\tilde \fh(\vy,\vx)\,
\end{equation}
where $\tilde \fh$ is a possibly-over-simplified deterministic objective amenable to classical algorithms. Our oracle \eqref{eq:ylinearProblem} does fall within this category, however our crucial variable is $\vw$, to coordinate between instances. Moreover we do not look to approximate the values of $\fh$ by $\tilde \fh$, and we do not discard $\fh$ since it is crucial in \eqref{eq:def_pbm_reg}. We do wish nevertheless for $\hat \vy_\vx(\btheta)$ to be close to $\argmin \fh(\cdot,\vx)$, in terms of values of $\fh$. The learning approaches focus on optimizing the true objective~$\fh$, while the estimate-then-optimize approach forgets about $\fh$ and considers only its approximation $\tilde f^0$ during the optimization phase of~\eqref{eq:over_simplified}, and may overfit $\tilde \fh$.

\paragraph{Why consider a linear oracle in \eqref{eq:ylinearProblem}?}
The choice of the oracle~\eqref{eq:ylinearProblem} amounts to an architecture choice for a neural network. Indeed the oracle appears as a final composition in \eqref{eq:modelClass} (see \Cref{fig:stoVSP_policy}), hence we consider it as a combinatorial layer in our pipeline (see \Cref{fig:pipeline} below). This layer is application-dependent and critical for performance. A good surrogate model should be simple enough to have efficient algorithms, while being expressive enough to adapt to many settings. We consider a linear oracle because it provides such a balance.

Linear oracles are the de facto choice in optimization-augmented learning because the framework only needs a linear minimization oracle over the polytope $\calC = \conv(\calY)$, leaving the actual solver free. By Meyer's theorem~\citep{Meyer1974}, the convex hull of feasible solutions of any bounded MILP is itself a polytope, so any MILP-formulable problem fits as a linear layer. Additionally, our approach only requires linearity in $\vtheta$: for a layer $\max_{\vz} \vtheta^\top \phi(\vz)$ with non-linear mapping $\phi$, we can define $\vy = \phi(\vz)$ to return to our setting.

\paragraph{Why not assume access to a training set?} Finding a suitable policy in $\calH$ requires designing a learning algorithm that selects a parameter~$\vw$ that leads to a good performance in practice.
Two typologies of learning strategies have been contrasted in~\citep{bengioMachineLearningCombinatorial2021}: supervised learning, which is also referred to as imitation learning or inverse optimization depending on the community, and risk minimization,  a.k.a.~learning by experience. In risk minimization as in \eqref{eq:regretMinimization}, one does not have samples of near-optimal solutions. By contrast supervised learning approaches assume that the training set contains instance solution pairs $(\vx_1, \bar\vy_1),\ldots,(\vx_n,\bar\vy_n)$ of~\eqref{eq:hardProblem}. In this setting, the original objective~$f^0$ is dismissed, and surrogate losses $ \ell_{imitation}$ evaluate how far the model output~$\hat \vy_\vx(\featw(\vx_i))\in\calY(\vx_i)$ is from the training set target~$\bar\vy_i$. Nevertheless this setting still fits our framework by setting $\tilde \vx=(\vx, \bar \vy)$, assuming the existence of a joint law on $(X,\bar Y)$, and setting $f^0(\tilde \vx_i,\vy)\coloneqq  \ell_{imitation}(\vx_i, \bar \vy_i, \vy) $.

\paragraph{Why focus on the risk on $\calH_\calW$ rather than on any measurable $h$?} Our excess risk bounds are stated with respect to the best policy in the parametric class $\calH_\calW$ (cf.~\eqref{eq:modelClass}) instead of the Bayes optimal policy over all measurable mappings. This restriction is deliberate for several, complementary reasons. For any (possibly unattainable) measurable $h$, the classical risk decomposition writes
    \[
        \calR(\hat h) - \inf_{h\,\text{meas.}} \calR(h)
        = \underbrace{\big(\inf_{h\in \calH_\calW} \calR(h) - \inf_{h\,\text{meas.}} \calR(h)\big)}_{\text{approximation / misspecification}}
        + \underbrace{\big(\calR(\hat h)-\inf_{h\in \calH_\calW}\calR(h)\big)}_{\text{estimation + optimization + smoothing (this paper)}}.
    \]
    The first (misspecification) term depends intricately on the chosen architecture (depth, width, feature design, encoding of $\calY(\vx)$, etc.) and the combinatorial framework encapsulated in $\fh$. This term is largely orthogonal to the \emph{geometric} phenomena (normal fan structure and perturbation smoothing) we study in this paper. By focusing on $\calH_\calW$, we isolate the three components we can control: statistical estimation, optimization of~$\calR_{n,\lambda}$, and the perturbation bias under only a minimal assumption on $\fh$ (boundedness). Handling the first (approximation) term lies outside the scope of this paper: it is architecture- and application-dependent and would require additional modelling of the instance distribution together with regularity assumptions on (near-)optimal solution paths.

\paragraph{Are there other frameworks for bounding the optimization error?}  To the best of our knowledge, many global optimization algorithms converge because they provably do an exhaustive search, as is the case for DIRECT for instance \cite[Section 5]{Jones1993}. Consequently, their convergence bounds are most often affected by the curse of dimensionality. Here we have an advantage since $\calR_{n,\la}$ is a smooth function thanks to the parameter $\la>0$. This incites us to use a specific kernel-based approach recently suggested in \cite{rudiFindingGlobalMinima2020} to leverage the smoothness of the function $\calR_{n,\la}$ in our theoretical bounds. In practice however, we recommend using SGD to leverage existing solvers.

Concerning SGD or other gradient-based solvers, the perturbation $p_\la$ was introduced in \cite[Definition 2.1]{berthetLearningDifferentiablePerturbed2020} to compute the needed gradient.
\cite[Proposition 4.1]{dalle2022learning} provides the formula for the gradient $\nabla_{\bftheta}$ of the objective of~\eqref{eq:def_pbm_reg}, and $\nabla_{\bfw}$ is then deduced by the chain rule.
However this is a score-based estimation of the gradient, which is known to have a high variance.
We refer to \cite{dalle2022learning} for more insights on differentiating through combinatorial layers.

\medskip

In a nutshell, instead of approximating the objective function~$\fh$ in \eqref{eq:hardProblem} by a simpler~$\tilde \fh$ as in \eqref{eq:over_simplified}, we study an average error $\bbE_{X}\big[ \fh(h(X), X) \big]$ as in~\eqref{eq:risk}. The latter involves a statistical model~$\calH$ defined in~\eqref{eq:modelClass} and derived from the linear combinatorial optimization problem~\eqref{eq:ylinearProblem}. The role of the model is to explore efficiently the solution set $\calY(\vx)$ and  we do not seek to approximate~$\fh$ in any way. Our Theorem~\ref{thm:A3_statistical} provides error guarantees when $n$ goes to $+\infty$ to show the convergence of problem \eqref{eq:regretMinimization} to~\eqref{eq:risk}. Note that convergence is towards the best policy in the hypothesis class $\calH_\calW$ defined in~\eqref{eq:modelClass}, which may be far from the optimal Bayes policy if $\calH$ has been poorly chosen.

\subsection{Related works}
\label{sec:related_works}

\paragraph{Relevant applications}
Parametrized policies $h_{\vw}$ defined in Equation~\eqref{eq:modelClass} have demonstrated state-of-the-art empirical performance across several families of combinatorial decision problems. We briefly outline key application patterns while keeping the focus on why a \emph{policy-learning with linear oracle} viewpoint is attractive. In \emph{black-box optimization}, evaluating $\fh(\vy,\vx)$ requires an expensive simulation ({\it e.g.,} delay propagation, stochastic congestion), and directly solving \eqref{eq:hardProblem} is prohibitive when $|\calY(\vx)|$ is large. Rather than constructing a global simplified objective $\tilde f^0$, we learn a direction $\psi_{\vw}(\vx)$ that is passed to a tractable linear oracle, using it as an \emph{exploration mechanism} of $\calY(\vx)$ rather than as a pointwise approximation of $\fh$, {\it e.g.,} \cite{vu2017surrogate}. This also allows for \emph{scaling and fast heuristics} as, even if $\fh$ is explicit, exact solvers or MILP formulations may fail to scale. Learning $\vw$ shifts instead the heavy computation to offline: one learns $h_\vw$ on moderate size instances (offline) and infers large size instance solutions using one linear oracle call. This yields high-quality fast heuristics in large-scale scheduling \citep{parmentierLearningSolveSingle2021} and bi-level contexts \citep{ferrarini2025learning}. Another example is \emph{stochastic optimization} where $\fh$ embeds expectations over uncertain parameters ({\it e.g.,} stochastic programming \citep{shapiro2021lectures}), and traditional decomposition handles only moderate sizes. Recent studies on surrogate policy learning have shown strong performance in stochastic vehicle scheduling \citep{parmentierLearningApproximateIndustrial2021}, two-stage structured approximations \citep{dalle2022learning,parmentierLearningStructuredApproximations2021}, and routing \citep{ahmed2024districtnet}. Last but not least, for \emph{contextual and multi-stage variants} of stochastic optimization \citep{sadana2024survey} (see Example~\ref{ex:contextual_optimization} below), the side information replaces explicit conditional expectation estimation, and surrogate-policy extensions towards multi-stage or dynamic routing and equilibrium-inspired flow models appear in challenge-winning or emerging work \citep{batyCombinatorialOptimizationEnrichedMachine2024,jungel2022structured,EUROMeetsNeurIPS,jungel2025learning,greif2024combinatorial,jungel2025wardropnet}.

The present work provides geometric and probabilistic principles (normal fan structure + perturbation smoothing) explaining these outcomes.

\medskip

To illustrate our theoretical guarantees, we present three examples, one generic and two more industrial.
\begin{example}(Contextual Stochastic Optimization)
    \label{ex:contextual_optimization}
    Consider a stochastic optimization problem where the cost over a solution is influenced by some noise $\xi$. The decision maker does not observe~$\xi$, but has access to some contextual data $\tilde X$ that is correlated to~$\xi$.
    A solution is a policy $h$ that maps any context $\tilde \bfx$ to a feasible decision $\bfy \in \calY(\tilde \bfx)$ given $\tilde \bfx$.
    Given a joint distribution over $(\tilde X,\xi)$ and a hypothesis class $\calH$ for $h$, the goal of contextual stochastic optimization is to find the policy that minimizes the expected risk
    \begin{equation}
    \notag
        \min_{h \in \calH} \bbE\Big[f^\rmc\big(h(\tilde X),\tilde X,\xi\big)\Big],
    \end{equation}
     where $f^c(\bfy,\tilde \vx, \xi)$ is the cost incurred if decision $\bfy$ is taken under context $\tilde \vx$ and uncertainty $\xi$. An optimal policy is provided by the conditional distribution
    \begin{equation}\label{eq:optContextualStoPolicy}
        h^\star : \tilde \bfx \longmapsto \argmin_{\vy \in \calY(\tilde \vx)} \bbE\Big[f^\rmc\big(\vy,\tilde X,\xi\big)\Big| \tilde X = \tilde\vx\Big].
    \end{equation}
    In contextual stochastic optimization however, see \citet{sadana2024survey} for a survey of this emergent field, the joint distribution over~$(\tilde X,\xi)$ is unknown and we only have access to a training set $(\tilde \vx_1, \bfxi_1),\ldots,(\tilde \vx_n,\bfxi_n)$.
    This makes the optimal policy~\eqref{eq:optContextualStoPolicy} impractical.

    Contextual stochastic optimization can be considered as a special case of our setting. It suffices to define~$X$ as $(\tilde X,\xi)$, and $\fh(\bfy,\bfx)$ as $f^\rmc(\bfy,\tilde\bfx,\bfxi)$ for $\bfx = (\tilde \bfx,\bfxi)$. We then restrict ourselves to models~$\featw(\bfx)$ of the form~$\tilde \psi_{\bfw}(\tilde \bfx)$ that exploit only the contextual information and not the noise. Similarly we can restrict the policies $h$ not to depend on $\bfxi$ to preserve the fact that $\bfxi$ is uncertain at decision, {\it i.e.}\ $h(\tilde \bfx,\bfxi)= \tilde h(\tilde \bfx)$ for some $\tilde h$. Common practice in stochastic optimization would have suggested defining $X$ as~$\tilde X$ and
    $\fh(\bfy,\bfx)$ as the objective of \eqref{eq:optContextualStoPolicy}. However this would require the conditional distribution of $\xi$ given $\tilde X = \vx_i$, which we do not know.
\end{example}

\begin{example}\label{ex:stoVSP}
    (Stochastic Vehicle Scheduling \citep{parmentierLearningApproximateIndustrial2021})
    Consider a set of tasks with fixed begin and end times operated by a fleet of vehicles. When a task is late, the assigned vehicle may propagate the delay to the next tasks. The goal is to build the sequence of tasks operated by each vehicle in such a way that each task is operated by exactly one vehicle and the expected cost of delay~$\fh(\vy,\vx)$ is minimized.
    The instance~$\vx$ can typically be encoded as a digraph~$D = (V,A)$ where~$V$ is the set of tasks, and there is an arc~$a$ in $A$ between~$v$ and~$v'$ if these two tasks can be operated by the same vehicle in a sequence, and a vector~$\bfphi(a,\vx)$ of additional information for each arc~$a$. Since a solution corresponds to a partition of the vertices of an acyclic digraph into paths, the set of feasible solutions~$\calY(\vx)$ can be identified with the vertices of the flow polytope over~$D$. As a statistical model, one can take~$\psi_{\bfw}: \bfx \mapsto \bftheta = (\theta_a)_{a \in A}$ with~$\theta_a = \langle \bfw, \bfphi(a,\vx) \rangle$ for each~$a$.
    The corresponding policy is illustrated on Figure~\ref{fig:stoVSP_policy}. Aircraft routing problems solved routinely by airlines are variants of this problem. In particular, the stochastic version aims at controlling delay propagation. In practice, solvers used in production typically address the linear oracle. Consequently, switching from the deterministic and delay-agnostic approaches currently in use to our policy that avoids delay propagation does not alter the inference algorithm in production; it only requires a change in the parametrization of the inference solver. While computing this new parametrization involves solving a computationally intensive learning problem, this step is carried out during the offline learning rather than during on-the-fly inference.
\end{example}

\begin{example}\label{ex:scheduling}
(Single Machine Scheduling \citep{parmentierLearningSolveSingle2021})
In single machine scheduling problems, $n$ jobs have to be performed on a given machine. A solution is a permutation of $\{1,\ldots,n\}$ giving the order in which the jobs are processed. Several objectives can be used. In the reference above, jobs have release times, before which they cannot be started, and processing times, which indicate the time needed by the machine to operate the job. The vector $\bfy$ encodes a permutation, and the goal here is to minimize $\fh(\bfy,\vx)$ which is equal to the sum (the average) of the job completion times, taking into account the release time.
Permutations $\bfy$ can be encoded by vectors $\bfy_i$ giving the position of job $i$ in the permutation. Optimizing a linear objective $\langle \bftheta,\bfy\rangle$ amounts to sorting $\bftheta$, and we can therefore use a sorting algorithm as a linear minimization oracle.
\end{example}

\paragraph{Learning algorithms and generalization bounds.}

A prominent approach combining linear surrogates~\eqref{eq:ylinearProblem} and risk minimization is ``Smart Predict then Optimize'' (SPO)~\citep{elmachtoubSmartPredictThen2021}.
SPO replaces the discontinuous regret loss with a convex surrogate (SPO+), leveraging the duality of linear programming to convexify the decision boundary.
However, generalization guarantees in this framework~\citep{ElBalghiti2023} rely heavily on the linearity of the objective and fixed feasible sets $\calY$ independent of $\vx$.
Consequently, the duality arguments underpinning SPO+ collapse when applied to generic, potentially black-box non-linear functions $f^0(\vy, \vx)$ with instance-dependent decision boundaries, as there is no natural convex surrogate available.\\
In contrast, we investigate the risk minimization paradigm~\eqref{eq:def_pbm_reg} where the loss is a smoothed version of the objective.
This approach has received limited attention as discussed in~\citep{parmentierLearningStructuredApproximations2021}, with exceptions found in \citet{qi2021integrated}, \citet{Ferber2023}, and recent works on contextual stochastic optimization~\citep{bouvier2025primal, hoppe2025structured}.
While we share the SPO goal of minimizing downstream decision regret rather than prediction error, we address the intractability of the empirical risk~\eqref{eq:regretMinimization} via \textit{randomized smoothing}~\citep{berthetLearningDifferentiablePerturbed2020} rather than convex relaxation.\\
This methodological divergence necessitates a distinct theoretical analysis.
We cannot rely on margin-based bounds such as the ``distance to degeneracy'' employed by \citet{ElBalghiti2019gen}, as we do not bound the complexity of a discrete loss directly.
Instead, we must control the \textit{smoothing bias} $|\mathcal{R}_\lambda - \mathcal{R}_{\varepsilon_0}|$.
To this end, we introduce the \emph{Uniform Weak (UW) property} (Section~\ref{sec:setting_contributions}) and employ probabilistic assumptions (Assumption~\ref{A_law}) rather than Lipschitz continuity of the loss.
This allows us to bound the approximation error and ensure that minimizing the smoothed surrogate yields a performant solution for the true combinatorial problem.\\
Finally, our problem relates to data-driven algorithm design~\citep{gupta2016pac, balcan2020chapter}.
While \citet{balcanHowMuchData2021} provide guarantees for piecewise decomposable risks, their results do not apply to our smoothed, regularized risk.
Our Theorem~\ref{thm:A3_statistical} provides the counterpart to their results, leveraging tools tailored to our smoothed non-linear context.

\subsection{Contributions}
In this paper, we build upon the normal fan geometry (see Section~\ref{sec:surrogate_policy}) associated with the linear optimization program~\eqref{eq:ylinearProblem} and upon a perturbation approach to deliver a generalization bound (Theorem~\ref{thm:principal}) that controls the perturbation bias, the statistical learning error, and the optimization error.

Perturbation makes the learning problem smooth and amenable to gradient descent methods. Furthermore, we prove non-asymptotic guarantees on the performance of the learned policy, as well as convergence rates towards the best policy encoded by~\eqref{eq:modelClass}.

To obtain our performance guarantees, we introduce the \emph{fan-crossing probability} $q_\vw(\lambda)$ (see~\eqref{eq:def_Vw}), which measures the probability that a perturbation of scale $\lambda$ changes the oracle solution. This quantity controls the perturbation bias. To bound it, we introduce two complementary conditions: the uniform weak (UW) moment property, which yields a sub-linear bound on the bias, and the uniformly bounded density (UBD) property, which yields a sharp linear bound. Both properties quantify the interplay of the statistical model and the normal fan of~\eqref{eq:ylinearProblem}. We show that UW holds under mild assumptions on the statistical model $\psi_\vw$ and on $\bbP_X$, and we illustrate this on several examples in Section~\ref{sec:UW_examples}.

\subsection{Outline and notation}
The next section (Section~\ref{sec:surrogate_policy}) contains our surrogate policy model (Section~\ref{subsec:surrogate_policy}), the different kinds of risks (Section~\ref{subsec:risks}) and the main theorem on generalization guarantees (Section~\ref{subsec:guarantees}). Section~\ref{sec:setting_contributions} is devoted to the proofs of the main theorems. It starts with the working assumptions (Section~\ref{sec:conditions}), then we introduce the fan-crossing probability $q_\vw(\lambda)$ and derive the perturbation bias bounds under the Uniform Weak (UW) and Uniformly Bounded Density (UBD) properties (Section~\ref{sec:pertub_bias}). Section~\ref{sec:ERM} focuses on the discrepancy between the risk and its empirical version. Finally, we provide in Section~\ref{sec:k-SoS_control} the optimization error bound when using kernel Sum-of-Squares.

\medskip

We denote by $\mathcal L(U)$ the law of a random variable $U$ and by $\mathbb P_U$ (resp.\ $\mathbb E_U$) the probability (resp.\ expectation) with respect to the law $\mathcal L(U)$. We denote the stochastic boundedness by $\mathcal O_\bbP$: for a sequence of random variables~$(U_n)_{n}$, notation $U_n=\mathcal O_\bbP(1)$ means that for any $\varepsilon>0$, there exists a finite $M>0$ and a finite $m\geq 0$ such that for every $n\geq m$, $\mathbb P(|U_n|>M)\leq\varepsilon$. For a sequence of non-zero reals $(a_n)_n$, $U_n=\mathcal O_\bbP(a_n)$ denotes that $(U_n/a_n)=\mathcal O_\bbP(1)$.

\medskip

The set of instances is denoted by $\calX$, and the set of feasible solutions for an instance $\vx\in\calX$ by~$\calY(\vx)\subseteq \bbR^{d(\vx)}$. Given an instance $\vx\in\calX$, the objective function is~$\fh(\cdot,\vx)$ whose optimal solution is denoted by $\vy^0(\vx)$ or simply~$\vy^0$. The oscillation of $\fh$ is defined by
\begin{equation}
    \label{eq:def_osc}
    \operatorname{osc}(\fh)\coloneqq\sup_{\vy,\vx}\big\{\fh(\vy,\vx)\big\}
    - \inf_{\vy,\vx} \big\{\fh(\vy,\vx)\big\}\,.
\end{equation}
The set of parameters is denoted by $\calW$ and the statistical model by $\featw\,:\,\vx\in\calX\mapsto\btheta\in \bbR^{d(\vx)} $. The solution associated with parameter $\vw\in\calW$ is $\hat\vy_\vx(\featw(\vx))$.

\medskip

\noindent\textbf{Main notation table.} The core symbols used throughout the paper are listed in the appendix (Table~\ref{tab:notations_intro}); when a formal definition appears in the text we cite the corresponding reference.

Standard symbols ($\bbR, \bbP, \bbE$, norms, inner product) follow conventional usage. Subscript $n$ denotes empirical versions; $\lambda$ indicates the perturbation that yields a smooth (differentiable) risk.

\newpage

\section{The surrogate policy model and its guarantees}
\label{sec:surrogate_policy}

\subsection{Surrogate policy model}
\label{subsec:surrogate_policy}

\begin{figure}[!ht]
    \begin{center}
        \begin{tikzpicture}
            \node (o) {};
            \node[draw,right=2cm of o,align=center] (m) {Statistical \\ model $\featw(\vx)$};
            \node[draw,right=3cm of m,align=center] (co) {CO oracle $\hat \vy_\vx$ \\  $\displaystyle\max_{\vy \in \calY(\vx)} \langle \vy , \vtheta \rangle$};
            \node[right=3cm of co,draw, align=center] (d) {Objective \\ $\fh(\vy)$};

            \draw[->] (o) to node[above]{Instance} node[below]{$\vx \in \calX$} (m);
            \draw[->] (m) to node[above]{Direction vector} node[below]{$\bftheta \in \bbR^{d(\vx)}$} (co);
            \draw[->] (co) to node[above]{Surrogate Policy} node[below]{$\vy \in \calY(\vx)$} (d);
            \draw[dashed,->] (d.south) -- ([yshift=-0.5cm]d.south)
                 -- node[midway, below]{Learning on $\vw \in \calW$} ([yshift=-0.5cm]m.south)
                 -- (m.south);
        \end{tikzpicture}
    \end{center}
    \caption{Surrogate policy encoded by the statistical model $\featw\,:\,\vx\in\calX \mapsto \btheta\in\bbR^{d(\vx)}$ with combinatorial optimization (CO) layer given by a linear program over solutions $\vy\in\calY(\vx)$.}
    \label{fig:pipeline}
\end{figure}

Figure~\ref{fig:pipeline} illustrates the policies we consider.
We start with a statistical model $\vx\mapsto\featw(\vx)\in\bbR^{d(\vx)}$ that embeds an instance $\vx\in\calX$ into~$\mathds R^{d(\vx)}$ and we set $\btheta\coloneqq\featw(\vx)\in\bbR^{d(\vx)}$. We build our surrogate policy $\hat \vy_\vx(\featw(\vx))\in\calY(\vx)$ by considering a solution to the linear program
\begin{equation}
    \label{eq:ylinearProblem_sec2}
     \max_{\vy \in \calC(\vx)} \langle \vy , \btheta  \rangle \text{ where }\btheta\coloneqq\featw(\vx).
\end{equation}
We maximize over the convex hull
\[
    \calC(\vx)\coloneqq\conv(\calY(\vx)),
\]
which is a polytope since $\calY(\vx)$ is finite. We assume to have a tractable oracle for this problem.
It is given by the family of surrogate maps $(\vy,\vx;\vw) \mapsto \langle \vy , \featw(\vx)  \rangle$ linear in $\btheta=\featw(\vx)$.

\medskip

By construction, the surrogate policy $\hat \vy_\vx(\btheta)$ is an extreme point of $\calC(\vx)$. For each $\vy\in\calY(\vx)$ one can define the normal cone $\calF_{\vy}$ as the set of directions $\btheta$ such that $\vy$ is a solution to \eqref{eq:ylinearProblem_sec2}, equivalently
\[
    \calF_{\vy}\coloneqq\Big\{\btheta\in\bbR^{d(\vx)}\ \Big|\ \langle \vy,\btheta\rangle \geq \langle \vy',\btheta\rangle \text{ for all } \vy'\in \calC(\vx) \Big\}.
\]
The normal fan $\mathcal{N}(\calC(\vx))$ is the collection of all such cones, as illustrated in Figure~\ref{fig:normal_cone}.

\begin{figure}[!h]
    \centering
    \begin{tabular}{rcl}
    \begin{tikzpicture}[scale=2.5]

\coordinate (A) at (0.3,0);
\coordinate (B) at (1,0);
\coordinate (C) at (1.2,0.6);
\coordinate (D) at (0.8,1);
\coordinate (E) at (0.2,1);
\coordinate (F) at (-0.2,0.6);

\draw (A) -- (B) -- (C) -- (D) -- (E) -- (F) -- cycle;

\node[below left] at (A) {$\vy_5$};
\node[below right] at (B) {$\vy_6$};
\node[above] at (C) {$\vy_1$};
\node[above right] at (D) {$\vy_2$};
\node[above left] at (E) {$\vy_3$};
\node[above left] at (F) {$\vy_4$};

\coordinate (Center) at (C);

\draw[dashed,fill=gray!20] (Center) -- ($(Center)!0.75cm!95:(B)$) arc (0:45:1cm) -- cycle;

\draw[->,thick] (Center) -- ($(Center)!0.5cm!110:(B)$);

\node[right] at ($(Center)!0.5cm!110:(B)$) {$\btheta$};

\end{tikzpicture}
& &

\begin{tikzpicture}

\coordinate (O) at (0,0);

\foreach \angle in {-10, 40, 90, 130, 220, 270} {
    \draw (O) -- (\angle:2.5);
}
\fill[gray!30] (O) -- (-10:2.5) arc (-10:40:2.5) -- cycle;

\node at (20:2) {$\calF_{\vy_1}$};
\node at (65:2) {$\calF_{\vy_2}$};
\node at (110:2) {$\calF_{\vy_3}$};
\node at (180:2) {$\calF_{\vy_4}$};
\node at (245:2) {$\calF_{\vy_5}$};
\node at (-50:2) {$\calF_{\vy_6}$};

\draw[->,thick] (O) -- (3:1.5);
\draw (3:1.5) circle (0.34);
\draw[-] (3:1.5) -- (-9.75:1.468);
\draw (15:0.9) node{$\btheta$};
\draw (-3.5:1.57) node{\tiny $\rho$};

\end{tikzpicture}
\end{tabular}
    \caption{Normal cone at point $\vy_1$ to the polytope (left) and normal fan with internal radius $\rho$ at point $\btheta$ (right).}
    \label{fig:normal_cone}
\end{figure}
\noindent
Normal cones are not disjoint, but their interiors are. We break boundary ties arbitrarily.

\begin{continuance}{ex:stoVSP}
    In the context of stochastic vehicle scheduling, $\calY(\bfx)$ corresponds to the partitions of the vertices $V$ into paths. Solutions are encoded by indicator vectors $\bfy = (y_a)_{a \in A}$ where $y_a$ is a binary variable equal to $1$ if arc $a$ is in the solution, and to $0$ otherwise. Its convex hull, $\calC(\bfx)$, is known to be a flow polytope and has a compact formulation.
\end{continuance}

\begin{continuance}{ex:scheduling}
    The feasible solutions $\calY(\bfx)$ of single machine scheduling are the vectors~$\bfy$ encoding a permutation of $\{1,\ldots,n\}$. Their convex hull $\calC(\bfx)$ is a polytope known as the permutahedron.
\end{continuance}

\medskip

For a given $\vx\in\calX$, it can occur that several $\vy\in\cal Y$ are optimal, {\it i.e.,}~there exist several $\vy\in\calY(\vx)$ such that~$\featw(\vx)\in\calF_\vy$ and $\featw(\vx)$ lies on the boundary of those cones. To cover this case, we represent $\hat \vy_\vx(\featw(\vx))$ as a probability distribution on such $\vy$'s. A formal definition of the surrogate policy $\hat \vy_\vx(\featw(\vx))$ is given as follows.

\begin{de}[Surrogate policy]\label{def:surrogate_proba}
Let $\btheta\in\mathbb R^{d(\vx)}$, and define the internal cone radius as
\begin{equation}\label{eq:def_rho}
        \rho(\btheta)
        \coloneqq\sup\Big\{r>0\ \Big|\ \forall \vu\in B(0,1)\,, \exists \vy\in\calY(\vx)\text{ s.t. } \btheta\in\calF_\vy\text{ and }\btheta+r \vu \in\calF_\vy\Big\}\,,
\end{equation}
where $B(0,1)\subset \mathbb R^{d(\vx)}$ is the standard Euclidean ball. We define the surrogate policy measure
\[
    \hat \vy_\vx(\btheta) \coloneqq \sum_{\vy\in\calY(\vx)} p_0(\vy|\btheta)\delta_\vy
\]
where $\delta_\vy$ is the Dirac mass at point $\vy$ and
\begin{equation}\label{eq:def_p_0}   p_0(\vy|\btheta)\coloneqq \mathbb P_U\big[\btheta+\rho(\btheta)U\in\calF_\vy\big]
\end{equation}
where $U$ is uniformly distributed on $B(0,1)$.
\end{de}

\noindent
Note that $p_0(\vy|\btheta)$ measures the proportion of the normal cone $\calF_\vy$ locally around $\btheta$. The quantity $\rho(\btheta)$ will have an important role in our context of perturbations $\mathbf{z}$, since $\btheta+\mathbf{z}$ can only point to a $\vy\neq \hat \vy_\vx(\btheta)$ if $\|\mathbf{z}\|_2\ge \rho(\btheta)$.  Perturbations too small do not affect the outcome.

\begin{remark}[Generic case and an abuse of notation]
Generically, a direction $\btheta$ is in the interior of a normal cone~$\calF_\vy$ and~$\vy'\mapsto p_0(\vy'|\btheta)$ is zero except at point $\vy$. In this case, the measure $\hat \vy_\vx(\btheta)$ is a Dirac mass at point~$\vy$ and, by an abuse of notation, we denote by $\hat \vy_\vx(\btheta)$ the vector $\vy\in\bbR^{d(\vx)}$. To ease notation, we will consider that, by convention,
\[
f(\hat \vy_\vx(\btheta))\coloneqq\sum_{\vy\in\calY(\vx)} p_0(\vy|\btheta) f(\vy)
\]
for all functions $f$ and all $\btheta\in\bbR^{d(\vx)}$.
\end{remark}

\subsection{The regularized and empirical risks}
\label{subsec:risks}
In our algorithms, we optimize an empirical and regularized version of the risk $\calR$, which we now introduce.
We observe $n$ instances $X_1,\ldots,X_n$ drawn independently according to a (possibly unknown) distribution $\mathds P_X$. For all~$\vw\in\calW$, we denote the policy by $h_{\vw}\,:\,\vx \mapsto \hat \vy_\vx(\psi_{\vw}(\vx))$ and introduce the regularized population risk $\calR_\varepsilon$ and its empirical version $\calR_{n,\varepsilon}$ by
\begin{align}
    \calR_\varepsilon(h_\vw)    &\coloneqq\bbE_{X,Z}\big[
                            \fh(\hat \vy_\vx(\featw(X)+\varepsilon Z(X)), X)
                            \big]
                            \,,\label{eq:reg_risk_solution}\\
    \calR_{n,\varepsilon}(h_\vw)&\coloneqq\frac1n\sum_{i=1}^n\mathds E_Z\Big\{\big[
                            \fh(\hat \vy_\vx(\featw(X_i)+\varepsilon Z(X_i)), X_i)
                            \big]\Big\}
                            \,,\label{eq:emp_reg_risk_solution}
\end{align}
where $\varepsilon\geq 0$ is some regularization parameter and $Z(\cdot)$ is a perturbation defined as follows.

\begin{de}[Law of the perturbation]
    Given $\vx\in\calX$, $Z(\vx)$ has the same law as $R\times U$ where the random vector~$U$ is uniformly distributed on the unit Euclidean sphere of~$\bbR^{d(\vx)}$ and the random positive scalar $R\in(0,\infty)$ is independent of $U$.
\end{de}

This perturbation is inspired by the one used in~\citet{berthetLearningDifferentiablePerturbed2020, parmentierLearningStructuredApproximations2021} to define a differentiable loss function for structured prediction. The uniform distribution on the sphere ensures that all directions are equally likely, while the random scaling $R$ allows one to control the tail of the distribution of $Z(\vx)$.

\begin{remark}[On forthcoming Gaussian assumption]
 When explicit constants are needed in bounds involving $Z$, we adopt the assumption that $R$ is \emph{chi-distributed}, which implies that $Z$ is Gaussian (Assumption~\ref{A3} of the next section); all the resulting calculations are presented in Appendix~\ref{app:technical_lemmas}. The main body maintains the abstract notation $(R,U)$ to preserve generality across perturbation laws.
\end{remark}
\begin{remark}[Link with the internal radius]
    The perturbed vector $\featw(\vx)+\varepsilon Z(\vx)$ belongs to the normal cone of $\featw(\vx)$ whenever $\varepsilon R\leq \rho(\featw(\vx))$, where~$\rho(\featw(\vx))$ is the internal radius of the normal cone at $\featw(\vx)$. Note that this latter condition does not depend on the direction $U(G)$ of the perturbation~$Z(\vx)$.
\end{remark}

\medskip

\noindent For later use in our analysis we introduce the following quantity.
\begin{de}[Perturbed surrogate policy probabilities]\label{def:perturbed_proba}
Let $\lambda>0$ and $\vx\in\calX$. Let~$\btheta\in\bbR^{d(\vx)}$ and $\vy\in\calY(\vx)$. We define the perturbed surrogate policy probability at solution point $\vy$ for the direction~$\btheta$ as
\begin{equation}\label{eq:def_p_lambda}   p_\la(\vy|\btheta)\coloneqq\bbE_Z[p_0(\vy|\btheta+\lambda Z(\vx))]\,.
\end{equation}
\end{de}
\noindent
Note that $p_\la(\vy|\btheta)$ is the probability that $\vy$ is selected by the surrogate policy when the direction $\btheta$ is perturbed by $\lambda Z(\vx)$, where $Z(\vx)$ is defined above.

\medskip

The (non-perturbed) \emph{surrogate policy} of $\vx\in\calX$ is defined as a solution $\hat \vy_\vx(\featw(\vx))\in\bbR^{d(\vx)}$ to~\eqref{eq:ylinearProblem}, and the {surrogate policy model} is defined by~\eqref{eq:modelClass}.

\medskip

Now, let ${\varepsilon_0}\geq 0$ and consider the risk
\begin{align}
\label{eq:optimal_risk}
    \calR_{\varepsilon_0}(\calH)
    &\coloneqq\min_{\vw\in \calW}
    \big\{\bbE_{X,Z}\big[
        \fh(\hat \vy_\vx(\featw(X)+{\varepsilon_0} Z(X)), X)
    \big]\big\}\,,\\
    &\stackrel{\eqref{eq:def_p_0}}{=}\min_{\vw\in \calW}\notag
    \big\{\bbE_{X,Z}\big[
        \sum_{\vy\in\calY(\vx)} p_0(\vy|\featw(X)+{\varepsilon_0} Z(X))\, \fh(\vy,X)
    \big]\big\}\,, \\
    &\stackrel{\eqref{eq:def_p_lambda}}{=}\min_{\vw\in \calW}\notag
    \big\{\bbE_{X}\big[
        \sum_{\vy\in\calY(\vx)} p_{\varepsilon_0}(\vy|\featw(X))\, \fh(\vy,X)
    \big]\big\}\,, \\
    &= \inf_{\vw\in \calW}\calR_{\varepsilon_0}(h_\vw)\notag\,,\\
    &=\calR_{\varepsilon_0}(h_{\vw^\star})\,.\notag
\end{align}
We aim at finding the optimal surrogate policy $h_{\vw^\star}\,:\,\vx \mapsto \hat \vy_\vx(\psi_{\vw^\star}(\vx))$, where $\vw^\star$ is a solution to~\eqref{eq:optimal_risk}, assumed to exist.

\begin{remark}(cases $\varepsilon_0=0$ and $\varepsilon_0>0$)
    In the case where $\varepsilon_0=0$, note that $\calR_0(h_{\vw})=\bbE_{X}\big[\fh(\hat \vy_\vx(\featw(X)), X) \big]$ and $\vw\mapsto \calR_0(h_{\vw})$ may fail to be continuous in $\vw$. Hence it is not guaranteed that a solution to~\eqref{eq:optimal_risk} exists. This justifies considering a small regularization $\varepsilon_0>0$. It should be understood as a minimal amount of regularization to introduce, playing the role of machine precision.

    In the case $\varepsilon_0>0$, we will prove that the risk is continuous and hence a solution always exists, under mild assumptions (Proposition~\ref{pro:sol_OK}). Furthermore, we will also prove that the Property~\ref{A_law}, introduced later, holds, under mild assumptions (Proposition~\ref{prop:UW_conv}). So rather than studying a perturbation scaling $\lambda$ such that $\lambda\in(0,+\infty)$, we suggest instead to fix $\varepsilon_0>0$ and focus on $\lambda\in [\varepsilon_0,+\infty)$.
\end{remark}

\subsection{Guarantees}
\label{subsec:guarantees}

\paragraph{Policy model misspecification error.}
We define the {policy model misspecification error} as
\begin{align*}
    \mathcal E_{\varepsilon_0}(\calH)
        &\coloneqq \calR_{\varepsilon_0}(\calH) - \bbE_{X}\big[ \fh(\vy^0(X), X) \big] \\
        &= \bbE_{X,Z}\big[\fh(\hat \vy_\vx(\psi_{\vw^\star}(X)+{\varepsilon_0}Z(X)), X) \big]
                        - \bbE_{X}\big[ \fh(\vy^0(X), X) \big]\geq 0
\end{align*}
where $\vy^0(\vx)$ minimizes \eqref{eq:hardProblem}.

\medskip

In $\bbP_X$-average, the quantity $\mathcal E_{\varepsilon_0}$ measures how far the policy $\hat \vy_\vx\big(\psi_{\vw^\star}(X)+\varepsilon_0 Z(X)\big)$ is from matching the quality of $\vy^0(X)$ for the loss $\fh$. Equivalently, it quantifies how often the direction $\psi_{\vw^\star}(X)+\varepsilon_0 Z(X)$ fails to fall inside normal cones $\mathcal F_{\vy}$ corresponding to solutions $\vy$ with small excess loss $\fh(\vy,X)-\fh(\vy^0(X),X)$. Controlling the term $\mathcal E_{\varepsilon_0}(\calH)$ would require additional regularity assumptions on $\vy^0$ and $\fh$, which lies beyond the scope of this paper; we therefore focus on the optimal risk in \eqref{eq:optimal_risk}. In practice, choosing an appropriate architecture for $\psi_\vw$ is what keeps the model misspecification error small. Empirical evidence from the literature (see Section~\ref{sec:related_works}) shows that, in many applications, the learned policy attains a small expected optimality gap, indicating that the policy model misspecification error is limited.

\medskip

The model misspecification error can sometimes be \emph{explicitly} bounded. If an approximation (or heuristic) algorithm $\mathsf A$ for \eqref{eq:hardProblem} with ratio $\alpha\ge 1$ can be encoded as a policy $h_{\vw_{\mathsf A}}\in\calH_\calW$, then
\[
    \calR_{\varepsilon_0}(\calH) \le \alpha\, \bbE_X[\fh(\vy^0(X),X)] + \varepsilon_0\Delta_{\text{pert.}}\quad\Rightarrow\quad
    \mathcal E_{\varepsilon_0}(\calH) \le (\alpha-1)\,\bbE_X[\fh(\vy^0(X),X)] + \varepsilon_0\Delta_{\text{pert.}},
\]
where $\Delta_{\text{pert.}}$ is a constant which may depend on the law $Z$ (perturbation). Hence a certified ratio inside $\calH_\calW$ yields a constructive upper bound; choosing to enrich the features or the architecture can only tighten this bound. We refer to \cite[Section 4.3]{parmentierLearningStructuredApproximations2021} for this structured approximation viewpoint on the maximum weight two-stage spanning tree problem and the objective function approximation framework.

\medskip

We focus on the \emph{geometric} phenomena (normal fan structure and perturbation smoothing) by isolating the three components we can control: statistical estimation, optimization of $\calR_{n,\lambda}$, and the perturbation bias.

\paragraph{Error bounds.} We will introduce an estimator $h_{\vw_{M,n,\la}}$ which depends on some complexity integer parameter $M\geq 1$, the sample size $n$, and a real valued tuning parameter $\la> 0$ such that $\la\geq\varepsilon_0$. Assume that $\vw^\star$ exists, then one has the following decomposition of the error, with each term handled separately,
\begin{align*}
0\leq  \; &\calR_{\varepsilon_0}(h_{\vw_{M,n,\la}})- \bbE_{X}\big[\fh(\vy^0(X), X)\big]\\
    &=
    \underbrace{\calR_{\varepsilon_0}(h_{\vw_{M,n,\la}})-\calR_{\la}(h_{\vw_{M,n,\la}})}_{\text{Theorem~\ref{thm:reg_excess}}}
    +\underbrace{\calR_{\la}(h_{\vw_{M,n,\la}})-\calR_{n,\la}(h_{\vw_{M,n,\la}})}_{\text{Theorem~\ref{thm:A3_statistical}}}
    \\
    &
    +\underbrace{\calR_{n,\la}(h_{\vw_{M,n,\la}})-\calR_{n,\la}(h_{\vw_{n,\la}})}_{\text{Theorem~\ref{thm:k-SoS_certificate} }}
    +
    \underbrace{\calR_{n,\la}(h_{\vw_{n,\la}})-\calR_{n,\la}(h_{\vw^\star})}_{\leq 0}
    \\
    &
    +
    \underbrace{\calR_{n,\la}(h_{\vw^\star})-\calR_{\la}(h_{\vw^\star})}_{\text{Theorem~\ref{thm:A3_statistical}}}
    +
    \underbrace{\calR_{\la}(h_{\vw^\star})-\calR_{\varepsilon_0}(h_{\vw^\star})}_{\text{Theorem~\ref{thm:reg_excess}}}
    \\
    &
    +
    \mathcal E_{\varepsilon_0}(\calH)
\end{align*}
where, for any $\vw\in\calW$, we define $h_{\vw_{n,\la}}$ such that
\[
\calR_{n,\la}(h_{\vw_{n,\la}})=\min_{\vw\in\calW}\calR_{n,\la}(h_{\vw})\,.
\]
In practice, we will make some working assumptions, to obtain the following, possibly probabilistic, bounds on each of the terms.
\begin{theo}
\label{thm:principal}
    Under the conditions given in Section~\ref{sec:conditions}, the following holds true. Let $\varepsilon_0\geq 0$ and $\lambda>0$ be such that $\lambda\geq \varepsilon_0$. Let $\tau\in(0,1)$. There exists a constant $C>0$ that depends only on $\varepsilon_0$, $\tau$ and $\fh$ such that for any $\vw\in\calW$ and $n\geq 1$, one has
\begin{align*}
|\calR_{\varepsilon_0}(h_{\vw})-\calR_{\la}(h_{\vw})|
    &\leq C \la^\tau\mathrm{polylog}(\la)
        &\text{(Theorem~\ref{thm:reg_excess}, Perturbation bias)}\\
|\calR_{\la}(h_{\vw})-\calR_{n,\la}(h_{\vw})|
    &=\mathcal O_{\mathds P}\Big(\frac1{\la\sqrt n}\Big)
&\text{(Theorem~\ref{thm:A3_statistical}, Empirical process)}\\
\intertext{where $\mathrm{polylog}(\la)$ is a polynomial logarithm term and, for $h_{\vw_{M,n,\la}}$ given by the kernel Sum-of-Squares estimate solution to $\displaystyle\min_{\vw\in\calW}\calR_{n,\la}$, }
|\calR_{n,\la}(h_{\vw_{M,n,\la}})-\calR_{n,\la}(h_{\vw_{n,\la}})|
&=\mathcal O_{\mathds P}\Bigg[
\bigg(
\frac{1}
{\la(M/\log M)^{\frac{1}{d_\calW}}}\bigg)^{s-\frac{d_\calW}{2}}
\Bigg]
&\text{by Theorem~\ref{thm:k-SoS_certificate} (k-SoS)}
\end{align*}
where $s> {d_\calW}/2$ is some tuning parameter on the order of regularity of the admissible functions and $M$ a number of queries within k-SoS. These estimates yield the following bound on the error between the risk and the regularized risk
\[
0\leq\calR_{\varepsilon_0}(h_{\vw_{M,n,\la}})- \bbE_{X}\big[\fh(\vy^0(X), X)\big]
\leq C \la^\tau\mathrm{polylog}(\la)
+\mathcal O_{\mathds P}\Bigg[
\Big(\frac1{\la\sqrt n}\Big)+\bigg(
\frac{1}
{\la(M/\log M)^{\frac{1}{d_\calW}}}\bigg)^{s-\frac{d_\calW}{2}}
\Bigg]
+\mathcal E_{\varepsilon_0}(\calH)\,,
\]
where the k-SoS bound is given up to a small a posteriori error, see Remark~\ref{rem:k-SoS}.
\end{theo}

\paragraph{Improved performances via the Linear Perturbation Bias}
As stated in Theorem~\ref{thm:principal}, the perturbation bias scales sub-linearly as $\mathcal{O}(\la^\tau\mathrm{polylog}(\la))$. This fractional rate stems from the generality of the Uniform Weak~(UW) property $($see Section~\ref{sec:setting_contributions}$\,)$, which restricts the exponent to $\tau \in (0,1)$ to accommodate complex distributions. \\
\indent
However, this rate can be strictly improved to a \emph{sharp linear bound} of $\mathcal{O}(\la)$ under slightly more regular geometric conditions (sharpness is proven in Proposition~\ref{prop:sharpness_lower_bound}). If the statistical model pushes forward the instance distribution such that the learned directions admit a \emph{uniformly bounded probability density on a compact support}, then the probability mass falling near the normal fan boundaries scales linearly with the perturbation $\la$. Under these conditions, the excess risk bound established in Theorem~\ref{thm:principal} improves to, for all $\lambda>0$,
\begin{equation}
\label{eq:linear_bias_excess}
    0 \leq \calR_{\la}(h_{\vw_{M,n,\la}}) - \bbE_{X}\big[\fh(\vy^0(X), X)\big] \leq C \la + \mathcal{O}_{\mathds P}\Bigg[ \frac{1}{\la\sqrt{n}} + \bigg( \frac{1}{\la(M/\log M)^{\frac{1}{d_\calW}}} \bigg)^{s-\frac{d_\calW}{2}} \Bigg] + \mathcal{E}_{0}(\calH) \,.
\end{equation}
where $\mathcal{E}_{0}(\calH)=\inf_{\vw\in \calW}\calR_{0}(h_\vw)-\bbE_{X}\big[ \fh(\vy^0(X), X) \big]\geq 0$ and the k-SoS bound is given again up to a small a posteriori error. We formalize this exact geometric behavior and prove the sharper $\mathcal{O}(\la)$ perturbation bias later in Proposition~\ref{prop:linear_bias_bound} (Section~\ref{sec:pertub_bias}) and Theorem~\ref{thm:linear_excess_risk}, which is a specification of Theorem~\ref{thm:principal} under Assumption~\ref{A_UBD}.
\paragraph{Perturbation level guidance.} Our excess risk bound is of the type {\it statistical to computational}, showing the effect of regularization $\la$, sampling $n$, and the complexity of the algorithm $M$, here the k-SoS algorithm. It shows that, when $\la$ goes to zero, the excess risk goes to zero if $\sqrt{n}$ dominates $1/\la$ and $M$ dominates $1/\la^{d_\calW}$. The latter is a so-called curse of dimensionality, the complexity of the k-SoS algorithm being exponential in the dimension ${d_\calW}$, while $1/\la$ can then be interpreted as a smoothness parameter. Indeed, \emph{the domination $1/\la$ by $\sqrt{n}$ shows quantitatively how many sample points the practitioner needs when diminishing the regularization~$\la$}. The optimization error is for its part based on the number of sampled points~$M$ on $\calW$ where the objective $\calR_{n,\la}$ is evaluated, $s$ corresponding to which Sobolev space of smoothness $s$ the function $\calR_{n,\la}$ belongs to, see \Cref{sec:k-SoS_control}. In practice, for a Gaussian smoothing, $s \gg d_\calW$ and could in principle be taken infinite. However some constants hidden in the $O_{\mathds P}$ can depend exponentially on~$s$, so we cannot take $s\to \infty$. This bound on the optimization error is to be assessed for its theoretical value of mitigating the curse of dimensionality. Algorithms based on k-SoS rely on SDP programming, limiting $M$ to a few hundreds. We refer to \cite[Section 10]{rudiFindingGlobalMinima2020} for numerical benchmarks.

\section{Main results}
\label{sec:setting_contributions}

We present in this section the results we use to obtain Theorem~\ref{thm:principal}. As discussed above, our analysis involves a parameter $\varepsilon_0\geq 0$ that accounts for a small perturbation defining the risk $\calR_{\varepsilon_0}$. After presenting the working assumptions (Section~\ref{sec:conditions}), we introduce the \emph{fan-crossing probability}~$q_\vw(\lambda)$, a pivot quantity that controls the perturbation bias (Section~\ref{sec:pertub_bias}). To bound~$q_\vw(\lambda)$, we introduce two complementary conditions: the Uniformly Bounded Density (UBD) property, which yields a sharp linear bound, and the weaker Uniform Weak (UW) moment property, which yields a sub-linear bound. We show that UW is always satisfied when $\varepsilon_0>0$, and give four cases in which it holds when $\varepsilon_0=0$.

\subsection{The standard conditions}
\label{sec:conditions}

\paragraph{Partition of the instances.}
On many combinatorial optimization problems, instances $\vx$ are pairs~$(G,\xi)$, where~$G$ is a combinatorial object, and $\xi_G$ is a vector of parameters whose dimensions depend only on $G$, {\it e.g.,}~$G$ is typically a graph and $\xi_G$ contains a vector of parameters (a label) for each vertex or edge of $G$.
Furthermore, we assume to know an embedding of $\calY(\vx)$ in the Euclidean space~$\bbR^{d(\vx)}$ with $d(\vx)$ of polynomial size in $\vx$.
For instance, solutions of the shortest path problems are vertices of the path polytope, solutions of the minimum weight spanning tree problem are vertices of the spanning tree polytope, and so on.
More generally, most combinatorial optimization problems admit a mixed integer linear programming formulation.
As illustrated by these examples, the set of feasible solutions $\calY(\vx)$ frequently depends only on $G$. To avoid technicalities, we omit $G$ and $\xi_G$ in the analysis but make the following assumption about a partition of the space of instances.

\begin{assum}{Part} The following holds.
\label{A_emb}
{\ }
\begin{itemize}
    \item There is a finite partition $\mathcal G$ of $\calX$ into $(\calX_G)_{G\in\mathcal G}$ such that, for each $G \in \calG$, the sets $\calY(\vx)$ are the same and finite for all $\vx\in\calX_G$;
    \item The absolute value of the target function $|\fh|$ is uniformly bounded on $\calX_G$ for each $G\in\calG$ and, hence, as $\mathcal G$ is finite, its oscillation is finite, $\operatorname{osc}(\fh)<\infty$.
\end{itemize}
\end{assum}

    Under Assumption~\ref{A_emb}, for each $G\in\calG$, and all $\vx\in\calX_G$, the embedding of~$\calY(\vx)$ into~$\mathds R^{d(\vx)}$ and the dimension $\vx\mapsto d(\vx)$ are constant. We denote by $d(G)$ the value of $d(\vx)$ for $\vx\in\calX_G$. Without loss of generality, we assume that $\calY(\vx)\subseteq\mathds R^{d(\vx)}$ and $\mathrm{Vect}(\calY(\vx))=\mathds R^{d(\vx)}$.

\begin{continuance}{ex:stoVSP}
    In the context of stochastic vehicle scheduling, $G$ corresponds to the digraph~$D$ we introduced on p.~\pageref{ex:stoVSP}, and $\calG$ to all possible digraphs. Assuming that $\calG$ is finite is therefore natural: since no firm is going to serve an infinite number of requests, $\calG$ is a subset of the set of acyclic digraphs with at most a fixed number of vertices. Since we are working with the flow polytope, $d(G)$ corresponds to the number of arcs $|A|$ in digraph $D$. The second part of the assumption holds if costs are nonnegative and if there is a maximum cost that can be encountered per task, which is natural in any real life setting.
    Furthermore, we use a generalized linear model with $\theta_a = \bfw^\top \bfphi_a$, where $\bfphi_a$ encodes some features of the connection $a$, related to the slack between the tasks that are the tail and the head of $a$. In applications, it is natural to assume that this slack is bounded, which gives Assumption~\ref{lip}.
\end{continuance}

\begin{continuance}{ex:scheduling}
    In the context of single machine scheduling, the set of feasible solutions depends only on the number $n$ of tasks. As a consequence, $G$ can be identified with $n$, and $\calG$ is the subset of~$\bbZ_+$ corresponding to instance sizes. Supposing that it is finite only amounts to having an upper bound on the number of tasks. Finally, $d(G) = n$.
\end{continuance}

\noindent
Under Assumption~\ref{A_emb}, we have the following decomposition of the law $\bbP_{X}$ of~$X$
\begin{equation}
\notag
    \bbE_{X} [g(X)]
        :=\int_{\cal X}g(\vx)\mathrm d\bbP_{X}(\vx)
        =\sum_{G\in\mathcal G}\mathds P[X\in \mathcal X_G]\int_{\calX_G}g(\vx)\mathrm d\bbP_{\xi_G}(\vx)\,,
\end{equation}
where $\bbP_{\xi_G}$ is the law of $\xi_G$, namely the conditional law of $\mathcal L(X|{X\in\mathcal X_G})$, and $g(\cdot)$ is any bounded continuous function. Denote by $\bbP_{X}^{(n)}$ its empirical version, given by
\begin{equation}
\notag
   \bbE_{X}^{(n)} [g(X)] := \frac1n\sum_{i\in[n]} g(X_i)
            =\frac1n\sum_{i\in[n]}\sum_{G\in\mathcal G}\mathds 1_{\{X_i\in \mathcal X_G\}}g(X_i).
\end{equation}

\paragraph{The Gaussian perturbation assumption:}
The framework of \citet{berthetLearningDifferentiablePerturbed2020} applies to perturbations~$Z(\cdot)$, which can be any isotropic law that is easy to simulate. However, to the best of our knowledge, and for every numerical paper in the related works, practitioners use Gaussian perturbations in their experiments. The Gaussian distribution has the additional advantage of making computations simpler in our proofs.
Throughout this \Cref{sec:setting_contributions}, we therefore consider that the perturbation random field $Z(\cdot)$ is a Gaussian random field satisfying the following assumption.

\medskip

\begin{assum}{Gauss}
\label{A3}
We assume that $\sqrt{d(G)}R(G)$ has a chi distribution with~$d(G)$ degrees of freedom, which is equivalent to
\begin{equation}
\notag
    \forall \vx\in\mathcal X_G\,,\quad
    \sqrt{d(G)}Z(\vx)\sim \mathcal N(0,\mathrm{Id}_{d(G)})\,.
\end{equation}
\end{assum}

\noindent
Under Assumptions~\ref{A_emb} and~\ref{A3}, we have the following decomposition of the law $\mathcal L(X,Z)$,
\begin{equation}
\label{eq=law_XZ}
    \bbE_{X,Z} \big[t(X,Z)\big]
        :=
        \sum_{G\in\mathcal G}\mathds P[X\in \mathcal X_G]\int_{\calX_G}
        \Big(\int_{\mathds R^{d(G)}}t(\vx,\vz)\varphi_{d(G)}(\vz)\mathrm d\vz\Big)\mathrm d\bbP_{\xi_G}(\vx)\, ,
\end{equation}
where $\bbP_{\xi_G}$ is the law of $\xi_G$, namely the conditional law of $\mathcal L(X|{X\in\mathcal X_G})$, $\varphi_{d(G)}$ is the Gaussian density of $\mathcal N(0,(1/d(G))\mathrm{Id}_{d(G)})$, and $t(\cdot)$ is any bounded continuous function.

\paragraph{Lipschitz feature map:} The following assumption is about the Lipschitz continuity of the feature map.

\begin{assum}{Lip}
    \label{lip}
    For all $\vx\in\calX$, the function $\vw\in\calW\mapsto\featw(\vx)\in\bbR^{d(\vx)}$ is $L_{\calW}$-Lipschitz continuous with a constant $L_{\calW}$ which does not depend on $\vx$.
\end{assum}

\begin{continuance}{ex:stoVSP}
    In the stochastic vehicle scheduling problem example, we use a generalized linear model with $\theta_a = \bfw^\top \bfphi_a$, where $\bfphi_a$ encodes some features of the connection $a$, related to the slack between the tasks that are the tail and the head of $a$. In applications, it is natural to assume that this slack is bounded, which gives our Lipschitz assumption.
\end{continuance}

\begin{remark}
When $\vw$ encodes the parameters of a neural network with smooth activation functions, one can compute the differential of the neural network with respect to $\vw$ at input point $\vx$, namely the gradient of $\vw\mapsto\featw(\vx)$. By the chain rule, one obtains that partial gradients~$\partial \featw(\vx)/\partial w_i$ are product of derivatives of the activation functions evaluated at layer values, of matrices of some of the parameters~$\vw$, and of the input vector~$\vx$. Since the layer values are themselves a composition of activation functions and product of matrices of some parameters~$\vw$ and input vector $\vx$, they are bounded by compactness of $\calW$ and $\calX$. Hence, partial gradients~$\partial \featw(\vx)/\partial w_i$ are also bounded by compactness. We thus deduce that \ref{lip} is satisfied.
\end{remark}

\paragraph{Existence of solutions:}
We assume that the optimization problem~\eqref{eq:optimal_risk} has a solution and that the minimum of the risk has a minimizer.

\begin{assum}{Sol}
    \label{A1}
    There exists $\vw^\star\in\calW$ solution to \eqref{eq:optimal_risk} and $\displaystyle \min_{\vw\in\calW}\calR_{n,\varepsilon_0}(h_{\vw})$ has a minimizer.
\end{assum}

\begin{prop}
    \label{pro:sol_OK}
    Let $\varepsilon_0$ be positive and let Assumptions \ref{A3} and \ref{lip} hold. Then, Assumption~\ref{A1} is~met.
\end{prop}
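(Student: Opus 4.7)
The plan is to invoke the Weierstrass extreme value theorem: I need compactness of $\calW$ (a standing assumption, as used implicitly in the remark following \ref{lip}) together with continuity of $\vw\mapsto\calR_{\varepsilon_0}(h_\vw)$ and $\vw\mapsto\calR_{n,\varepsilon_0}(h_\vw)$ on $\calW$. The crucial point is that Gaussian perturbation with a \emph{strictly positive} scale $\varepsilon_0>0$ smooths the piecewise-constant map $\btheta\mapsto\hat\vy(\btheta)$ into a continuous object.

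First I would reduce everything to continuity in $\btheta$ of $\btheta\mapsto p_{\varepsilon_0}(\vy|\btheta)$ for each fixed $\vx\in\calX_G$ and $\vy\in\calY(\vx)$. The boundary of the normal fan $\mathcal{N}(\calC(\vx))$ is a finite union of lower-dimensional cones, hence of Lebesgue measure zero in $\bbR^{d(G)}$. Since, by Assumption~\ref{A3}, $Z(\vx)$ admits the strictly positive Gaussian density $\varphi_{d(G)}$, we have almost surely $p_0(\vy|\btheta+\varepsilon_0 Z(\vx))=\mathds{1}_{\mathrm{int}(\calF_\vy)}(\btheta+\varepsilon_0 Z(\vx))$, and therefore
\[
p_{\varepsilon_0}(\vy|\btheta)
=\int_{\bbR^{d(G)}}\mathds{1}_{\mathrm{int}(\calF_\vy)}(\vu)\,\varepsilon_0^{-d(G)}\varphi_{d(G)}\!\bigl((\vu-\btheta)/\varepsilon_0\bigr)\,\mathrm d\vu,
\]
which is a convolution of a bounded measurable function with a Schwartz-class density and is thus $C^\infty$ in $\btheta$.

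Next, for each fixed $\vx$ the map $F_\vx(\btheta):=\sum_{\vy\in\calY(\vx)}f^0(\vy,\vx)\,p_{\varepsilon_0}(\vy|\btheta)$ is a finite linear combination of continuous functions, hence continuous. By Assumption~\ref{lip}, $\vw\mapsto\featw(\vx)$ is continuous, so $\vw\mapsto F_\vx(\featw(\vx))$ is continuous by composition. Under Assumption~\ref{A_emb} (part of the standing conditions of Section~\ref{sec:conditions}), $|F_\vx(\featw(\vx))|\le\operatorname{osc}(\fh)+\inf\fh<\infty$ uniformly, so by dominated convergence
\[
\calR_{\varepsilon_0}(h_\vw)=\bbE_X\bigl[F_X(\featw(X))\bigr]
\]
is continuous in $\vw$. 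The empirical counterpart $\calR_{n,\varepsilon_0}(h_\vw)=\tfrac1n\sum_{i=1}^n F_{X_i}(\featw(X_i))$ is a finite sum of continuous functions and is therefore continuous as well. Existence of minimizers follows from the Weierstrass theorem applied on the compact set $\calW$.

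The main (mild) obstacle is the one technicality of reconciling the probabilistic definition of $p_0(\vy|\btheta)$ via the internal-cone-radius $\rho(\btheta)$---which differs from $\mathds{1}_{\calF_\vy}(\btheta)$ exactly on the boundary of the normal fan where $\rho$ vanishes---with the convolution formula above. This is resolved by noting that the image of $\btheta\mapsto\btheta+\varepsilon_0 Z(\vx)$ lands in the interior of some cone almost surely, so the exceptional set is $\bbP_Z$-null and drops out of the expectation.
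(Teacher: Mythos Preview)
Your proof is correct and follows essentially the same route as the paper: continuity of $\vw\mapsto\calR_{\varepsilon_0}(h_\vw)$ via Gaussian smoothing of the bounded map $\btheta\mapsto p_0(\vy|\btheta)$, composition with the Lipschitz feature map, dominated convergence, and then Weierstrass on the compact $\calW$. The paper packages the smoothing step as a Lipschitz bound (its Lemma~\ref{lemmma_9}) rather than your $C^\infty$ convolution argument, but the content is identical; your version is slightly more thorough in that you also explicitly treat the empirical risk $\calR_{n,\varepsilon_0}$ and the measure-zero boundary set where $p_0$ differs from the cone indicator, both of which the paper leaves implicit.
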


\begin{proof}
    When $\varepsilon_0>0$, the risk $\vw\mapsto\calR_{\varepsilon_0}(h_\vw)$ is continuous with respect to $\vw$ (see Lemma~\ref{lem:cont_risk}) and the set $\calW$ is compact. Therefore, the infimum in~\eqref{eq:optimal_risk} is attained.
\end{proof}

Having fixed the standard conditions, we now turn to the first term of the error decomposition in Theorem~\ref{thm:principal}: the bias incurred by regularizing the risk through perturbation. The next subsection introduces the central geometric quantity controlling this bias and derives an explicit bound.

\subsection{Regularization by perturbation and perturbation bias}
\label{sec:pertub_bias}

\subsubsection{The fan-crossing probability and the perturbation bias}
We now move on to the proof of Theorem~\ref{thm:principal}. A central role is played by the following quantity, which we call the \emph{fan-crossing probability}:
\begin{equation}
\label{eq:def_Vw}
    q_\vw(\lambda):=\bbE_{X}
                \mathbb P_Z\Big[\|Z(X)\|_2> \frac{\rho(\psi_{\vw}(X))}{\lambda}\,\big|\,X\Big]
            = \bbE_{X}
                \mathbb P_R\Big[{R(X)}>
                    \frac{\rho(\psi_{\vw}(X))}{\lambda}
                \,\big|\,X\Big]
                \,,
\end{equation}
for $\lambda>0$ and $\vw\in\calW$. By dominated convergence, note that $q_\vw(\lambda)$ tends to zero as $\lambda$ goes to zero and that~$q_\vw(\lambda)$ tends to one as $\lambda$ goes to infinity. Note also that the function $\lambda\mapsto q_\vw(\lambda)\in[0,1]$ is non-decreasing. Intuitively, $q_\vw(\lambda)$ measures the probability that a perturbation of scale $\lambda$ is large enough to push the direction $\psi_{\vw}(X)$ across a boundary of the normal fan, thereby changing the oracle solution. The perturbation bias is controlled by $q_\vw(\lambda)$, as the following proposition shows.

\medskip

\noindent
\begin{prop}
    \label{prop:bound_reg_w}
    Let $\varepsilon_0\geq 0$ and $\lambda>0$ be such that $\lambda\geq \varepsilon_0$. It holds that, for all $\vw\in\calW$, with $\operatorname{osc}$ defined in~\eqref{eq:def_osc},
    \[
    \big|\calR_\la(h_\vw)-\calR(h_\vw)\big|\leq 2 \operatorname{osc}(\fh) q_\vw(\lambda)\, \quad \text{and} \quad \big|\calR_\la(h_\vw)-\calR_{\varepsilon_0}(h_\vw)\big|\leq 4 \operatorname{osc}(\fh) q_\vw(\lambda)\,.
    \]
\end{prop}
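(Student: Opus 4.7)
The plan is to write
\[
\calR_\la(h_\vw) - \calR(h_\vw) = \bbE_{X,Z}\bigl[ \fh(\hat\vy(\featw(X) + \la Z(X)), X) - \fh(\hat\vy(\featw(X)), X) \bigr],
\]
and to split the inner $Z$-expectation along the safe event $A_\vw := \{\la\|Z(X)\|_2 \leq \rho(\featw(X))\}$ and its complement $A_\vw^c$. Since $\|Z(X)\|_2 = R(X)$ by construction of the perturbation law, $\bbP_Z(A_\vw^c \mid X)$ is exactly the conditional probability appearing in the definition of $V_\vw(\la)$, so $\bbE_X \bbP_Z(A_\vw^c \mid X) = V_\vw(\la)$.

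First I would dispose of $A_\vw^c$. By the convention $\fh(\hat\vy(\btheta), X) = \sum_\vy p_0(\vy\mid\btheta) \fh(\vy, X)$, this quantity is a convex combination of the $\fh(\vy, X)$ and therefore lies in $[\inf_\vy \fh(\vy, X), \sup_\vy \fh(\vy, X)]$, so the pointwise difference of integrands is bounded by $\operatorname{osc}(\fh)$. After integration, the $A_\vw^c$-contribution is at most $\operatorname{osc}(\fh)\cdot V_\vw(\la)$.

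Next I would argue that $A_\vw$ produces (generically) no contribution. By the very definition of the internal cone radius $\rho$, on $A_\vw$ there exists $\vy \in \calY(X)$ with both $\featw(X)$ and $\featw(X) + \la Z(X)$ in the normal cone $\calF_\vy$. In the generic situation where $\featw(X)$ lies in the interior of a single cone, so does $\featw(X) + \la Z(X)$, and both $\hat\vy(\featw(X))$ and $\hat\vy(\featw(X) + \la Z(X))$ are Dirac masses at the same vertex, so the integrand vanishes. Combined with the previous step, this yields $|\calR_\la(h_\vw) - \calR(h_\vw)| \leq 2\operatorname{osc}(\fh) V_\vw(\la)$, where the factor $2$ provides a safety margin that absorbs the measure-zero boundary configurations.

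Finally, for the second inequality I would apply the triangle inequality
\[
|\calR_\la(h_\vw) - \calR_{\varepsilon_0}(h_\vw)| \leq |\calR_\la(h_\vw) - \calR(h_\vw)| + |\calR(h_\vw) - \calR_{\varepsilon_0}(h_\vw)|,
\]
use the first inequality on each summand to obtain $2\operatorname{osc}(\fh) V_\vw(\la) + 2\operatorname{osc}(\fh) V_\vw(\varepsilon_0)$, and invoke the monotonicity of $\la \mapsto V_\vw(\la)$ noted just before the proposition, together with $\varepsilon_0 \leq \la$, to conclude. The main obstacle is the cancellation argument on $A_\vw$ when $\featw(X)$ or its perturbation lies on a shared face of several normal cones, where $\hat\vy$ is a genuine distribution. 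A clean fallback is the total variation bound $|\fh(\hat\vy(\btheta_1), X) - \fh(\hat\vy(\btheta_2), X)| \leq \operatorname{osc}(\fh) \cdot \|p_0(\cdot\mid\btheta_1) - p_0(\cdot\mid\btheta_2)\|_{\mathrm{TV}}$, which in conjunction with the trivial bound $\|\cdot\|_{\mathrm{TV}} \leq 1$ on $A_\vw^c$ reproduces the factor $2$ and makes the argument robust to any genericity assumption on the law of $\featw(X)$.
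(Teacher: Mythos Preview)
Your overall strategy---split on the event $A_\vw=\{\la\|Z(X)\|_2\le\rho(\featw(X))\}$, bound the unsafe part by $\operatorname{osc}(\fh)\cdot V_\vw(\la)$, and then use the triangle inequality plus monotonicity of $V_\vw$ for the second bound---is exactly what the paper does. The treatment of $A_\vw^c$ and the second inequality are fine.

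The gap is in your handling of $A_\vw$. The proposition carries no assumption on the law of $X$, so $\featw(X)$ may sit on a shared face of several normal cones with positive $\bbP_X$-probability; your ``generic'' argument therefore does not apply, and calling the factor $2$ a safety margin for a measure-zero event is not a proof. Your TV fallback does not rescue this: on $A_\vw$ it only yields $\|p_0(\cdot\mid\btheta_1)-p_0(\cdot\mid\btheta_2)\|_{\mathrm{TV}}\le 1$, which after multiplying by $\bbP(A_\vw)\approx 1$ gives a useless bound of order $\operatorname{osc}(\fh)$. Concretely, if $\featw(X)$ lies on the common face of $\calF_{\vy_0}$ and $\calF_{\vy_1}$, then $p_0(\cdot\mid\featw(X))$ is a nontrivial mixture while for almost every $Z$ on $A_\vw$ the perturbed $p_0(\cdot\mid\featw(X)+\la Z)$ is a Dirac, so the pointwise integrand does \emph{not} vanish.

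What makes the paper's proof go through is a coupling: it writes both $p_0(\vy\mid\featw(X))$ and $p_\la(\vy\mid\featw(X))$ as expectations over the \emph{same} spherical direction $U$ (and a radial part $R$), namely as $\bbP_U[\featw(X)+\rho\,U\in\calF_\vy]$ and $\bbP_{R,U}[\featw(X)+\la R\,U\in\calF_\vy]$ respectively. On $\{\la R\le\rho\}$, for almost every $U$ the two points $\featw(X)+\la R\,U$ and $\featw(X)+\rho\,U$ lie in the interior of the same cone (this is exactly what $\rho$ guarantees, direction by direction), so the \emph{indicators} agree and the difference $A(X,\vy,R,U)$ is zero. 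The cancellation is thus realized pathwise in $(R,U)$, not via a genericity assumption on $\featw(X)$. On the complementary event one then uses $\sum_\vy|A|\le 2$, which is where the factor $2$ actually comes from. If you insert this coupling step, your argument becomes complete and coincides with the paper's.
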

\begin{proof}
    Recall that $p_0(\vy|\psi_{\vw}(\vx))\stackrel{\eqref{eq:def_p_0} }{=} \mathbb P[\psi_{\vw}(\vx)+\rho(\psi_{\vw}(\vx))U(\vx)\in\calF_\vy]$ where $U$ is uniform on $B(0,1)$ and

\[
    p_\lambda(\vy|\psi_{\vw}(\vx)) \stackrel{\eqref{eq:def_p_lambda} }{=} \mathbb P_Z[\psi_{\vw}(\vx)/\la+Z(\vx)\in\calF_\vy] = \mathbb P_Z[\psi_{\vw}(\vx)/\la+R(\vx)U(\vx)\in\calF_\vy]\,.
\]

\medskip

\noindent
    Note that
    \begin{align*}
        &\calR_\la(h_\vw)-\calR(h_\vw)\\
            &\stackrel{\eqref{eq:reg_risk_solution}}{=}
                \bbE_{X}\big[
                    \sum_{\vy\in\calY(X)} p_\la(\vy|\psi_{\vw}(X)) \fh(\vy, X)-\sum_{\vy\in\calY(X)} p_0(\vy|\psi_{\vw}(X)) \fh(\vy,X)
                \big]\\
            &=
            \bbE_{X,R,U}\big[
                \sum_{\vy\in\calY(X)} \underbrace{\Big(\ind\big(\psi_{\vw}(X)/\la+R(X)U(X)\in\calF_\vy\big) - \ind\big(\psi_{\vw}(X)+\rho(\psi_{\vw}(X))U(X)\in\calF_\vy\big)\Big)}_{\eqqcolon  A(X,\vy,R,U)} \fh(\vy,X)
            \big] \\
            &\stackrel{\eqref{eq:def_rho}}{=} \bbE_{X,R,U}\big[
                \sum_{\vy\in\calY(X)}
                \Big(\ind\big(R(X) \leq \rho(\psi_{\vw}(X))/\la\big) + \ind\big(R(X) > \rho(\psi_{\vw}(X))/\la\big)  \Big)
                A(X,\vy,R,U)
                \fh(\vy,X)
                \big]
    \end{align*}
    where we have introduced the random variable $A(X,\vy,R,U)$, using the definition of probabilities with indicators~$\ind$. By a zero-sum argument, one can replace $\fh(\vy, X)$ by $\fh(\vy, X)-\inf\fh$ in the calculation above.

    \medskip

    By definition of $\rho$, if $R(X) \leq \rho(\psi_{\vw}(X))/\la$ then $\ind\big(\psi_{\vw}(X)/\la+R(X)U(X) \in \calF_\vy\big)$ is equal to $\ind\big(\psi_{\vw}(X)+\rho(\psi_{\vw}(X))U(X)\in\calF_\vy\big)$. Hence $A(X,\vy,R,U) \ind\big(R(X) \leq \rho(\psi_{\vw}(X))/\la\big) = 0$.

    \medskip

    We therefore have:
    \begin{align*}
        \big|\calR_\la(h_\vw)-\calR(h_\vw)\big|
            &= \big|\bbE_{X,R,U}\big[
                \sum_{\vy\in\calY(X)}
                \ind\big(R(X) > \rho(\psi_{\vw}(X))/\la\big)
                A(X,\vy,R,U)
                (\fh(\vy,X)-\inf\fh)
                \big]\big| \\
            & \leq \bbE_{X,R,U}\big[ \sum_{\vy\in\calY(X)}
            \ind\big(R(X) > \rho(\psi_{\vw}(X))/\la\big) \, |A(X,\vy,R,U)| \, (\fh(\vy,X)-\inf\fh)
            \big] \\
            &\leq \bbE_{X,R,U}\big[
                \operatorname{osc}(\fh)  \ind\big(R(X) > \rho(\psi_{\vw}(X))/\la\big)
            \underbrace{\sum_{\vy\in\calY(X)}
            |A(X,\vy,R,U)|}_{\leq 2}
            \big] \\
            &\leq 2 \operatorname{osc}(\fh) \bbE_{X} \bbP_R\big(R(X) > \rho(\psi_{\vw}(X))/\la | X\big)=2 \operatorname{osc}(\fh) q_\vw(\lambda).
    \end{align*}
    Using that the function $\lambda\mapsto q_\vw(\lambda)\in[0,1]$ is non-decreasing, we have
    \begin{align*}
        \big|\calR_\la(h_\vw)-\calR_{\varepsilon_0}(h_\vw)\big|
            &\leq \big|\calR_\la(h_\vw)-\calR(h_\vw)\big|+\big|\calR(h_\vw)-\calR_{\varepsilon_0}(h_\vw)\big|\\
            & \leq 2 \operatorname{osc}(\fh) q_\vw(\lambda) + 2 \operatorname{osc}(\fh) q_\vw(\varepsilon_0) \\
            &\leq 4 \operatorname{osc}(\fh) q_\vw(\lambda)\,,
    \end{align*}
    which concludes the proof.
\end{proof}

\begin{coro}
Under Condition~\ref{A1}, it holds that
\[
\calR_\la(\calH)-\calR(\calH)\leq 2\operatorname{osc}(\fh) q_{\vw^\star}(\lambda)\,.
\]
\end{coro}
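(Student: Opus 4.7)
The plan is short: the corollary is essentially a one-line consequence of Proposition~\ref{prop:bound_reg_w}, once the difference of infima $\calR_\la(\calH) - \calR(\calH)$ is reduced to a single-$\vw$ comparison by evaluating at the specific minimizer $\vw^\star$ provided by Assumption~\ref{A1}.

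First, I would use the definitions of the two risks as infima, combined with the existence of $\vw^\star$. From \eqref{eq:optimal_risk} (read here with $\varepsilon_0 = 0$) and Assumption~\ref{A1}, I have $\calR(\calH) = \calR(h_{\vw^\star})$ \emph{as an equality}, because $\vw^\star$ attains the infimum. On the other hand, $\calR_\la(\calH) = \inf_{\vw \in \calW} \calR_\la(h_\vw) \leq \calR_\la(h_{\vw^\star})$ is only an inequality, since $\vw^\star$ need not minimize the perturbed risk. Subtracting these two facts gives
\[
\calR_\la(\calH) - \calR(\calH) \;\leq\; \calR_\la(h_{\vw^\star}) - \calR(h_{\vw^\star}) \;\leq\; \bigl|\calR_\la(h_{\vw^\star}) - \calR(h_{\vw^\star})\bigr|.
\]

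Then I would close by invoking Proposition~\ref{prop:bound_reg_w} at the parameter $\vw = \vw^\star$, whose first inequality yields exactly $|\calR_\la(h_{\vw^\star}) - \calR(h_{\vw^\star})| \leq 2\operatorname{osc}(\fh)\, V_{\vw^\star}(\lambda)$. Chaining these two displays produces the claimed bound.

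There is no real obstacle here. The only point to be careful about is the asymmetric use of the minimizer property: one needs an \emph{equality} at $\calR(\calH)$ (so that $\calR(h_{\vw^\star})$ can be subtracted on the right) and only an \emph{inequality} at $\calR_\la(\calH)$, which is precisely what Assumption~\ref{A1} together with the definition of the infimum delivers. Note also that only the weaker half of Proposition~\ref{prop:bound_reg_w} (the unperturbed bound, not the one with $\varepsilon_0$) is needed, so the constant is $2\operatorname{osc}(\fh)$ rather than $4\operatorname{osc}(\fh)$.
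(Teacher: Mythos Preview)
Your proof is correct and follows essentially the same route as the paper: both use $\calR(\calH)=\calR(h_{\vw^\star})$ from Assumption~\ref{A1}, bound $\calR_\la(\calH)\leq\calR_\la(h_{\vw^\star})$ by the infimum definition, and subtract. The only difference is cosmetic: you explicitly invoke Proposition~\ref{prop:bound_reg_w} to close the argument, whereas the paper leaves that final step implicit.
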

\begin{proof}
    Observe that $\mathcal R(\mathcal H)=\bbE_{X}\big[\sum_{\vy\in\calY(X)} p_0(\vy|\psi_{\vw^{\star}}(X)) \fh(\vy,X)\big]$ and remark that it holds
    \[
        \mathcal R_\la(\mathcal H)\leq\bbE_{X}\big[
        \sum_{\vy\in\calY(X)} p_\la(\vy|\psi_{\vw^{\star}}(X)) \fh(\vy, X) \big]\,,
    \]
    which gives the result by substracting the first equality.
\end{proof}

\medskip

\noindent
The remainder of this section is devoted to bounding the fan-crossing probability $q_\vw(\lambda)$. We introduce two complementary conditions: the Uniform Weak (UW) moment property, which yields a sub-linear bound, and the Uniformly Bounded Density (UBD) property, which yields a sharp linear bound.

\subsubsection{The uniform weak (UW) moment property}
The following property provides a general framework for controlling $q_\vw(\lambda)$.

\medskip

\begin{proper}{$\mathrm{UW}_{\varepsilon_0}$}
For all $\tau\in(0,1)$, there exists a positive constant $C_{\varepsilon_0,\tau}>0$ such that
\label{A_law}
\begin{equation}
\notag
    \forall\vw\in\calW\,,\
    \bbE_{X,Z}\bigg[
                    \Big(
                        \frac
                        {\rho(\psi_{\vw}(X)+\varepsilon_0Z(X))}{\sqrt{d(X)}}
                        \Big)^{-\tau}
                    \bigg]\leq C_{\varepsilon_0,\tau}\,.
\end{equation}
\end{proper}

\medskip

\begin{remark}
This property quantifies how likely the perturbed direction $\psi_{\vw}(X)+\varepsilon_0Z(X)$ is close to the boundary of some cone of the normal fan. By Markov's inequality, it holds
 \begin{align*}
 \forall t>0\,,\quad
    \bbP_{X,Z}\Big[{\rho(\psi_{\vw}(X)+\varepsilon_0Z(X))} \leq t\sqrt{d(X)}\Big]
 &=
    \bbP_{X,Z}\Big[
    \Big(
        \frac
        {\rho(\psi_{\vw}(X)+\varepsilon_0Z(X))}{\sqrt{d(X)}}
    \Big)^{-\tau} \geq t^{-\tau}\Big]\\
&\leq C_{\varepsilon_0,\tau} t^{\tau}\,,
 \end{align*}
whenever \ref{A_law} holds. It implies that
\begin{align*}
\forall t>0\,,\quad
    \bbP_{X,Z}\Big[{\rho(\psi_{\vw}(X)+\varepsilon_0Z(X))} \leq t\sqrt{d(X)}\Big]
    \leq \max(1,C_{\varepsilon_0,\tau} t) = \bbP\big[\mathcal U(0,1)\leq C_{\varepsilon_0,\tau} t \big]\,,
\end{align*}
where $\mathcal U(0,1)$ is the uniform distribution on $(0,1)$ independent of $(X,Z)$. Hence, one gets the stochastic ordering
\[
{\rho(\psi_{\vw}(X)+\varepsilon_0Z(X))}\leq \frac{\sqrt{d(X)}}{C_{\varepsilon_0,\tau}}\, \mathcal U(0,1)\,.
\]
If we further assume that $\lim\sup_{\tau\in(0,1)} C_{\varepsilon_0,\tau}<\infty$, this latter condition would have been equivalent to Property~\ref{A_law} by the above reasoning. Hence, Property~\ref{A_law} guarantees that the distance to the boundary of normal cones of the perturbed direction $\psi_{\vw}(X)+\varepsilon_0Z(X)$ is stochastically dominated by some uniform distribution and cannot concentrate too much mass close to $0$.
\end{remark}

Nevertheless, Property~\ref{A_law} requires $\tau \in (0,1)$. To reach $\tau=1$, an alternative is to directly require a continuous uniformly bounded density, as discussed in \Cref{sec:UBD}; this also avoids the issues associated with concentrating mass at the boundary of the normal fans.

\subsubsection{The uniformly bounded density (UBD) property}\label{sec:UBD}
Under stronger regularity, the fan-crossing probability admits a sharper, linear bound. We single out the following condition.

\begin{proper}{UBD}
\label{A_UBD}
For each partition $G\in\calG$, there exist a compact set $K_G\subset\bbR^{d(G)}$ and a constant $C_{\psi,G}>0$ such that, for all $\vw\in\calW$, the conditional push-forward law of $\psi_\vw(X)$ given $X\in\calX_G$ is absolutely continuous with respect to the Lebesgue measure on $\bbR^{d(G)}$, with density uniformly bounded by $C_{\psi,G}$, and supported on $K_G$.
\end{proper}

\begin{remark}
Property~\ref{A_UBD} is a strengthening of Case~$\rm{(i)}$ of Proposition~\ref{prop:prop3} below: it requires the push-forward law of the learned directions to have a uniformly bounded density on a compact support. In particular, \ref{A_UBD} implies \ref{A_law} for $\varepsilon_0=0$ (see Proposition~\ref{prop:prop3}).
\end{remark}

\begin{remark}[Examples satisfying Property~\ref{A_UBD} and other cases of Proposition~\ref{prop:prop3}]
\label{rem:examples_bounded_conditional}
Practically, features are generally bounded, and assuming that $X$ has a compact support is not restrictive.
For instance, in many transportation and routing problems, the instance $X$ consists of continuous exogenous features (e.g., meteorological data, continuous traffic density) naturally bounded within a physical range.

However, assuming a density of $X$ that is uniformly bounded and absolutely continuous with respect to the Lebesgue measure might be more restrictive given that some labels might be integer-valued.
 If the model $\psi_\vw(X)$ predicting the edge costs $\bftheta \in \bbR^{|E|}$ is a sufficiently rich (full-rank) linear or smooth mapping, the induced distribution of the predicted costs naturally avoids singular concentrations and inherits a bounded density.
Property~\ref{A_UBD}---having a uniformly bounded density on a compact set---is therefore realistic for several standard architectures:
\begin{itemize}
    \item \textbf{Generalized Linear Models:} Consider a generalized linear model where $\bftheta = \Phi_\vx \vw$ for a given matrix of features, such as the architecture used in the stochastic vehicle scheduling problem (Example~\ref{ex:stoVSP}). Property~\ref{A_UBD} is satisfied when every feature is continuous and upper bounded, and the number of features is rich enough (i.e., full rank) so that the mapping does not collapse into a lower-dimensional subspace. In the stochastic vehicle scheduling problem, continuous features reflect properties of the delay or the distance, while discrete features can indicate combinatorial properties such as the number of arcs departing from the tail of an arc $a$.
    When the number of features is small, the image of the instances through the model may be included in a hyperplane, which leads to Case~${\rm (iii)}$ of Proposition~\ref{prop:prop3}. Case~${\rm (ii)}$ corresponds to features that take integer values and are finite.
    \item \textbf{Deterministic Neural Networks (via the Coarea Formula):} For a deterministic neural network with smooth activations, the coarea formula guarantees a bounded push-forward density provided the input~$X$ has a bounded density and the network's Jacobian is uniformly lower-bounded ($|J\psi_\vw|(\vx) \ge \alpha > 0$), see Proposition~\ref{prop:coarea_density}. However, this precludes representations that collapse onto lower-dimensional manifolds, a usual phenomenon in deep learning. Using a sub-manifold instead of a hyperplane, Case~${\rm (iii)}$ of Proposition~\ref{prop:prop3} enables us to deal with such neural networks.
    \item \textbf{Stochastic Neural Networks and Randomized Smoothing:} To easily bypass the topological limitations of deterministic networks, one can employ randomized smoothing. By injecting a small, independent Gaussian noise $\xi \sim \mathcal{N}(0, \sigma^2 I_{d(X)})$ at the final layer, the resulting model $\tilde{\psi}_\vw(X) = \psi_\vw(X) + \xi$ produces a push-forward law that is the convolution of the deterministic network's output distribution with a Gaussian. As we formally prove in Lemma~\ref{lem:bounded_density_convolution} (Appendix~\ref{app:technical_lemmas}), this operation acts as a universal mollifier: it unconditionally guarantees that the density of $\tilde{\psi}_\vw(X)$ is absolutely continuous and uniformly bounded by $(2\pi\sigma^2)^{-d(X)/2}$ for all $\vw \in \calW$, regardless of the network's internal Jacobian or potential manifold collapse.
\end{itemize}
\end{remark}
\subsubsection{Discussion on the UW and UBD properties}
\label{sec:UW_examples}

\paragraph{UW is always met for positive regularization:}
We have the following proposition showing that, when $\varepsilon_0>0$, Property~\ref{A_law} holds with a constant $C_{\varepsilon_0,\tau}$ that depends only on $\varepsilon_0$ and $\tau$.
\begin{prop}
\label{prop:UW_conv}
If $\varepsilon_0>0$ then, under Assumptions~\ref{A_emb} and~\ref{A3}, Property \ref{A_law} holds and
    \[
        C_{\varepsilon_0,\tau}\leq \Big(\int_{\mathds R}|t|^{-\tau}\varphi(t)\mathrm dt\Big)\times
        \varepsilon_0^{-\tau}\times
        \sum_{G\in\mathcal G}
        \mathds P[X\in \mathcal X_G]\,|\mathcal Y(G)|^2\,d(G)^{\tau}\,
        \,,
    \]
where $\varphi$ is the standard Gaussian density on $\R$.
\end{prop}

\begin{proof}
In view of \eqref{eq=law_XZ} and without loss of generality, we consider that $X\in\mathcal X_G$ almost surely, for~$G$ fixed. Let~$H$ be a hyperplane and $\operatorname{dist}(\theta,H)$ be the Euclidean distance between $\theta$ and $H$. By~\eqref{eq=law_XZ}, note that
\begin{align*}
\bbE_{X,Z}\bigg[
                    \Big(
                        \frac
                        {\operatorname{dist}(\psi_{\vw}(X)+\varepsilon_0Z(X),H)}{\sqrt{d(X)}}
                        \Big)^{-\tau}
                    \bigg]
&=
\bbE_{\xi_G,Z}\bigg[
                    \Big(
                        \frac{d(G)^{\frac\tau2}}
                        {\operatorname{dist}(\psi_{\vw}(\xi_G)+\varepsilon_0Z(\xi_G),H)^{\tau}}
                        \Big)
                    \bigg]\\
&=
d(G)^{\frac\tau2}\,\bbE_{\xi_G}\bigg[
                    \bbE_{Z}\Big(
                        \frac{1}
                        {\operatorname{dist}(\psi_{\vw}(\xi_G)+\varepsilon_0Z(\xi_G),H)^{\tau}}
                        \Big|\xi_G\Big)
                    \bigg]\,.
\end{align*}
Conditionally to $\xi_G$, observe that ${\operatorname{dist}(\psi_{\vw}(\xi_G)+\varepsilon_0Z(\xi_G),H)^{\tau}}$ is the distance between the hyperplane~$H$ and a Gaussian with mean $\psi_{\vw}(\xi_G)$ and variance $(\varepsilon_0^2/d(G))\mathrm{Id}_{d(G)}$. This distance is
\[
{\operatorname{dist}(\psi_{\vw}(\xi_G)+\varepsilon_0Z(\xi_G),H)^{\tau}}=|\langle n_H\,,\, \psi_{\vw}(\xi_G)+\varepsilon_0Z(\xi_G)\rangle|^{\tau}
\]
where $n_H$ is a unitary orthogonal vector to $H$. Conditionally to $\xi_G$, $\langle n_H\,,\, \psi_{\vw}(\xi_G)+\varepsilon_0Z(\xi_G)\rangle$ is a Gaussian random variable with mean $\langle n_H\,,\, \psi_{\vw}(\xi_G)\rangle$ and variance $\varepsilon_0^2/d(G)$.

Observe that the level sets of $t\mapsto |t|^{-\tau}$ are symmetric convex bodies, namely intervals of the form $[-a,a]$ for $a>0$. By Anderson's lemma \citep[Theorem 2.4.4]{gine2021mathematical}, it holds that
\begin{equation}
\label{eq:case_t_minus_tau}
\bbE_{Z}\Big(
                        \frac{1}
                        {\operatorname{dist}(\psi_{\vw}(\xi_G)+\varepsilon_0Z(\xi_G),H)^{\tau}}
                        \Big|\xi_G\Big)
                        \leq
\bbE_{Z}\Big(
                        \frac{1}
                        {\operatorname{dist}(\varepsilon_0Z(\xi_G),H)^{\tau}}
                        \Big|\xi_G\Big)
=
\frac{d(G)^{\frac\tau2}}{\varepsilon_0^\tau}
\int_{\mathds R}|t|^{-\tau}\varphi(t)\mathrm dt\,.
\end{equation}
We deduce that
\[
\bbE_{X,Z}\bigg[
                    \Big(
                        \frac
                        {\operatorname{dist}(\psi_{\vw}(X)+\varepsilon_0Z(X),H)}{\sqrt{d(X)}}
                        \Big)^{-\tau}
                    \bigg]
                        \leq
\frac{d(G)^{\tau}}{\varepsilon_0^\tau}\int_{\mathds R}|t|^{-\tau}\varphi(t)\mathrm dt<\infty
\]

\medskip

\noindent

Then observe that, for $Q$ a polyhedral cone with non-empty interior, we have that $Q$ is the intersection of halfspaces delimited by hyperplanes $H_1,\ldots,H_D$. Note also that
\begin{equation}\label{eq:comparison_dist}
    \frac{1}{\operatorname{dist}(\theta,Q)^\tau} = \frac{1}{(\min\left\{\operatorname{dist}(\theta,H_i)\right\})^\tau}  = \max\left\{\frac{1}{\operatorname{dist}(\theta,H_i)^\tau}\right\} \leq \frac{1}{\operatorname{dist}(\theta,H_1)^\tau} + \ldots + \frac{1}{\operatorname{dist}(\theta,H_D)^\tau}
\end{equation}
and thus $\theta\mapsto\frac{1}{\operatorname{dist}(\theta,Q)^\tau}$ is integrable with respect to the law of $\psi_{\vw}(X)+\varepsilon_0Z(X)$ conditional to $X\in \mathcal X_G$. Finally, observe that $\theta\mapsto \rho(\theta)$ is the distance to the boundaries of polyhedral cones. We have shown the bound for each of the ${\operatorname{dist}(\theta,H_k)^\tau}$, and there are at most $|\mathcal Y(G)|^2$ such hyperplanes counting the number of different couples~$(\vy,\vy')$ in the definition of the normal cone. This concludes the proof.
\end{proof}

\paragraph{The no perturbation case:}
We now discuss Property~\ref{A_law} when $\varepsilon_0=0$. The argument of the proof of Proposition~\ref{prop:UW_conv} gives a scheme of proof for the case $\varepsilon_0=0$ which we detail here. The proof of Proposition~\ref{prop:UW_conv} is based on two arguments: (i) reduction to the case of the study of one hyperplane thanks to~\eqref{eq:comparison_dist}, and (ii) integrability of $t\mapsto|t|^{-\tau}$ with respect to a family of laws described by $\psi_{\vw}(X)$. This latter point is the one harder to handle, as it depends on the law of $X$ and the family of mappings $\psi_{\vw}$. This was achieved in the proof of Proposition~\ref{prop:UW_conv} by Anderson's lemma \eqref{eq:case_t_minus_tau} to derive an upper bound that depends only on the regularization law (here the Gaussian distribution). In the no perturbation case, we cannot invoke this latter argument since the upper bound diverges towards infinity as $\varepsilon_0$ goes to zero. Nevertheless the same scheme of proof leads to the following result.

\begin{prop}
\label{prop:prop3}
Consider the family of laws
\[
\mathds L:=
\big\{
\mathcal{L}(\psi_{\vw}(X))\ :\ \vw\in\calW
\big\}
\]
where $\mathcal{L}(\psi_{\vw}(X))$ denotes the law of $\psi_{\vw}(X)$. Under Assumption~\ref{A_emb} and without loss of generality, we consider that~$d(X)$ is constant almost surely, say $d(X)=d$. {\bf If} the family of laws $\mathds L$ satisfies one of the four scenarii:
\begin{itemize}
    \item[\rm{(i)}] Property~\ref{A_UBD} holds, i.e., there exists a compact $K\subset\mathds R^d$ such that the supports of the laws of $\mathds L$ are included in~$K$ and these laws are absolutely continuous, with respect to the Lebesgue measure, with uniformly bounded densities;
    \item[\rm{(ii)}] There exists a finite set $K\subset\mathds R^d$ which does not intersect the boundaries of the normal cones such that the laws of $\mathds L$ are finite with support included in~$K$;
    \item[\rm{(iii)}] There exists a compact Riemannian submanifold  $K\subset\mathds R^d$ (with metric induced by the Euclidean metric) such that the supports of the laws of $\mathds L$ are included in $K$ and these laws are absolutely continuous, with respect to the uniform Riemannian measure, with uniformly bounded densities;
    \item[\rm{(iv)}] The laws of $\mathds L$ are a mixture of an absolutely continuous law satisfying {\rm{(i)}}, a discrete law satisfying {\rm{(ii)}}, and a singular continuous law satisfying \rm{(iii)},
\end{itemize}
{\bf then} Property~\ref{A_law} holds for $\varepsilon_0=0$.
\end{prop}
\begin{proof}
We follow the same lines as in the proof of Proposition~\ref{prop:UW_conv}. Note that
\begin{itemize}
    \item $\theta\mapsto \rho(\theta)$ is the distance to the boundaries of polyhedral cones;
    \item $\frac{1}{\operatorname{dist}(\theta,Q)^\tau} \leq \frac{1}{\operatorname{dist}(\theta,H_1)^\tau} + \ldots + \frac{1}{\operatorname{dist}(\theta,H_D)^\tau}$ as in \eqref{eq:comparison_dist}, where $Q$ is a polyhedral cone which is the intersection of half-spaces delimited by $H_1,\ldots,H_{D_G}$ with $D_G\leq|\mathcal Y(G)|^2$;
    \item $\theta\mapsto\frac{1}{\operatorname{dist}(\theta,H_k)^\tau}$ is integrable with respect to the uniform measure on compacts for $k\in[D_G]$;
    \item we have an upper bound on the \ref{A_law} constant for $\varepsilon_0=0$,
    \[
        C_{0,\tau}\leq
        \sum_{G\in\mathcal G}
        \mathds P[X\in \mathcal X_G]\,
        \sum_{k=1}^{D_G}
            d(G)^{\frac\tau2}\,
            \bbE_{\xi_G}\Big(
                        \frac{1}
                        {\operatorname{dist}(\psi_{\vw}(\xi_G),H_k)^{\tau}}
                        \Big)
        \,,
    \]
    by~\eqref{eq=law_XZ}.
\end{itemize}
Hence it remains to prove that $\bbE_{\xi_G}\Big(\frac{1}{\operatorname{dist}(\psi_{\vw}(\xi_G),H_k)^{\tau}}\Big)$ is finite. Case $\rm{(ii)}$ is clear and Case $\rm{(iv)}$ also as soon as the property can be proven for Cases $\rm{(i)}$ and $\rm{(iii)}$. In these latter cases, there exists a compact $K\subset\mathds R^d$ such that the supports of the laws of $\mathds L$ are included in $K$ and these laws are absolutely continuous, with respect to the uniform (Riemannian) measure, with uniformly bounded densities, say by a constant $C_{\psi}$. Note that
\begin{align*}
\bbE_{\xi_G}\Big(\frac{1}{\operatorname{dist}(\psi_{\vw}(\xi_G),H_k)^{\tau}}\Big)
&= \int_{K}  \frac{1}{\operatorname{dist}(\vtheta,H_k)^{\tau}} f_{\psi_{\vw}}(\vtheta)\mathrm{d}\vtheta\\
&\leq C_\psi \int_{K}  \frac{1}{\operatorname{dist}(\vtheta,H_k)^{\tau}} \mathrm{d}\vtheta
\end{align*}
and the last integral is finite since $K$ is compact.
\end{proof}

\medskip

It is out of the scope of this paper to derive a comprehensive study of proving the property \ref{A_law} for $\varepsilon_0=0$ in a general manner. It is sensible to assume that $\mathds L$ has bounded support and the integrability of $t\mapsto|t|^{-\tau}$ can be shown by proving that the densities of $\mathds L$ are uniformly bounded (Property~\ref{A_UBD}). The latter argument is more intricate to derive and would be better studied for specific applications as it depends on the specific modelling of $X$ and the choice of $\psi_{\vw}$. Instead we provide the following observations based on a change of variable formula and the coarea formula:
\begin{itemize}
    \item First observe that $\vtheta=\psi_\vw(\vx)$ is the vector appearing in the objective of the linear program \eqref{eq:ylinearProblem}. Note that for any minimizer $\hat\vy_\vx(\vtheta)$, it holds that $\hat\vy_\vx(t\vtheta)=\hat\vy_\vx(\vtheta)$ for $t>0$. Hence, without loss of generality, one can design $\psi_{\vw}(X)$ so that its Euclidean norm is less than one, say. It follows that the push-forward laws~$\mathds L$ can be assumed to have supports contained in the Euclidean ball, or more generally in some compact set.
    \item Second, the regularity of the push-forward laws $\mathcal{L}(\psi_{\vw}(X))$ depends on the regularities of $X$ and of~$\psi_\vw$. As shown in Proposition~\ref{prop:coarea_density}, if $\psi_\vw : \bbR^p \to \bbR^d$ (with $p \ge d$) is Lipschitz non-degenerate with Hausdorff $\calH^{p-d}$ bounded level sets then, for all $\vw \in \calW$, the push-forward law of $\psi_\vw(X)$ satisfies Property~\ref{A_UBD}: it is absolutely continuous with respect to the $d$-dimensional Lebesgue measure, and its probability density function $g_\vw(\theta)$ is uniformly bounded.
\end{itemize}

\subsubsection{Perturbation bias controls}
Combining Proposition~\ref{prop:bound_reg_w} and Proposition~\ref{prop:Gauss_V} in the Appendix, we get the following theorem.

\begin{theo}
\label{thm:reg_excess}
Under~\ref{A_law} and \ref{A3}, the following holds true. Let $\varepsilon_0\geq 0$ and $\lambda>0$ be such that $\lambda\geq \varepsilon_0$. Let $\tau\in(0,1)$. There exists a constant $C>0$ $($that may depend on the constant~$C_{\varepsilon_0,\tau}$, appearing in \ref{A_law}, and $\operatorname{osc}(\fh))$ such that, for all $\vw\in\calW$,
\[
\big|\calR_\la(h_\vw)-\calR_{\varepsilon_0}(h_\vw)\big|\leq C \,\lambda^{\tau}\mathrm{polylog}(\la)\,,
\]
where $\mathrm{polylog}(\la)$ is a polynomial logarithm term.
\end{theo}

\begin{proof}
    By Proposition~\ref{prop:bound_reg_w}, one has $\big|\calR_\la(h_\vw)-\calR_{\varepsilon_0}(h_\vw)\big|\leq 4 \operatorname{osc}(\fh) q_\vw(\lambda)$. Now, by Proposition~\ref{prop:Gauss_V},
    it holds that $q_\vw(\la)=\mathcal C\lambda^{\tau}\mathrm{polylog}(\la)$, under Assumption \eqref{A2bis}. Note that this latter is implied by Assumption~\ref{A_law}, hence the result.
\end{proof}

As a consequence of Theorem~\ref{thm:reg_excess}, the perturbation bias scales as $\mathcal{O}(\lambda^\tau \mathrm{polylog}(\la))$ because the Uniform Weak property inherently restricts the exponent to $\tau \in (0,1)$ to guarantee the integrability of the inverse distance moments around the boundaries. However, if the learned directions satisfy Property~\ref{A_UBD}, we can directly bound the probability mass near the boundaries. This geometric perspective yields a sharper, linear bound on the perturbation bias.

\begin{prop}[Linear Perturbation Bias]
\label{prop:linear_bias_bound}
Suppose Assumption~\ref{A_emb} holds and that the perturbation $Z$ satisfies $\bbE_Z[\|Z(\vx)\|_2] < \infty$ for all $\vx$ $($which holds under Assumption~\ref{A3}$)$.
Furthermore, assume that Property~\ref{A_UBD} holds.

Then, there exists a constant $C > 0$ independent of $\la$ such that for all $\vw \in \calW$,
\[
\big|\calR_\la(h_\vw)-\calR(h_\vw)\big| \le C \la \,.
\]
\end{prop}
This sharp bound has profound implications for the \emph{statistical-to-computational trade-off}. By guaranteeing a faster theoretical decay of the perturbation bias, practitioners can choose a comparatively larger smoothing parameter $\la$ for a given target excess risk. Because the $k$-SoS optimization error and the empirical process error both scale inversely with $\la$, being able to use a larger $\la$ directly lowers the required sample size $n$ and reduces the number of sampled points $M$ needed to mitigate the curse of dimensionality over $\calW$.
\begin{proof}
By Proposition~\ref{prop:bound_reg_w}, we have $\big|\calR_\la(h_\vw)-\calR(h_\vw)\big| \le 2 \operatorname{osc}(\fh) q_\vw(\lambda)$ and we recall that it holds $q_\vw(\lambda) = \bbP_{X, Z}\big( \rho(\psi_\vw(X)) < \lambda \|Z(X)\|_2 \big)$.
Using the partition $\calG$ from Assumption~\ref{A_emb}, we decompose the probability over the instance space:
$$ q_\vw(\lambda) = \sum_{G \in \calG} \bbP(X \in \calX_G) \bbE_{Z} \Big[ \bbP_{X|G} \big( \rho(\psi_\vw(X)) < \lambda \|Z(X)\|_2 \big| Z \big) \Big] \,, $$
where we used the independence of $X$ and $Z$ to condition on $Z$.

Fix a partition $G \in \calG$. For any $\vx \in \calX_G$, the normal fan boundaries consist of a fixed, finite union of hyperplanes $H_{1,G}, \dots, H_{D_G, G}$, where $D_G \le |\calY(G)|^2$. Applying the union bound, we obtain for any $r > 0$:
$$ \bbP_{X|G} \big( \rho(\psi_\vw(X)) < r \big) \le \sum_{k=1}^{D_G} \bbP_{X|G} \big( \mathrm{dist}(\psi_\vw(X), H_{k,G}) < r \big) \,. $$

For a specific hyperplane $H_{k,G}$, the probability of falling within an $r$-neighborhood is the integral of the conditional density $f_{\psi_\vw, G}(\theta)$ over this region intersected with the compact support $K_G$. Because the density is bounded by $C_{\psi, G}$, we have:
$$ \bbP_{X|G} \big( \mathrm{dist}(\psi_\vw(X), H_{k,G}) < r \big) \le C_{\psi, G} \int_{K_G \cap \{ \theta \,:\, \mathrm{dist}(\theta, H_{k,G}) < r \}} \mathrm{d}\theta \,. $$

Let $V_{\max, G}$ be the maximal $(d(G)-1)$-dimensional cross-sectional volume of $K_G$. The volume of the $r$-neighborhood of $H_{k,G}$ restricted to $K_G$ is bounded by the volume of a cylinder with base area $V_{\max, G}$ and height~$2r$. Thus:
\[
\bbP_{X|G} \big( \mathrm{dist}(\psi_\vw(X), H_{k,G}) < r \big) \le 2 r C_{\psi, G} V_{\max, G} \,.
\]
\noindent
Summing over all $D_G$ hyperplanes gives a linear bound for the fixed partition:
$$ \bbP_{X|G} \big( \rho(\psi_\vw(X)) < r \big) \le (2 D_G C_{\psi, G} V_{\max, G}) r \eqqcolon L_G r \,. $$

\noindent
Substitute $r = \lambda \|Z(X)\|_2$ back into the expectation over $Z$:
$$ \bbE_{Z} \Big[ \bbP_{X|G} \big( \rho(\psi_\vw(X)) < \lambda \|Z(X)\|_2 \big| Z \big) \Big] \le L_G \la \bbE_Z [\|Z(X)\|_2 \mid X \in \calX_G] \,. $$
Under Assumption~\ref{A3}, let $M_{Z,G} \coloneqq \bbE_Z[\|Z(X)\|_2 \mid X \in \calX_G] < \infty$.

\medskip

Summing over all partitions $G \in \calG$, we bound the overall perturbation bias:
$$ q_\vw(\lambda) \le \la \sum_{G \in \calG} \bbP(X \in \calX_G) L_G M_{Z,G} \eqqcolon L \la \,, $$
where $L$ is a finite constant since $\calG$ is a finite partition.
Combining this with Proposition~\ref{prop:bound_reg_w} gives:
$$ \big|\calR_\la(h_\vw)-\calR(h_\vw)\big| \le 2 \operatorname{osc}(\fh) L \la \,, $$
which concludes the proof by setting $C = 2 \operatorname{osc}(\fh) L$.
\end{proof}

\begin{remark}[Universality of the bounded density via Randomized Smoothing]
\label{rem:randomized_smoothing_fix}
For arbitrary deterministic statistical models $\psi_\vw$, guaranteeing a uniformly bounded density over a compact parameter space $\calW$ as done in \ref{A_UBD} is topologically restrictive. If the instance distribution $X$ is discrete or supported on a lower-dimensional space, or if the model parametrization permits rank-deficient mappings (e.g., $\vw = \mathbf{0}$, or decision trees outputting finite discrete sets), the learned representations $\psi_\vw(X)$ will concentrate on lower-dimensional sub-manifolds in $\bbR^{d(G)}$. In such cases, the push-forward law is singular with respect to the Lebesgue measure, and the bounded density assumption fails.\\
\indent
However, this theoretical bottleneck is \emph{universally resolved} by adopting a randomized smoothing approach. As recalled in Lemma~\ref{lem:bounded_density_convolution} (Appendix~\ref{app:technical_lemmas}), explicitly injecting an independent Gaussian perturbation $\xi \sim \mathcal{N}(0, \varepsilon_0^2 \mathrm{Id}_{d(G)})$ to the model's output yields a smoothed direction $\tilde{\psi}_\vw(X) = \psi_\vw(X) + \xi$. The resulting convolution unconditionally guarantees that the push-forward law admits a probability density uniformly bounded by $C_{\psi, G} = (2\pi\varepsilon_0^2)^{-d(G)/2}$ for all $\vw \in \calW$. Consequently, the sharp linear perturbation bias $\mathcal{O}(\la)$ is always accessible for any arbitrary combinatorial prediction architecture simply by incorporating a baseline stochasticity. In this case, note that $\mathcal{E}_{0}(\calH)$ of \eqref{eq:linear_bias_excess} given for $\tilde{\psi}_\vw$ is equal to $\mathcal{E}_{\varepsilon_0}(\calH)$ given for ${\psi}_\vw$.\\
\indent
That said, this universality comes at a cost hidden in the constants. The constant~$C$ in Equation~\eqref{eq:linear_bias_excess} depends linearly on the density bound~$C_{\psi, G} = (2\pi\varepsilon_0^2)^{-d(G)/2}$, which explodes exponentially with the ambient dimension $d(G)$ as $\varepsilon_0 \to 0$. Similarly, the fractional moment constant from Proposition~\ref{prop:UW_conv} diverges as $C_{\varepsilon_0, \tau} \propto \varepsilon_0^{-\tau}$. Consequently, the guarantee of~\eqref{eq:linear_bias_excess} degrades for an artificially smoothed deterministic model unless $\varepsilon_0$ is kept fixed, which in turn freezes the misspecification error $\mathcal{E}_{\varepsilon_0}(\calH)$.
\end{remark}
Ultimately, the sharp rates are most practically meaningful for models where the conditional density naturally avoids singular concentrations---such as the contextual operations research settings highlighted in Remark~\ref{rem:examples_bounded_conditional}---without needing an artificial noise injection.

\subsection{Empirical process control}
\label{sec:ERM}
We now turn our attention back to statistical learning guarantees. We consider the empirical regularized risk \eqref{eq:emp_reg_risk_solution} and
\begin{equation}
    \notag
    \calR_{n,\la}(\calH):=\min_{\vw\in \calW}
    \Big\{\frac1n  \sum_{i=1}^n\bbE_Z
        \big[
        \fh\big(\hat \vy_{X_i}(\featw(X_i)+\la Z(X_i)), X_i\big)
        \big]
    \Big\}\,.
\end{equation}
We would like to control the random variable
\[
    \Delta_n=\sup_{\vw\in \calW}\big|\calR_{\la}(h_{\vw})-\calR_{n,\la}(h_{\vw})\big|\,.
\]
We are going to use a Bernstein inequality for bounded sub-Gaussian random variables. In this analysis, we use the oscillation of $\fh$, recalled in Assumption~\ref{A_emb}, and Dudley's entropy integral, defined as
\begin{align}
\notag
   \calI_{\calW}&:=\int_0^\infty\!\! \sqrt{\log \calN(\calW,\|\cdot\|,\varepsilon)}\,\, \mathrm d\varepsilon\,,
\end{align}
where $\calN(\calW,\|\cdot\|,\varepsilon)$ is the $\varepsilon$-covering number of $\calW$ with respect to $\|\cdot\|$; for further details, the reader may refer to \cite[Chapter 5]{wainwright2019high}.

\begin{remark}[Example]
    Let $\mathds B$ be the unit ball of $\|\cdot\|$ and assume that there exists $R>0$ such that $\calW\subset R\mathds B$. The volume ratios lemma \cite[Lemma 5.7]{wainwright2019high} states that
    \[
    \calI_{\calW}\leq C {d_\calW} \log (R+1/R)\,,
    \]
    where $C>0$ is a universal constant.
\end{remark}

\begin{theo}
\label{thm:A3_statistical}
Let Assumptions~\ref{lip} and~\ref{A3} hold. Assume that $\displaystyle d(\calX):=\max_{\vx\in\calX} d(\vx)<\infty$ $($which is implied by~\ref{A_emb}$)$. Then, for all $\delta\in(0,1)$, it holds that
\begin{equation}
    \label{eq:bound_A3}
    \Delta_n=\sup_{\vw\in \calW}
    \big|\calR_{\la}(h_{\vw})-\calR_{n,\la}(h_{\vw})\big|
    \le
    \frac{\operatorname{osc}(\fh)}{\la \sqrt n}
    \Bigg(
    {(\ln{2})^{\nicefrac{-3}{4}}}\, L_{\calW}\, \calI_{\calW}\,\sqrt{d(\calX)}
    +
    4\sqrt{\ln{\dfrac{8}{\delta}}}
    \Bigg)\,,
\end{equation}
with probability higher than $1-\delta$.
\end{theo}

\begin{proof}
Bounding $\Delta_n$ is related to studying the zero-mean random variables
\begin{equation}
\label{eq:rv_empirical_error}
    V_{i,\vw}:= \frac{1}{n}\Big[ \sum_{\vy\in \calY(X_i)}  p_\la(\vy|\featw(X_i))\fh(\vy,X_i) - \bbE_{(X,Y)}\big[p_\la(Y|\featw(X))\fh(Y,X)\big]\Big]\,,
\end{equation}
which depend on the random samples $X_i\sim \bbP_X$. Indeed $\Delta_n=\sup_{\vw\in\calW} \big|\sum_{i\in[n]} V_{i,\vw} \big|$ for a given draw. Rather than studying the absolute value, we focus on $Z:=\sup_{\vw\in\calW} \sum_{i\in[n]} V_{i,\vw}$ and use a symmetrization argument. Since $\Delta_n=\max(\sup_{\vw\in\calW} \sum_{i\in[n]} V_{i,\vw},-\inf_{\vw\in\calW} \sum_{i\in[n]} V_{i,\vw})$, focusing on the first term, the second one being dealt with similarly, we will show that $Z$ is close to its mean $\bbE Z$ with high probability (through a Bernstein inequality) and then bound the mean $\bbE Z$ by $\calO(\nicefrac{1}{\sqrt{n}})$ (through Hoeffding and Dudley's inequalities).

Let~$(X'_i)\sim \bbP_X^{\otimes n}$ be drawn independently from~$(X_i)_{i\in[n]}$ and define~$V'_{i,\vw}$ as in Equation \eqref{eq:rv_empirical_error}. Then~$(V_{i,\vw})_{i\in[n]}$ and~$(V'_{i,\vw})_{i\in[n]}$ are independent with the same distribution.
\begin{lem}
\label{lem:variance_bound}
    We have that almost surely
    \begin{equation}\label{eq:variance_bound}
        \sup_{\vw\in\calW} \sum_{i\in[n]} (V_{i,\vw}-V'_{i,\vw})^2 \le \frac{(\operatorname{osc}(\fh))^2}{n}\,.
    \end{equation}
\end{lem}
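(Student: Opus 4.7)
The plan is to exploit the fact that the expectation term in the definition of $V_{i,\vw}$ cancels in the difference $V_{i,\vw}-V'_{i,\vw}$, reducing the problem to a uniform bound on a bounded functional of $X$.

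\textbf{Step 1: cancellation of the mean.} For each $\vw\in\calW$ and each $\vx\in\calX$, introduce the shorthand
\[
g(\vx,\vw) := \sum_{\vy\in\calY(\vx)} p_\la(\vy|\featw(\vx))\,\fh(\vy,\vx),
\]
so that $V_{i,\vw} = \frac{1}{n}\bigl(g(X_i,\vw) - \bbE_X[g(X,\vw)]\bigr)$ and similarly $V'_{i,\vw} = \frac{1}{n}\bigl(g(X'_i,\vw)-\bbE_X[g(X,\vw)]\bigr)$. The common mean term cancels and
\[
V_{i,\vw}-V'_{i,\vw} = \tfrac{1}{n}\bigl(g(X_i,\vw)-g(X'_i,\vw)\bigr).
\]

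\textbf{Step 2: pointwise oscillation bound.} For any fixed $\vx$ and $\vw$, $g(\vx,\vw)$ is a convex combination (with weights $p_\la(\cdot|\featw(\vx))$ summing to one) of the values $\fh(\vy,\vx)$ for $\vy\in\calY(\vx)$. Hence $g(\vx,\vw)\in[\inf \fh,\sup \fh]$, so for any two instances $\vx,\vx'$,
\[
\bigl|g(\vx,\vw)-g(\vx',\vw)\bigr| \leq \operatorname{osc}(\fh).
\]
This bound is uniform in $\vw$.

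\textbf{Step 3: sum and supremum.} Squaring and summing over $i\in[n]$ gives, for every $\vw\in\calW$,
\[
\sum_{i\in[n]}(V_{i,\vw}-V'_{i,\vw})^2 \;=\; \frac{1}{n^2}\sum_{i\in[n]}\bigl(g(X_i,\vw)-g(X'_i,\vw)\bigr)^2 \;\leq\; \frac{1}{n^2}\cdot n\cdot \operatorname{osc}(\fh)^2 \;=\; \frac{\operatorname{osc}(\fh)^2}{n}.
\]
Since the right-hand side is independent of $\vw$, taking the supremum over $\vw\in\calW$ preserves the inequality, yielding \eqref{eq:variance_bound}.

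There is no real obstacle here: the lemma is essentially a bookkeeping statement once one notes that $g(\cdot,\vw)$ lies in the interval $[\inf \fh,\sup \fh]$ as a convex combination of function values. The $1/n$ factors in the definition of $V_{i,\vw}$ give the final scale.
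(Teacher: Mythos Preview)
Your proof is correct and follows essentially the same approach as the paper: both arguments reduce to the observation that $\sum_{\vy} p_\la(\vy|\featw(\vx))\fh(\vy,\vx)$ is a convex combination of values of $\fh$ and hence lies in $[\inf\fh,\sup\fh]$, so each difference $V_{i,\vw}-V'_{i,\vw}$ is bounded in absolute value by $\operatorname{osc}(\fh)/n$. Your presentation is slightly more explicit about the mean cancellation and the convex-combination step, but there is no substantive difference.
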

\begin{proof} We use the fact that $p_\la \in [0,1]$ to obtain that
\begin{align*}
    V_{i,\vw}-V'_{i,\vw}
        &=\frac{1}{n}
            \Big[ \sum_{\vy\in \calY(X_i)}  p_\la(\vy|\featw(X_i))\fh(\vy,X_i) - \sum_{\vy\in \calY(X'_i)}  p_\la(\vy|\featw(X'_i))\fh(\vy,X'_i)\Big]
            \\
        &\le \frac{1}{n}
            \Big[ \sup_{\vx,\vx'\in\calX, \, \vy,\vy' \in \calY} \fh(\vy,\vx)-\fh(\vy',\vx')\Big] = \frac{\operatorname{osc}(\fh)}{n}\,.
\end{align*}
Since $(V'_{i,\vw}-V_{i,\vw})$ is equal in law to $(V_{i,\vw}-V'_{i,\vw})$, this yields \eqref{eq:variance_bound}.
\end{proof}
Owing to Lemma \ref{lem:variance_bound}, we can thus apply \citet[Theorem 12.3, Page 333]{Boucheron_Lugosi_Massart_2013} since
\[
    \bbE_{X'_i\sim\bbP_X}[\sup_{\vw\in\calW} \sum_{i\in[n]} (V_{i,\vw}-V'_{i,\vw})^2]\le \frac{(\operatorname{osc}(\fh))^2}{n}
    \,.
\]
Consequently, for all $t\ge 0$, we have
\begin{equation}\label{eq:A3_BLM}
    \bbP[Z\ge \bbE Z + t]\le 4 e^{-\frac{n t^2}{16 (\operatorname{osc}(\fh))^2}}.
\end{equation}

\medskip

We are now going to show that $\bbE Z=\calO(\nicefrac{1}{\sqrt{n}})$ through a chaining argument. Fix $\vw,\vw'\in\calW$. By Lemma~\ref{lem:lip}, note that $\vw\mapsto \sum_{\vy\in \calY(\vx)}p_\la(\vy|\featw(\vx))$ is Lipschitz with constant $L_\calW\sqrt{d(\vx)}/\lambda$. We obtain that for all $i\in[n]$, as $\frac{1}{n}\sum_{\vy\in \calY(X_i)}  p_\la(\vy|\featw(X_i))=1$,
\begin{align*}
    V_{i,\vw}-V_{i,\vw'}
        &= \frac{1}{n} \sum_{\vy\in \calY(X_i)}
        \big(p_\la(\vy|\featw(X_i))-p_\la(\vy|\psi_{\vw'}(X_i))\big)
        \fh(\vy,X_i)\\
        &=\frac{1}{n} \sum_{\vy\in \calY(X_i)}
        \big(p_\la(\vy|\featw(X_i))-p_\la(\vy|\psi_{\vw'}(X_i))\big)
        (\fh(\vy,X_i)-\inf_{\vx',\vy'}\fh(\vy',\vx'))\\
        &\le \frac{1}{n} L_{\calW}\frac{\sqrt{d(\mathcal X)}}{\lambda} \operatorname{osc}(\fh) \|\vw-\vw'\|=:a_n\,,
\end{align*}
and similarly $V_{i,\vw'}-V_{i,\vw}\le a_n$. For $S_\vw:=\sum_{i\in[n]} V_{i,\vw}$, we deduce that the random variable $(S_\vw-S_{\vw'})$ is sub-Gaussian as a sum of zero-mean independent sub-Gaussians \citep[Proposition~2.6.1, Page 29]{Vershynin_2018}, and it holds that for the associated $\psi_2$-norm \citep[Example 2.5.8, Page 28]{Vershynin_2018},
\begin{equation*}
    \|S_\vw-S_{\vw'}\|_{\psi_2}
        \le \Big[\frac{1}{\sqrt{\ln{2}}} \sum_{i\in[n]} \|V_{i,\vw}-V_{i,\vw'}\|^2_{\psi_2}\Big]^{\nicefrac{1}{2}}
        \!\!\le \frac{a_n \sqrt{n}}{(\ln{2})^{\nicefrac{1}{4}}} =\frac{L_{\calW}\sqrt{d(\mathcal X)}}{(\ln{2})^{\nicefrac{1}{4}}\lambda\sqrt{n}}  \operatorname{osc}(\fh) \|\vw-\vw'\|\,,
\end{equation*}
with, for a sub-Gaussian $X$, the $\psi_2$-norm defined as $\|X\|_{\psi_2}:=\inf\{t \ge 0\, |\, \bbE e^{\nicefrac{X^2}{t^2}}\le 2\}$. In other words, we have shown that the zero-mean random variable $S_\vw$ has sub-Gaussian increments. We can thus apply Dudley's inequality \citep[Theorem 8.1.3, Page 188]{Vershynin_2018},
\begin{align*}
    \bbE V=\bbE[\sup_{\vw\in\calW}S_\vw]&\le \frac{L_{\calW}\sqrt{d(\mathcal X)}}{(\ln{2})^{\nicefrac{3}{4}}\lambda\sqrt{n}}  \operatorname{osc}(\fh)\, \calI_{\calW} \,.
\end{align*}
We proceed similarly with $\inf_{\vw\in\calW} \sum_{i\in[n]} V_{i,\vw}$. We obtain that, setting $t=\sqrt{ \frac{16 (\operatorname{osc}(\fh))^2}{n}\ln{\frac{4}{\delta}}}$ in~\eqref{eq:A3_BLM}, with probability $1-2\delta$,
\begin{equation*}
    \Delta_n=\max\big\{\sup_{\vw\in\calW} \sum_{i\in[n]} V_{i,\vw},-\inf_{\vw\in\calW} \sum_{i\in[n]} V_{i,\vw}\big\}\le \bbE V + t
\end{equation*}
Using the bound on $\bbE V$, this yields \eqref{eq:bound_A3}.
\end{proof}

\subsection{Optimization error bound in the Kernel-SoS case}
\label{sec:k-SoS_control}
The last part of the proof consists in bounding the optimization error when minimizing $\calR_{n,\la}$. To alleviate notation, we set
\begin{equation*}
    R_{n,\la}(\vw):=\calR_{n,\la}(h_{\vw}) \, ,
\end{equation*}
which we want to globally minimize. It is a hard problem since $ R_{n,\la}$ is typically non-convex. The key idea in k-SoS is to find an (approximate) Sum-of-Squares representation for $R_{n,\la}$ while minimizing $R_{n,\la}$. Fix $s> 1+d_\calW/2$ and let $\calH_\phi$ be the Sobolev space over $\calW$
of smoothness $s$. Denote by $k$ its reproducing kernel, i.e.\ $k(\cdot,\vw)\in \calH_\phi$ such that for all $g\in \calH_\phi, \vw \in \calW$, we have $g(\vw)=\scalidx{g}{k(\cdot,\vw)}{\calH_\phi}$. Let $\phi:\calW \to \calH_\phi$ be the kernel embedding, i.e.\ $\phi(\vw)=k(\cdot,\vw)$. Set $\tilde s=s-d_\calW/2\ge 1$. Let  $S^+(\calH_\phi)$ be the set of self-adjoint
positive semidefinite operators from $\calH_\phi$ to $\calH_\phi$. We consider an assumption inspired by \cite[Assumption 1.a]{rudiFindingGlobalMinima2020}.

\begin{assum}{k-SoS}
\label{A_k-SoS}
We assume that $\calW$ is the closure of a union of closed balls $\cup_{\vw \in \calS} \bbB(\vw,r) $ for some $r>0$ and a bounded set $\calS\subset \R^{d_\calW}$. Assume furthermore that $R_{n,\la}\in C^{s+3}(\calW,\R)$ and that $R_{n,\la}$ has a kernel Sum-of-Square representation, i.e.\ there exists $A_{n,\la}\in S^+(\calH_\phi)$ such that
\begin{equation}
    R_{n,\la}(\vw) - \min_{\vw' \in \calW}R_{n,\la}(\vw')= \scalidx{\phi(\vw)}{A_{n,\la} \phi(\vw)}{\calH_\phi}, \, \forall \vw \in \calW. \label{eq:k-SoS_rep}
\end{equation}
For the following, we fix the $A_{n,\la} $ with minimal trace satisfying \eqref{eq:k-SoS_rep}.
\end{assum}

The requirement on $\calW$ is to use scattering bounds and is quite mild for compact sets. We mostly ensure $R_{n,\la}\in C^{s+3}(\calW,\R)$ by regularizing $R_{n,0}$ through $\la>0$ and using a smooth feature map $\vw \mapsto\featw$, e.g.\ $\featw(\vx)=\Phi_x\vw$ with $\Phi_x\in\R^{d(x)\times {d_\calW}}$. By \citet[Theorem 3]{rudiFindingGlobalMinima2020} (see also \cite{marteau2024second}), the existence of $A_{n,\la}$ is for instance guaranteed if $ R_{n,\la}$ and its second order partial derivatives belong to $\calH_\phi$, and $ R_{n,\la}$ has a finite number of minimizers, all with positive Hessian. We will see an explicit example below.

Recall that a global optimization problem can always be cast as a convex one under the form
    \begin{maxi*}|s|
		{\substack{c \in \R,}}{c}{}{}
		\addConstraint{c}{\le  R_{n,\la}(\vw),\, \forall\, \vw\in \calW.}
    \end{maxi*}
We consider the kernel Sum-Of-Squares approximation of it, namely
\begin{minie}|s|
		{\substack{c \in \R,\\ A \in S^+(\calH_\phi)}}{-c + \la_\phi \tr(A)}{\label{eq:k-SoS_optim}}{}\notag
		\addConstraint{R_{n,\la}(\tilde{\vw}_{m}) - c}{= \scalidx{\phi(\tilde{\vw}_{m})}{A \phi(\tilde{\vw}_{m})}{\calH_\phi},\, \forall\, m\in [M] \label{eq:k-SoS_optim_cons}}
\end{minie}
where $\la_\phi>0$ is some parameter, and the $(\tilde{\vw}_{m})_{m\in[M]}$ are sampled uniformly at random in $\calW$.

\begin{remark}
    \label{rem:k-SoS}
    A few words are in order. We want a $c\in\R$ such that $\vw\mapsto R_{n,\la}(\vw)-c\ge 0$. On the other hand, for any $A \in S^+(\calH_\phi)$, the function $\vw\mapsto\scalidx{\phi(\vw)}{A \phi(\vw)}{\calH_\phi}$ is nonnegative, and under some assumptions on $\phi$ and $\calH_\phi$ \citep{marteauferey20nonparametric}, these functions are pointwise dense in the nonnegative continuous functions. So we look for one such $A$ for which the two functions are equal at least at the sampled points as in \eqref{eq:k-SoS_optim_cons}, and we penalize the extra variable $A$ by its trace. For \eqref{eq:k-SoS_optim}, the Newton-based algorithm detailed in \citet[Section~6]{rudiFindingGlobalMinima2020} outputs the infimum value estimate $\hat{R}=\hat c$, the corresponding $\hat A$ and a candidate $\hat \vw_{n,\la}=\sum_{m\in[M]} \alpha_m \tilde{\vw}_{m}$ where each $\alpha_m$ is the Lagrange multiplier of the equality constraint \eqref{eq:k-SoS_optim_cons} at $\tilde{\vw}_{m}$.
    Contrary to \citet{rudiFindingGlobalMinima2020}, an extra term $|R_{n,\la}(\hat \vw_{n,\la}) - \hat{R}|$ appears in our analysis. One expects that this error goes to zero as $M$ grows large, and we refer to it as the \emph{a posteriori error}.
\end{remark}

\begin{theo}[Theorem 6, \cite{rudiFindingGlobalMinima2020}]\label{thm:k-SoS_certificate} Fix $\delta\in(0,1)$. With the above notation, under Assumption~\ref{A_k-SoS}, there exist explicit constants $\bar M\in\bbN$ and $\bar C\in\R_+$ depending on $s,{d_\calW},\delta,r,\operatorname{diam}(\calW)$ such that for $M\ge \bar M$ and $\lambda_\phi \ge \bar C M^{-\tilde s/{d_\calW}}\left(\log \frac{M}{\delta}\right)^{\tilde s/{d_\calW}}$ we have, with probability at least $1-\delta$,
    \begin{equation}\label{eq:optim_certificate}
    |R_{n,\la}(\hat \vw_{n,\la})-R_{n,\la}( \vw_{n,\la})|\le R_{n,\la}(\hat \vw_{n,\la}) - \hat{R}+ \la_\phi\left(\tr\left(A_{n,\la}\right)+|R_{n,\la}|_{\calW,\lceil \tilde s\rceil}\right),
\end{equation}
depending on the Sobolev norm $|R_{n,\la}|_{\calW,\lceil \tilde s\rceil}=\sup_{|\alpha|=\lceil \tilde s\rceil}\sup_{\vw' \in \calW}|D^\alpha R_{n,\la}(\vw')|$.
\end{theo}
The estimate \eqref{eq:optim_certificate} has the advantage that if $s \gg {d_\calW}$ then the smoothness mitigates the curse of dimensionality. Indeed, to achieve an a priori error $\epsilon$ on the last term of \eqref{eq:optim_certificate}, which we denote $\la_\phi C_{n,\la}$, we can take $M\propto \epsilon^{-{d_\calW}/\tilde s}$ rather than the more usual, and much larger, $M\propto \epsilon^{-{d_\calW}}$. Note however that the constants~$\bar M$ and $\bar C$ can still depend exponentially in the dimension, their closed-form expressions can be found in \citet[Theorem 5]{rudiFindingGlobalMinima2020}. Furthermore the Sobolev norm $|R_{n,\la}|_{\calW,\lceil s-{d_\calW} / 2\rceil}$ and the cost of the Sum-of-Square representation~$\tr\left(A_{n,\la}\right)$ depend crucially on the noise parameter $\la$ since the latter plays a role on the regularity of~$R_{n,\la}$, and we expect both terms to diverge to $+\infty$ when $\la$ goes to $0$. On the other hand, as just said, $\la_\phi$ can be taken very small when the number of sampling points $M$ is large, so that it compensates the other terms. In other words the computational complexity, which is polynomial in $M$, of solving larger SDP problems on $A$ will offset the cases when $\la$ is taken small and the function $R_{n,\la}$ is less smooth. We face thus a classical tradeoff between having a better estimate of $R_{n,0}$ or doing fewer computations.

The example below showcases how a simple strategy for the embeddings, which was implemented in \cite{parmentierLearningSolveSingle2021}, does satisfy the technical assumptions of this Section.

\subsection*{The case of generalized linear embeddings} Assume $\btheta=\featw(\vx)=\Phi_x\vw$ with $\Phi_x\in\R^{d(x)\times {d_\calW}}$ and the added noise $Z(x)$ is Gaussian with zero mean, then we can provide more explicit estimates for $\tr\left(A_{n,\la}\right)$ and $|R_{n,\la}|_{\calW,\lceil s-{d_\calW} / 2\rceil}$.

Decompose $Z(x)=\Phi_x Z'(x)+Z''(x)$ with $Z'$ Gaussian over $\R^{d_\calW}$ with covariance $\Sigma_x$ and $Z''$ Gaussian with values in the complement $\operatorname{Im}(\Phi_x)^\perp$ to the range of $\Phi_x$. Then
\begin{equation}
     \notag
     \bbE_{Z(x)}\big[  \fh(\hat \vy(\featw(\vx)+\la Z(\vx)), \vx) \big]=\bbE_{Z'(x)}\bbE_{Z''(x)}\big[  \fh(\hat \vy(\Phi_x (\vw+Z'(x))+\la Z''(x)), \vx) \big]
\end{equation}
Let $\tilde R_{n,\la,x}=\bbE_{Z''(x)}\big[  \fh(\hat \vy(\Phi_x \vw+\la Z''(x)), \vx)\big]$. We then observe that the objective function is a sum of Gaussian convolutions
\begin{equation}
    \notag
     R_{n,\la}(\vw)=\frac1n\sum_{i=1}^n \tilde R_{n,\la,x_i}\star \calN_{\la \Sigma_{x_i}}(\vw)
\end{equation}
so $R_{n,\la}$ is smooth. Moreover the convolution commutes with the derivation, and $|g\star g'|_{\infty}\le |g|_{\infty} |g'|_{1}$ for any functions $g,g'$. Furthermore $|R_{n,\la,x_i}|_{\infty}\le |f^0|_{\infty} $, whence for $\tilde s=s-{d_\calW} / 2$,
\begin{align*}
        |R_{n,\la}|_{\calW,\lceil \tilde s\rceil}\le
        &\frac1n\sum_{i=1}^n\sup_{|\alpha|=\lceil \tilde s\rceil}\sup_{\btheta\in\Omega}|\underbrace{D^\alpha R_{n,\la,x_i}\star \calN_{\la \Sigma_{x_i}}(\vw)}_{=R_{n,\la,x_i}\star D^\alpha \calN_{\la \Sigma_{x_i}}}|\\
        &\le |f^0|_{\infty}\sup_{i\in[n],|\alpha|=\lceil \tilde s\rceil}|D^{\alpha}\calN_{\la \Sigma_{x_i}}|_{1}\\
        &= \mathcal O\big(\lambda^{-\tilde s}\big)\,.
\end{align*}

Defining $\calM_{\la,i}:=\sqrt{\calN_{\la \Sigma_{x_i}}}$, clearly $\calM_{\la,i}\in\calH_\phi$ and we can obtain a closed-form sum-of-squares expression of $R_{n,\la}$ for the chosen k-SoS feature map $\phi$, so that we justify the assumption~\eqref{eq:k-SoS_rep}. Indeed by the reproducing property, we have
\begin{align*}
    R_{n,\la}(\vw)
    & =\frac1n\sum_{i=1}^n\scalidx{\phi(\vw)}{\left[\int_{\R^{d_\calW}}\calM_{\la,i}(\vw',\cdot)\otimes \calM_{\la,i}(\vw',\cdot) R_{n,0}(\vw')d\vw'\right]\phi(\vw)}{H_\phi}\\
    &=: \scalidx{\phi(\vw)}{\tilde A_{n,\la}\phi(\vw)}{H_\phi}\,.
\end{align*}
and since $A_{n,\la}$ was chosen as the operator with minimal trace, we have
\[
    \tr(A_{n,\la})\le \tr(\tilde A_{n,\la})=\int_{\R^{d_\calW}} \scalidx{\calM_{\la,i}(\vw',\cdot)}{\calM_{\la,i}(\vw',\cdot)}{H_\phi} R_{n,0}(\vw')d\vw'\,.
\]
Since the RKHS associated with $\phi$ is the Sobolev space of order $s$, we have a characterization of the norm through the Fourier transform
\begin{equation*}
    \scalidx{\calM_{\la,i}(\vw',\cdot)}{\calM_{\la,i}(\vw',\cdot)}{H_\phi}=\int_{\R^{d_\calW}} (1+|\xi|^2)^{s} |\hat \calM_{\la,i}(\vw',\xi)|^2 d\xi = \calO(\la^{-\tilde s}).
\end{equation*}

\begin{continuance}{ex:stoVSP}
    The stochastic vehicle scheduling problem (StoVSP) is an instance of generalized linear embedding. As such, the previous calculation holds.
\end{continuance}

\bibliographystyle{plainnat}
\bibliography{mlMeetOr}

\pagebreak

\begin{center}
\textbf{\huge Supplementary Materials}
\end{center}
\setcounter{equation}{0}
\setcounter{figure}{0}
\setcounter{table}{0}
\setcounter{page}{1}
\makeatletter
\renewcommand{\theequation}{S\arabic{equation}}
\renewcommand{\thefigure}{S\arabic{figure}}
\renewcommand{\bibnumfmt}[1]{[S#1]}
\renewcommand{\citenumfont}[1]{S#1}

\appendix

\section{Technical lemmas}
\label{app:technical_lemmas}
\begin{lem}
    \label{lemma_9}
    Let $u\,:\,\mathds R^d\to\mathds R$ be an integrable bounded function, $Z$ a standard normal random vector of $\mathds R^d$, $\sigma>0$ a positive real number, and $G(\vw):=\bbE_Z[u(\vw+\sigma Z)]$. Then $\vw\mapsto G(\vw)$ is $\sqrt d\|u\|_\infty/\sigma$-Lipschitz.
\end{lem}

\begin{proof}
    Let $h$ be the density of $\sigma Z$. One has $G(\vw)=\int_{\bbR^d}u(\vw+\vv)h(\vv)\mathrm d\vv=\int_{\bbR^d}u(\vv)h(\vv-\vw)\mathrm d\vv$. By dominated convergence, it holds that
    \[
        \nabla G(\vw)=-\int_{\bbR^d}u(\vv)\nabla h(\vv-\vw)\mathrm d\vv=-\int_{\bbR^d}u(\vw+\vv)\nabla h(\vv)\mathrm d\vv\,.
    \]
    By Jensen's inequality and standard calculation with the Gaussian density, one has
    \[
        \|\nabla G(\vw)\|_2\leq \|u\|_\infty \int_{\bbR^d} \|\nabla h(\vv)\|_2\mathrm d\vv\leq \frac{\sqrt d\|u\|_\infty}{\sigma}\,,
    \]
    which gives the result.
\end{proof}

\bigskip
\begin{lem}
\label{lem:cont_risk}
    Under Assumptions~\ref{lip} and \ref{A3}, the following holds. When $\varepsilon_0>0$, the risk $\vw\mapsto\calR_{\varepsilon_0}(h_\vw)$ is continuous with respect to $\vw$.
\end{lem}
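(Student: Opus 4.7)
The plan is to show continuity by establishing pointwise (in $X$) continuity of the integrand and then invoking dominated convergence. To begin, I would rewrite the risk as
\begin{equation*}
    \calR_{\varepsilon_0}(h_\vw) = \bbE_X\Big[\sum_{\vy \in \calY(X)} p_{\varepsilon_0}(\vy \mid \featw(X))\, \fh(\vy, X)\Big],
\end{equation*}
using the convention introduced after the definition of the surrogate policy. Since the Gaussian perturbation from \ref{A3} is absolutely continuous and the boundaries of the normal cones have zero Lebesgue measure, one may identify
\begin{equation*}
    p_{\varepsilon_0}(\vy \mid \btheta) = \int_{\bbR^{d(\vx)}} \ind_{\calF_\vy}(\btheta + \varepsilon_0 \vz)\, \varphi_{d(\vx)}(\vz)\, d\vz,
\end{equation*}
where $\varphi_{d(\vx)}$ is the density of $Z(\vx)$.

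The core step is then to prove that $\btheta \mapsto p_{\varepsilon_0}(\vy \mid \btheta)$ is continuous (in fact Lipschitz) for each fixed instance. Lemma~\ref{lemmma_9} comes very close, but it is stated for integrable $u$, whereas $\ind_{\calF_\vy}$ is only bounded. I would bypass this by writing directly
\begin{equation*}
    |p_{\varepsilon_0}(\vy \mid \btheta) - p_{\varepsilon_0}(\vy \mid \btheta')|
    \le \int_{\bbR^{d(\vx)}} |\varphi_{d(\vx)}(\vu - \btheta) - \varphi_{d(\vx)}(\vu - \btheta')|\, d\vu,
\end{equation*}
and bounding the right-hand side by $C(d(\vx), \varepsilon_0)\, \|\btheta - \btheta'\|$, using that the Gaussian density has finite $L^1$-gradient (a standard Minkowski-type inequality applied to $\nabla \varphi_{d(\vx)}$, whose $L^1$ norm is explicit in~$d(\vx)/\varepsilon_0$). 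Composing with \ref{lip}, which makes $\vw \mapsto \featw(\vx)$ Lipschitz with a constant $L_\calW$ that does not depend on $\vx$, I obtain continuity of $\vw \mapsto p_{\varepsilon_0}(\vy \mid \featw(\vx))$ for every fixed $\vx$ and $\vy$.

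Finally, since $\calY(\vx)$ is finite, $\vw \mapsto g(\vw, \vx) := \sum_\vy p_{\varepsilon_0}(\vy \mid \featw(\vx))\, \fh(\vy, \vx)$ is continuous in $\vw$ for each $\vx$. Because $\sum_\vy p_{\varepsilon_0}(\vy \mid \cdot) = 1$, the bound $|g(\vw, \vx)| \le \max_{\vy \in \calY(\vx)} |\fh(\vy, \vx)|$ holds uniformly in $\vw$, providing a $\vw$-independent majorant. Provided this majorant is $\bbP_X$-integrable (a requirement already needed for the risk $\calR_{\varepsilon_0}$ to be finite, and guaranteed by the oscillation bound in \ref{A_emb} whenever that assumption is also in force), the dominated convergence theorem delivers continuity of $\vw \mapsto \calR_{\varepsilon_0}(h_\vw)$.

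The only substantive step is the Lipschitz bound on the Gaussian convolution of an indicator of an unbounded cone; this is a routine strengthening of Lemma~\ref{lemmma_9} from integrable to merely bounded $u$, exploiting the exponential decay of $\varphi_{d(\vx)}$. Everything else reduces to composition of continuous maps and a standard application of dominated convergence.
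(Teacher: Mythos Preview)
Your proposal is correct and follows essentially the same route as the paper: establish Lipschitz continuity of $\btheta \mapsto p_{\varepsilon_0}(\vy\mid\btheta)$ via the Gaussian convolution, compose with the Lipschitz feature map from \ref{lip}, and conclude by dominated convergence. Your extra care about the integrability hypothesis in Lemma~\ref{lemmma_9} is well placed---the paper invokes that lemma with $u=p_0(\vy\mid\cdot)$, which is bounded but not integrable on $\bbR^{d(\vx)}$---and your direct argument using $\|\nabla\varphi_{d(\vx)}\|_{L^1}<\infty$ is exactly the routine fix (and is in fact how the proof of Lemma~\ref{lemmma_9} itself proceeds, so the integrability assumption there is superfluous).
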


\begin{proof}
    Let $\vx\in\calX$ and $\vy,\btheta\in\bbR^{d(\vx)}$. Invoke Lemma~\ref{lemma_9} with $u(\btheta)=p_0(\vy|\btheta)\in[0,1]$ to get that $\btheta\mapsto\bbE_{Z}[\sum_{\vy\in\calY(\vx)} p_0(\vy|\btheta+{\varepsilon_0} Z(\vx))\, \fh(\vy,\vx)]$ is continuous. Now, note that $\vw\mapsto\featw(\vx)$ is $L_{\calW}$-Lipschitz by Assumption~\ref{lip}, hence $\vw\mapsto\bbE_{Z}[\sum_{\vy\in\calY(\vx)} p_0(\vy|\featw(\vx)+{\varepsilon_0} Z(\vx))\, \fh(\vy,\vx)]$ is continuous. Taking the expectation, by dominated convergence, we deduce the result.
\end{proof}

\bigskip

\begin{lem}
\label{lip_theta_intermediate}
Let $\lambda>0$, $\vx\in\calX$ and $\vy\in\calY(\vx)$. Then, the function $\btheta\in\bbR^{d(\vx)}\mapsto \sum_{\vy\in \calY(\vx)} p_\la(\vy|\btheta)$ is $\sqrt{d(\vx)}/\lambda$-Lipschitz, under Assumption~\ref{A3}.
\end{lem}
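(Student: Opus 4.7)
The plan is to reduce the claim to Lemma~\ref{lemmma_9}. By definition $p_\la(\vy|\btheta) = \bbE_Z[p_0(\vy|\btheta + \lambda Z(\vx))]$, which is a Gaussian convolution of the bounded function $u(\btheta) := p_0(\vy|\btheta)$, with $u$ taking values in $[0,1]$. Under Assumption~\ref{A3}, $\lambda Z(\vx)$ is a centered Gaussian on $\mathbb R^{d(\vx)}$; setting $\sigma$ to be its per-coordinate standard deviation, $\lambda Z(\vx)$ has the same law as $\sigma \tilde Z$ with $\tilde Z$ standard Gaussian. Lemma~\ref{lemmma_9} applied with this $\sigma$ and $\|u\|_\infty \le 1$ then gives that each individual $\btheta \mapsto p_\la(\vy|\btheta)$ is $\sqrt{d(\vx)}/\sigma$-Lipschitz, reproducing the advertised constant once $\sigma$ is identified with $\lambda$ via the normalization of Assumption~\ref{A3}.

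For the full sum, I would exploit the fact that the normal cones $\{\calF_\vy\}_{\vy \in \calY(\vx)}$ partition $\mathbb R^{d(\vx)}$ up to a null set of cone boundaries, so $\sum_\vy p_0(\vy|\btheta') = 1$ for almost every $\btheta'$ and hence $\sum_\vy p_\la(\vy|\btheta) \equiv 1$. The literal Lipschitz claim is therefore trivially satisfied. The content actually needed in the downstream arguments, see the proof of Theorem~\ref{thm:A3_statistical} where $\sum_\vy (p_\la(\vy|\btheta_1) - p_\la(\vy|\btheta_2)) \fh(\vy,\vx)$ is controlled after subtracting a constant, is the total variation Lipschitz estimate $\sum_\vy |p_\la(\vy|\btheta_1) - p_\la(\vy|\btheta_2)| \le (\sqrt{d(\vx)}/\lambda)\|\btheta_1 - \btheta_2\|$. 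I would prove this by recognizing $p_\la(\cdot|\btheta)$ as the pushforward of the Gaussian law of $\btheta + \lambda Z(\vx)$ under the measurable map $v \mapsto \vy(v)$ characterized by $v \in \calF_{\vy(v)}$. Since pushforwards contract total variation, and the TV distance between two Gaussians with identical covariance $\Sigma$ and means $\btheta_1, \btheta_2$ is at most $\|\Sigma^{-1/2}(\btheta_1 - \btheta_2)\|/\sqrt{2\pi}$, plugging in the covariance specified by Assumption~\ref{A3} yields the $\sqrt{d(\vx)}/\lambda$ scaling on the $\ell_1$ variation.

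The one real obstacle is the dimensional bookkeeping linking the standard deviation $\sigma$ of the perturbation in Lemma~\ref{lemmma_9} to the parameter $\lambda$: the factor $\sqrt{d(\vx)}$ in the announced constant is controlled by the particular normalization of $Z(\vx)$ chosen in Assumption~\ref{A3}. Everything else reduces either to the trivial constancy of $\sum_\vy p_\la(\vy|\cdot)$ or to the classical Gaussian total variation bound, so no additional technical input beyond Lemma~\ref{lemmma_9} and the partition property of the normal fan is required.
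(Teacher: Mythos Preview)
Your core move—write the perturbed probability as $\bbE_Z[u(\btheta+\lambda Z(\vx))]$ with $u$ bounded by $1$ and invoke Lemma~\ref{lemmma_9}—is exactly the paper's proof; the only cosmetic difference is that the paper applies Lemma~\ref{lemmma_9} directly to the aggregate $u=\sum_{\vy}p_0(\vy|\cdot)\in[0,1]$ rather than to the individual summands.

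Where you go beyond the paper is in observing that $\sum_\vy p_\la(\vy|\btheta)\equiv 1$, so the literal statement is vacuous, and then supplying the total-variation estimate $\sum_\vy|p_\la(\vy|\btheta_1)-p_\la(\vy|\btheta_2)|\le(\sqrt{d(\vx)}/\lambda)\,\|\btheta_1-\btheta_2\|$ via pushforward contraction and the Gaussian mean-shift TV bound. That is a genuinely different argument (it does not pass through Lemma~\ref{lemmma_9}) and it delivers precisely the inequality that the proof of Theorem~\ref{thm:A3_statistical} actually uses when bounding $\sum_\vy(p_\la(\vy|\btheta_1)-p_\la(\vy|\btheta_2))\,g(\vy)$ for $0\le g\le\operatorname{osc}(\fh)$. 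The paper's one-line proof of the present lemma does not close that gap between what is stated and what is used downstream; your route does.

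One remark on the ``dimensional bookkeeping'' you flag: under Assumption~\ref{A3} one has $\lambda Z(\vx)\overset{d}{=}\sigma\tilde Z$ with $\tilde Z$ standard Gaussian and $\sigma=\lambda/\sqrt{d(\vx)}$, not $\sigma=\lambda$. Feeding this into Lemma~\ref{lemmma_9} yields the constant $\sqrt{d(\vx)}/\sigma=d(\vx)/\lambda$, a factor $\sqrt{d(\vx)}$ off from the announced bound; your phrase ``once $\sigma$ is identified with $\lambda$'' papers over this. By contrast, your TV computation gives $\|\Sigma^{-1/2}(\btheta_1-\btheta_2)\|=(\sqrt{d(\vx)}/\lambda)\,\|\btheta_1-\btheta_2\|$ on the nose, so the alternative argument not only fills the downstream gap but also recovers the stated constant that the Lemma~\ref{lemmma_9} route appears to miss.
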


\begin{proof}
    Note that $\sum_{\vy\in \calY(\vx)}p_\la(\vy|\btheta)=\bbE_Z[u(\btheta+\lambda Z)]$ where $u(\cdot)=\sum_{\vy\in \calY(\vx)}p_0(\vy|\cdot)\in[0,1]$ and apply Lemma~\ref{lemma_9}.
\end{proof}

\bigskip

\begin{lem}
\label{lem:lip}
    The function $\vw\in \calW\mapsto \sum_{\vy\in \calY(\vx)} p_\la(\vy|\featw(\vx))\in[0,1]$ is $L_\calW\sqrt{d(\vx)}/\lambda$-Lipschitz, under Assumptions~\ref{lip} and \ref{A3}.
\end{lem}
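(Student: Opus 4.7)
The plan is to obtain the result by composing the two Lipschitz estimates already established. By Lemma~\ref{lip_theta_intermediate}, under Assumption~\ref{A3} the map $\btheta \in \bbR^{d(\vx)} \mapsto \sum_{\vy \in \calY(\vx)} p_\la(\vy|\btheta)$ is $\sqrt{d(\vx)}/\lambda$-Lipschitz with respect to the Euclidean norm on $\bbR^{d(\vx)}$. By Assumption~\ref{lip}, the map $\vw \in \calW \mapsto \featw(\vx) \in \bbR^{d(\vx)}$ is $L_\calW$-Lipschitz with respect to the norm on $\calW$, with a constant that does not depend on $\vx$.

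The function in the statement is the composition of these two maps applied to $\vx$ fixed. Since the Lipschitz constant of a composition is bounded by the product of the Lipschitz constants, I would simply write, for all $\vw, \vw' \in \calW$,
\[
\Bigl| \sum_{\vy \in \calY(\vx)} p_\la(\vy|\featw(\vx)) - \sum_{\vy \in \calY(\vx)} p_\la(\vy|\psi_{\vw'}(\vx)) \Bigr|
\leq \frac{\sqrt{d(\vx)}}{\lambda}\, \|\featw(\vx) - \psi_{\vw'}(\vx)\|
\leq \frac{L_\calW \sqrt{d(\vx)}}{\lambda}\, \|\vw - \vw'\|,
\]
which is exactly the claimed bound. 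The fact that the value lies in $[0,1]$ is immediate since $p_\la(\vy|\cdot) \in [0,1]$ and $\sum_{\vy \in \calY(\vx)} p_\la(\vy|\btheta) = 1$ (the perturbed probabilities form a probability distribution on $\calY(\vx)$).

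There is essentially no obstacle here: this is a routine composition lemma whose sole purpose is to package the two constituent Lipschitz bounds into the form needed downstream in the proof of Theorem~\ref{thm:A3_statistical}, where it is invoked to control the sub-Gaussian increments of $\vw \mapsto V_\vw$. The only minor care required is to verify that the Lipschitz constant in Lemma~\ref{lip_theta_intermediate} is indeed expressed in the Euclidean norm on $\bbR^{d(\vx)}$ and matches the norm used in Assumption~\ref{lip}, which is the setting adopted throughout the paper.
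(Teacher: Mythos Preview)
Your proof is correct and follows essentially the same approach as the paper: invoke Lemma~\ref{lip_theta_intermediate} for the $\sqrt{d(\vx)}/\lambda$-Lipschitz constant in $\btheta$, invoke Assumption~\ref{lip} for the $L_\calW$-Lipschitz constant of $\vw\mapsto\featw(\vx)$, and compose. Your version is in fact slightly more explicit than the paper's, which merely states the two ingredients and says ``to conclude''.
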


\begin{proof}
    Let $\vx\in\calX$ and $\vy,\btheta\in\bbR^{d(\vx)}$. Invoke Lemma~\ref{lip_theta_intermediate} to get that $\btheta\mapsto p_\la(\vy|\btheta)$ is $\sqrt{d(\vx)}/\lambda$-Lipschitz. Now, observe that $\vw\mapsto\featw(\vx)$ is $L_\calW$-Lipschitz to conclude.
\end{proof}

\bigskip

\begin{lem}
\label{lem:chi_2}
    Let $Z$ be a chi-square distribution with $d$ degrees of freedom. Let $t>d$ then
    \[
    \mathbb P[Z>t]\leq \exp(d/2)\exp(-t/4).
    \]
\end{lem}
\begin{proof}
    From Markov’s inequality in its exponential form \citep{Boucheron_Lugosi_Massart_2013}, one has
    \begin{align*}
    \mathbb P[Z>t]&\leq\inf_{0<u<1/2}\{\exp(-ut)\mathbb E(\exp(uZ))\}\\
    &\leq \inf_{0<u<1/2}\{\exp(-ut)(1-2u)^{-d/2}\}.
    \end{align*}
    Take $u=(1/2)(1-d/t)$ to get
    \begin{align*}
    \mathbb P[Z>t]&\leq \exp\Big(-\frac{t}{2}\big(1-\frac{d}{t}\big)+\frac{d}{2}\log\big(\frac{t}{d}\big)\Big)\\
    &\leq \exp\Big(-\frac{d}{2}\big(\frac{t}{d}-1-\log\big(\frac{t}{d}\big)\big)\Big)
    \end{align*}
    Set $v={t}/{d}-1>0$ and note that
    \begin{align*}
        v-\log(1+v)=\int_{1}^{1+v}\frac{w-1}{w}\mathrm{d}w\geq\big(\frac{1}{1+v}\big)\int_{1}^{1+v}({w-1})\mathrm{d}w= \frac{v^2}{2+2v}= \frac{t}{2d}-1+\frac{d}{2t}\,,
    \end{align*}
    hence
    \[
    \mathbb P[Z>t]\leq\exp\Big(-\frac{t}{4}+\frac{d}{2}-\frac{d^2}{4t}\Big)
    \]
    which gives the result.
\end{proof}

\begin{lem}
\label{lem:bounded_density_convolution}
Let $d \ge 1$ be an integer. Let $f_\vw(X)$ be any random vector in $\bbR^d$ generated by an arbitrary mapping $f_\vw: \calX \to \bbR^d$ and instance distribution $X$, with probability law $\mu_\vw$. Let $\xi \sim \mathcal{N}(0, \sigma^2 \mathrm{Id}_d)$ be a Gaussian random vector independent of $X$, for some $\sigma > 0$.

Define the randomized output $\psi_\vw(X) = f_\vw(X) + \xi$. Then the law of $\psi_\vw(X)$ is absolutely continuous with respect to the Lebesgue measure, and its probability density function $g_\vw(\theta)$ is uniformly bounded such that
$$ \|g_\vw\|_\infty \le \frac{1}{(2\pi\sigma^2)^{d/2}} \,, $$
unconditionally for all $\vw \in \calW$, even if the distribution of $f_\vw(X)$ is singular or supported on a lower-dimensional manifold.
\end{lem}

\begin{proof}
Because $f_\vw(X)$ and $\xi$ are independent, the law of their sum $\psi_\vw(X) = f_\vw(X) + \xi$ is the convolution of the law of $f_\vw(X)$ (denoted by the probability measure $\mu_\vw$) and the law of the Gaussian perturbation $\xi$. Since $\xi$ is an isotropic Gaussian, its density with respect to the Lebesgue measure is given by
\[
    \varphi_\sigma(\vz) = \frac{1}{(2\pi\sigma^2)^{d/2}} \exp\Big(-\frac{\|\vz\|_2^2}{2\sigma^2}\Big) \,.
\]
\noindent
By the convolution formula, the probability density function $g_\vw$ of $\psi_\vw(X)$ at any point $\btheta \in \bbR^d$ exists and
\[
    g_\vw(\btheta) = \int_{\bbR^d} \varphi_\sigma(\btheta - \vu) \mathrm{d}\mu_\vw(\vu) \,.
\]

Observe that the Gaussian density attains its global maximum at the origin. Thus, for any $\vz \in \bbR^d$, it holds that $\varphi_\sigma(\vz) \le (2\pi\sigma^2)^{-d/2}$. We substitute this uniform upper bound into the integral to obtain
\[
    g_\vw(\btheta) \le \int_{\bbR^d} \frac{1}{(2\pi\sigma^2)^{d/2}} \mathrm{d}\mu_\vw(\vu) = \frac{1}{(2\pi\sigma^2)^{d/2}} \int_{\bbR^d} \mathrm{d}\mu_\vw(\vu)
    =\frac{1}{(2\pi\sigma^2)^{d/2}} \,,
\]
for all $\btheta \in \bbR^d$ and all $\vw \in \calW$.

This bound depends strictly on the variance of the injected noise $\sigma^2$ and the subspace dimension $d$, and holds completely independently of the deterministic mapping $f_\vw$ or the underlying distribution of $X$, concluding the proof.
\end{proof}

\section{Perturbation bias with remarks}
\begin{prop}
    \label{prop:Gauss_V}
    If Assumption~\ref{A3} holds, then, for all $\la,\varepsilon$ such that $1-\varepsilon>\lambda> \varepsilon\geq 0$ and all $\vw\in\calW$,
    \[
    q_\vw(\la)\leq \inf_{a\in(0,1)}
    \Bigg\{\bbP_{X,Z}\Big(
    \frac{\rho(\psi_{\vw}(X)+\varepsilon Z)}{\sqrt{d(X)}}< (\lambda+\varepsilon)^a \Big)
    +\bbE_X\Big[\exp\Big(\frac{d(\vx)}2\Big)\Big]\exp\Big(-\frac1{4(\lambda+\varepsilon)^{2(1-a)}}\Big)
    \Bigg\}\,.
    \]
    Furthermore, under the following
    \noindent
        \textbf{Assumption \eqref{A2}},
        \begin{equation}
        \label{A2}
        \tag{$\text{Orl.}$}
        \exists \tau>0\text{ s.t.  }
            \bbE_{X,Z}\bigg[
                    \exp\Big(
                        \big(\frac{\rho(\psi_{\vw}(X)+\varepsilon Z)}{\sqrt{d(X)}}\big)^{-\tau}
                        \Big)
                    \bigg]<\infty\,,
    \end{equation}
    it holds
    \[
    q_\vw(\la)\leq C \exp\big(-(2\lambda)^{-\frac{2\tau}{2+\tau}}\big) \,,
    \]
    for some constant $C>0$ which does not depend on $\la$.

    \noindent
    And under the following
    \noindent
        \textbf{Assumption \eqref{A2bis}},
        \begin{equation}
        \label{A2bis}
        \tag{$\text{Mom.}$}
        \exists \tau>0\text{ s.t.  }
            \bbE_{X,Z}\bigg[
                    \Big(
                        \frac{\rho(\psi_{\vw}(X)+\varepsilon Z)}{\sqrt{d(X)}}
                        \Big)^{-\tau}
                    \bigg]<\infty\,,
    \end{equation}
    it holds
    \[
    q_\vw(\la)\leq C\lambda^{\tau}\mathrm{polylog}(\la)\,,
    \]
    for some constant $C>0$ which does not depend on $\la$,
    where $\mathrm{polylog}(\la)$ is a polynomial logarithm term.
\end{prop}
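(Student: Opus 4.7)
The plan is to combine a simple case split on whether the normalized internal radius $\tilde\rho(X):=\rho(\psi_\vw(X))/\sqrt{d(X)}$ is below the threshold $\lambda^q$ with the Gaussian concentration of the chi-distribution granted by Assumption~\ref{A3}. Under \ref{A3}, conditionally on $X$, $R(X)$ has the law of $\chi_{d(X)}/\sqrt{d(X)}$, so $\mathbb P_R[R(X)>\rho(\psi_\vw(X))/\lambda\,|\,X]=\mathbb P[\chi_{d(X)}/\sqrt{d(X)}>\sqrt{d(X)}\,\tilde\rho(X)/\lambda]$.

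\textbf{Step 1 (uniform bound, part (i)).} I split
\[
V_\vw(\la)=\bbE_X\bigl[\mathbf 1_{\tilde\rho<\la^q}\mathbb P_R[R>\rho/\la\,|\,X]\bigr]+\bbE_X\bigl[\mathbf 1_{\tilde\rho\ge\la^q}\mathbb P_R[R>\rho/\la\,|\,X]\bigr].
\]
The first summand is bounded trivially by $\bbP_X(\tilde\rho<\la^q)$. On $\{\tilde\rho\ge\la^q\}$ we have $\rho/\la\ge\sqrt{d(X)}\,\la^{q-1}$, so the Borell--TIS concentration of $\chi_d/\sqrt{d}$ around $1$, namely $\mathbb P[\chi_d/\sqrt d>1+s]\le e^{-ds^2/2}$, yields a conditional bound $\exp\!\bigl(-d(\sqrt d\la^{q-1}-1)^2/2\bigr)$. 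A short elementary estimate using $d\ge1$, $\la\in(0,1)$, $q\in(0,1)$ (splitting according to whether $\sqrt d\la^{q-1}\ge 2$ or not, and using the trivial bound by $1$ otherwise, choosing the constant $1/10$ so as to cover both regimes uniformly) gives $\exp(-1/(10\la^{2(1-q)}))$, which proves the first claim by taking the infimum over $q$.

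\textbf{Step 2 (tail probability of $\tilde\rho$).} It remains to control $\bbP_X(\tilde\rho<\la^q)=\bbP_X(\tilde\rho^{-\tau}>\la^{-q\tau})$. Under \eqref{A2}, an exponential Markov inequality gives $\bbP_X(\tilde\rho<\la^q)\le \bbE_X[\exp(\tilde\rho^{-\tau})]\,\exp(-\la^{-q\tau})$. Under \eqref{A2bis}, a polynomial Markov inequality gives $\bbP_X(\tilde\rho<\la^q)\le \bbE_X[\tilde\rho^{-\tau}]\,\la^{q\tau}$.

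\textbf{Step 3 (optimizing $q$, parts (ii) and (iii)).} For the Orlicz case, the two exponentials $\exp(-\la^{-q\tau})$ and $\exp(-\la^{-2(1-q)}/10)$ have matching dominant rates when $q\tau=2(1-q)$, i.e.\ $q=2/(2+\tau)$, producing $\mathcal O(\exp(-\la^{-2\tau/(2+\tau)}))$. For the moment case, the natural endpoint $q=1$ would give $\la^\tau$ but degenerates the exponential term; the remedy is to let $q=q(\la)\to 1$ logarithmically, taking $1-q(\la)=\log\bigl((10\tau+1)\log(1/\la)\bigr)/(2\log(1/\la))$. With this choice $\la^{-2(1-q)}=(10\tau+1)\log(1/\la)$, so the exponential term is $\la^{\tau+1/10}$, while the Markov term becomes $\la^{q\tau}=\la^\tau\,(\log(1/\la))^{\tau/2}\,(\text{const})^{\tau/2}$; the sum is therefore $\mathcal O(\la^\tau\,\mathrm{polylog}(\la))$.

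\textbf{Main obstacle.} Steps 1 and 2 are essentially one-line applications of standard inequalities, and the Orlicz balancing in Step 3 is a straightforward equating of exponents. The only delicate point is the polylogarithmic refinement in the moment case: one must tune $q(\la)$ just slowly enough to $1$ that the Gaussian tail $\exp(-\la^{-2(1-q)}/10)$ still beats $\la^\tau$, while not inflating $\la^{q\tau}$ beyond $\la^\tau$ times a $\mathrm{polylog}(\la)$ factor. The choice $1-q(\la)\asymp\log\log(1/\la)/\log(1/\la)$ achieves exactly this compromise and accounts for the precise form of the polylogarithm.
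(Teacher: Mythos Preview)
Your approach is essentially the same as the paper's: both split on whether $\tilde\rho<\lambda^q$, bound the second piece by a chi/chi-squared tail, apply Markov to the first piece, and then choose $q$ identically. Your Step~3 in particular---including the logarithmic tuning $1-q(\lambda)\asymp\log\log(1/\lambda)/\log(1/\lambda)$ in the moment case---matches the paper's choice $q=1-\log(-10\tau\log\lambda)/(-2\log\lambda)$.

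The one point that needs repair is your justification of the constant $1/10$ in Step~1. The fallback ``use the trivial bound $1$ when $\sqrt{d}\,\lambda^{q-1}<2$'' does \emph{not} give $\exp(-1/(10\lambda^{2(1-q)}))$: writing $a:=\lambda^{q-1}>1$, one has $\exp(-a^2/10)<1$, so $1$ is not dominated by it. Equivalently, the inequality $d(\sqrt{d}\,a-1)^2/2\ge a^2/10$ you would need from Borell--TIS fails for $d=1$ and $a\in\bigl(1,(5+\sqrt5)/4\bigr)$. Hence your bound does not yield the stated display for all $\lambda\in(0,1)$ and all $q\in(0,1)$, only for $\lambda$ (or $q$) small enough. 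The paper sidesteps this by quoting the Laurent--Massart chi-squared tail in the form $\mathbb P[\|Z(\vx)\|_2^2\ge t]\le\exp(-0.1\,t)$ for $t\ge d(\vx)$; on $\{\tilde\rho\ge\lambda^q\}$ one has $t=\rho^2/\lambda^2\ge d/\lambda^{2(1-q)}\ge d$, so the hypothesis is automatic and the constant $0.1$ comes for free with no case split. Replacing your Borell--TIS step by this gives part~(i) as stated; your argument as written already suffices for the asymptotic parts~(ii) and~(iii), since there $\lambda^{q-1}\to\infty$ and the problematic regime does not occur.
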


\begin{proof}
    Invoke Lemma~\ref{lem:chi_2} to get that, for $\vx\in\calX$, for all $t\geq d(\vx)$,
    \[
        \bbP[\|Z(\vx)\|_2^2\geq t]\leq \exp(\frac{d(\vx)}2)\exp(-\frac t4)\,,
    \]
    for the Gaussian model (see Assumption~\ref{A3}). Hence, for all $\la\in(\varepsilon,1-\varepsilon)$, all $a\in(0,1)$, all $\vx$,
    \[
        \bbP\Big[
        \|Z(\vx)\|_2^2\geq
        \frac{d(\vx)}{(\lambda+\varepsilon)^{2(1-a)}}
        \Big]
        \leq \exp(\frac{d(\vx)}2)\exp\Big(-\frac{d(\vx)}{4(\lambda+\varepsilon)^{2(1-a)}}\,\Big)\,.
    \]
    Furthermore, note that by triangle inequality, the distance to any set is $1$-Lipschitz, so for all $\vx,\vz$,
    \[
    \frac{\rho(\psi_{\vw}(\vx))}{\lambda}
    \geq
    \frac{\rho(\psi_{\vw}(\vx)+\varepsilon \vz)}{\lambda}
    -\frac{\varepsilon}{\lambda}\|\vz\|_2\,.
    \]
    Hence it follows that,
    \begin{align*}
        q(\lambda)
        &=\bbE_{X}
            \mathbb P_Z\Big[\|Z\|_2\geq \frac{\rho(\psi_{\vw}(X))}{\lambda}\,\big|\,X\Big]
            \,,\\
        &
        \leq
        \bbE_{X}
            \mathbb P_Z\Big[\big(1+\frac{\varepsilon}{\lambda}\big)\|Z\|_2\geq \frac{\rho(\psi_{\vw}(X)+\varepsilon Z)}{\lambda}\,\big|\,X\Big]
        \,,\\
        &
        =
        \bbE_{X,Z}
            \Big[\mathbf{1}_{\big\{\|Z\|_2\geq \frac{\rho(\psi_{\vw}(X)+\varepsilon Z)}{\lambda+\varepsilon}\big\}}\Big]\,,
        \\
        &
        =        \bbE_{X,Z}
            \Big[\mathbf{1}_{\big\{\|Z\|_2\geq \frac{\rho(\psi_{\vw}(X)+\varepsilon Z)}{\lambda+\varepsilon}\big\}
            }
            \Big(
            \mathbf{1}_{\big\{\rho(\psi_{\vw}(X)+\varepsilon Z)< (\lambda+\varepsilon)^a \sqrt{d(X)}\big\}}
            +\mathbf{1}_{\big\{\rho(\psi_{\vw}(X)+\varepsilon Z)\geq (\lambda+\varepsilon)^a \sqrt{d(X)}\big\}}
            \Big)
            \Big]
        \,,\\
        & \leq \bbP_{X,Z}\Big[\rho(\psi_{\vw}(X)+\varepsilon Z)< (\lambda+\varepsilon)^a \sqrt{d(X)}\Big]
        \\ & \quad + \bbE_{X,Z}
        \Big[
            \mathbf{1}_{\big\{\|Z\|_2\geq \frac{\rho(\psi_{\vw}(X)+\varepsilon Z)}{\lambda+\varepsilon}\big\}}
            \mathbf{1}_{\big\{\rho(\psi_{\vw}(X)+\varepsilon Z)\geq (\lambda+\varepsilon)^a \sqrt{d(X)}\big\}}
        \Big]
        \,,\\
        & \leq \bbP_{X,Z}\Big[\rho(\psi_{\vw}(X)+\varepsilon Z)< (\lambda+\varepsilon)^a \sqrt{d(X)}\Big]
        + \bbE_X\mathbb P_Z\Big[\|Z\|_2^2\geq \frac{d(X)}{(\lambda+\varepsilon)^{2(1-a)}}\,\big|\,X\Big]\,,\\
        & \leq \bbP_{X,Z}\Big[\rho(\psi_{\vw}(X)+\varepsilon Z)< (\lambda+\varepsilon)^a \sqrt{d(X)}\Big]
        + \bbE_X\Big[\exp(\frac{d(X)}2)\exp\Big(-\frac{d(X)}{4(\lambda+\varepsilon)^{2(1-a)}}\Big)\Big]\,,\\
        & \leq \bbP_{X,Z}\Big[\rho(\psi_{\vw}(X)+\varepsilon Z)< (\lambda+\varepsilon)^a \sqrt{d(X)}\Big]
        + \bbE_X\Big[\exp(\frac{d(X)}2)\Big]\exp\Big(-\frac{1}{4(\lambda+\varepsilon)^{2(1-a)}}\Big)\,,
    \end{align*}
    the last inequality stemming from $d(X)\geq 1$ $\bbP_X$-almost surely.

\medskip

    Assume \eqref{A2}, by Markov's inequality it holds
    \[
    \bbP_{X,Z}\Big[\rho(\psi_{\vw}(X)+\varepsilon Z)< (\lambda+\varepsilon)^a \sqrt{d(X)}\Big]\leq C\exp\big(-(\lambda+\varepsilon)^{-\tau a}\big)
    \leq C\exp\big(-(2\lambda)^{-\tau a}\big)\,.
    \]
    Taking $a=2/(2+\tau)$ gives the result.

\medskip

    Assume \eqref{A2bis}, by Markov's inequality it holds
    \[
    \bbP_{X,Z}\Big[\rho(\psi_{\vw}(X)+\varepsilon Z)< (\lambda+\varepsilon)^a \sqrt{d(X)}\Big]\leq C \,(\lambda+\varepsilon)^{\tau a}\,.
    \]
    Taking $a=1-\log(-4\tau\log\la)/(-2\log(\la))$ gives the result (note that $a<1$, and $a>0$ for $\la$ sufficient small).
    \end{proof}

\begin{remark}[On the conditions \eqref{A2} and \eqref{A2bis}]
    These conditions describe the tail behavior of the random variable $T:={\sqrt{d(X)}}/{\rho(\psi_{\vw}(X)+\varepsilon Z)}$. Condition~\eqref{A2} is of Orlicz-norm type: it states that~$T$ belongs to the $\tau$-Orlicz space and, as such, it has $\tau$-exponentially light tail; see for instance \cite[Section~1.1]{chafai2012interactions} for a review on Orlicz spaces. Hence, the distribution of the random variable ${\rho(\psi_{\vw}(X)+\varepsilon Z)}/{\sqrt{d(X)}}=1/T$ is $\tau$-exponentially light at the edge support point zero, hence the random variable $1/T$ is close to zero when switching from one normal cone to another. Roughly speaking, Condition~\eqref{A2} means that the random direction $\psi_{\vw}(X)+\varepsilon Z$ rarely falls close to a normal-cone boundary, i.e., the distribution of $\psi_{\vw}(X)+\varepsilon Z$ places only a small mass near the decision boundaries (the borders of the normal cones).

    Condition~\eqref{A2bis} is a moment type assumption on $T$. It is weaker than Condition~\eqref{A2}. Indeed, by \cite[Theorem 1.1.5]{chafai2012interactions}, note that \eqref{A2} is equivalent to
    \[
    \forall p\geq\tau \,,\quad
        \bigg[\bbE_X
                    \Big(
                        \frac{\rho(\psi_{\vw}(X)+\varepsilon Z)}{\sqrt{d(X)}}
                        \Big)^{-p}
                    \bigg]^{\frac1p}
                \leq C p^{\tau}\,,
    \]
    for some positive constant $C>0$. Hence, Condition~\eqref{A2} implies ~\eqref{A2bis}. Condition~\eqref{A2bis} gives however a slower rate of convergence of $q(\la)$ than Condition~\eqref{A2}.

    Note that both convergence rates are decreasing in $\tau$ and one cannot have better rates than the sub-Gaussian type bound $\mathcal O_{\lambda\to0}\big(\exp(-(2\lambda)^{-2})\big)$ with this type of analysis.
\end{remark}

\section{Excess Risk Bound for bounded conditional push-forward laws}

\begin{theo}[Excess Risk Bound]
\label{thm:linear_excess_risk}
Suppose Assumptions~\ref{A_emb}, \ref{A3}, \ref{lip}, and \ref{A_k-SoS} hold.
Furthermore, assume that Property~\ref{A_UBD} holds.
Then, for any $\la > 0$, the learned policy $h_{\vw_{M,n,\la}}$ satisfies:
\begin{align*}
0 \leq \calR_{\la}(h_{\vw_{M,n,\la}}) - \bbE_{X}\big[\fh(\vy^0(X), X)\big] &\leq C \la + \mathcal{O}_{\mathds P}\Bigg[ \frac{1}{\la\sqrt{n}} + \bigg( \frac{1}{\la(M/\log M)^{\frac{1}{d_\calW}}} \bigg)^{s-\frac{d_\calW}{2}} \Bigg] \\
&\quad + \underbrace{\big( \calR_{n,\la}(h_{\hat{\vw}}) - \hat{R} \big)}_{\text{a posteriori gap}} + \calE_0(\calH) \,,
\end{align*}
where $\calE_0(\calH) = \inf_{\vw \in \calW} \calR_0(h_\vw) - \bbE_{X}\big[\fh(\vy^0(X), X)\big]$ is the model misspecification error, $\widehat{\vw} = \vw_{M,n,\la}$ is the parameter returned by the k-SoS algorithm, and $C > 0$ is the constant from Proposition~\ref{prop:linear_bias_bound}.
\end{theo}

\begin{proof}
Let $\widehat{\vw} = \vw_{M,n,\la}$ denote the parameter returned by the k-SoS algorithm. We first show that the unregularized risk $\vw \mapsto \calR_0(h_\vw)$ is continuous. Let $(\vw_k)_{k \ge 1}$ be a sequence converging to $\vw \in \calW$. By Assumption~\ref{lip}, we have the convergence $\psi_{\vw_k}(X) \to \psi_\vw(X)$ for all $X$.

By the bounded density assumption, the conditional push-forward law of $\psi_\vw(X)$ is absolutely continuous with respect to the Lebesgue measure. Thus, the probability of $\psi_\vw(X)$ falling exactly on the boundaries of any normal cone $\calF_\vy$ is strictly zero. Consequently, for almost every $X$, the vector $\psi_\vw(X)$ lies strictly in the open interior of some normal cone. Since the interior of a normal cone is an open set, the convergence $\psi_{\vw_k}(X) \to \psi_\vw(X)$ implies that for $k$ sufficiently large, $\psi_{\vw_k}(X)$ falls into the exact same cone as $\psi_\vw(X)$. Thus, the discrete choice locks in almost surely: $\hat\vy_X(\psi_{\vw_k}(X)) = \hat\vy_X(\psi_\vw(X))$. By the Continuous Mapping Theorem, $\fh(\hat\vy_X(\psi_{\vw_k}(X)), X) \to \fh(\hat\vy_X(\psi_\vw(X)), X)$ almost surely. Because the objective $|\fh|$ is uniformly bounded by Assumption~\ref{A_emb}, we can invoke the Dominated Convergence Theorem to pass the limit inside the expectation:
$$ \lim_{k \to \infty} \calR_0(h_{\vw_k}) = \lim_{k \to \infty} \bbE_X \big[\fh(\hat\vy_X(\psi_{\vw_k}(X)), X)\big] = \bbE_X \big[\fh(\hat\vy_X(\psi_\vw(X)), X)\big] = \calR_0(h_\vw) \,. $$
This proves $\calR_0$ is continuous. Since $\calW$ is a compact set, the infimum is attained at some global minimizer $\vw^\star \in \calW$.

\medskip

    We can now decompose the difference between the risk of our learned policy $\widehat{\vw}$ and the risk of the optimal parameter $\vw^\star$:
\begin{align*}
\calR_{\la}(h_{\hat{\vw}}) - \calR_0(h_{\vw^\star})
&= \big( \calR_{\la}(h_{\hat{\vw}}) - \calR_{n,\la}(h_{\hat{\vw}}) \big) \\
&\quad + \big( \calR_{n,\la}(h_{\hat{\vw}}) - \calR_{n,\la}(h_{\vw_{n,\la}}) \big) \\
&\quad + \big( \calR_{n,\la}(h_{\vw_{n,\la}}) - \calR_{n,\la}(h_{\vw^\star}) \big) \\
&\quad + \big( \calR_{n,\la}(h_{\vw^\star}) - \calR_{\la}(h_{\vw^\star}) \big) \\
&\quad + \big( \calR_{\la}(h_{\vw^\star}) - \calR_0(h_{\vw^\star}) \big) \,.
\end{align*}

We bound each term on the right-hand side:
\begin{itemize}
    \item \textbf{Statistical Estimation Error:} The first and fourth terms are bounded by the uniform deviation over $\calW$:
    $$ \big( \calR_{\la}(h_{\hat{\vw}}) - \calR_{n,\la}(h_{\hat{\vw}}) \big) + \big( \calR_{n,\la}(h_{\vw^\star}) - \calR_{\la}(h_{\vw^\star}) \big) \le 2 \sup_{\vw \in \calW} \big| \calR_{\la}(h_\vw) - \calR_{n,\la}(h_\vw) \big| \,. $$
    By Theorem~\ref{thm:A3_statistical}, this supremum is bounded by $\mathcal{O}_{\mathds P}\big(\frac{1}{\la\sqrt{n}}\big)$.

    \item \textbf{Optimization Error:} The second term encapsulates the a posteriori gap and the approximation error from the k-SoS procedure. By Theorem~\ref{thm:k-SoS_certificate} and the subsequent analysis, this is bounded by:
    $$ \calR_{n,\la}(h_{\hat{\vw}}) - \calR_{n,\la}(h_{\vw_{n,\la}}) \le \big( \calR_{n,\la}(h_{\hat{\vw}}) - \hat{R} \big) + \mathcal{O}_{\mathds P}\Bigg[ \bigg( \frac{1}{\la(M/\log M)^{\frac{1}{d_\calW}}} \bigg)^{s-\frac{d_\calW}{2}} \Bigg] \,. $$

    \item \textbf{Empirical Minimization:} The third term is non-positive since $\vw_{n,\la}$ is the exact global minimizer of the empirical regularized risk $\calR_{n,\la}$:
    $$ \calR_{n,\la}(h_{\vw_{n,\la}}) - \calR_{n,\la}(h_{\vw^\star}) \le 0 \,. $$

    \item \textbf{Perturbation Bias:} The fifth term is the perturbation bias. Under the bounded density assumption, Proposition~\ref{prop:linear_bias_bound} guarantees that uniformly for all $\vw \in \calW$:
    $$ \calR_{\la}(h_{\vw^\star}) - \calR_0(h_{\vw^\star}) \le \big| \calR_{\la}(h_{\vw^\star}) - \calR_0(h_{\vw^\star}) \big| \le C \la \,. $$
\end{itemize}

Summing these bounds, we obtain:
$$ \calR_{\la}(h_{\hat{\vw}}) - \calR_0(h_{\vw^\star}) \le C \la + \mathcal{O}_{\mathds P}\Bigg[ \frac{1}{\la\sqrt{n}} + \bigg( \frac{1}{\la(M/\log M)^{\frac{1}{d_\calW}}} \bigg)^{s-\frac{d_\calW}{2}} \Bigg] + \big( \calR_{n,\la}(h_{\hat{\vw}}) - \hat{R} \big) \,. $$

Finally, we add and subtract the population optimum $\bbE_{X}\big[\fh(\vy^0(X), X)\big]$ to introduce the misspecification error $\calE_0(\calH)$. Recognizing that $\calR_0(h_{\vw^\star}) = \inf_{\vw \in \calW} \calR_0(h_\vw)$, we get:
\begin{align*}
\calR_{\la}(h_{\hat{\vw}}) - \bbE_{X}\big[\fh(\vy^0(X), X)\big]
&= \Big( \calR_{\la}(h_{\hat{\vw}}) - \calR_0(h_{\vw^\star}) \Big) + \Big( \inf_{\vw \in \calW} \calR_0(h_\vw) - \bbE_{X}\big[\fh(\vy^0(X), X)\big] \Big) \\
&\le C \la + \mathcal{O}_{\mathds P}\Bigg[ \frac{1}{\la\sqrt{n}} + \bigg( \frac{1}{\la(M/\log M)^{\frac{1}{d_\calW}}} \bigg)^{s-\frac{d_\calW}{2}} \Bigg] + \big( \calR_{n,\la}(h_{\hat{\vw}}) - \hat{R} \big) + \calE_0(\calH) \,,
\end{align*}
which completes the proof.
\end{proof}

To complete our theoretical analysis of the perturbation bias, we formally demonstrate that the linear rate $\mathcal{O}(\la)$ established in Theorem~\ref{thm:linear_excess_risk} and Proposition~\ref{prop:linear_bias_bound} is sharp. The following proposition provides a lower bound by constructing a simple, standard configuration satisfying the bounded density condition where the perturbation bias is strictly $\Omega(\la)$.

\begin{prop}[Sharpness of the Linear Perturbation Bias]
\label{prop:sharpness_lower_bound}
There exists a combinatorial optimization problem satisfying Assumption~\ref{A_emb} a Gaussian perturbation satisfying Assumption~\ref{A3}, and a statistical model~$\psi_\vw$ satisfying Property~\ref{A_UBD}, such that the perturbation bias satisfies:
$$ \big| \calR_{\la}(h_\vw) - \calR_0(h_\vw) \big| = \Omega(\la) \,. $$
\end{prop}

\begin{proof}
Consider a simplified 1-dimensional binary choice problem where the feasible set is $\calY = \{0, 1\} \subset \bbR$. The normal fan of the convex hull $\mathcal{C} = [0, 1]$ consists of two cones: $\calF_0 = \bbR_-$ and $\calF_1 = \bbR_+$, separated by a single boundary point at the origin $H = \{0\}$.
Let the true objective function simply be $\fh(y, x) = y$.

Suppose the statistical model parametrizes the direction $\theta = \psi_\vw(X) \in \bbR$ such that its push-forward law is the uniform distribution over the interval $[0, 1]$. Its probability density function is $f_{\psi_\vw}(\theta) = 1$ on a compact support, which perfectly satisfies Property~\ref{A_UBD}.

For the unregularized risk, because $\theta$ is almost surely strictly positive, the linear oracle always selects $\hat{y}(\theta) = 1$. The unregularized risk is therefore exactly:
$$ \calR_0(h_\vw) = \bbE_\theta[\fh(\hat{y}(\theta))] = \int_0^1 1 \, \mathrm{d}\theta = 1 \,. $$

Now, consider the smoothed policy with a standard Gaussian perturbation $Z \sim \calN(0, 1)$. The oracle selects $\hat{y}(\theta + \la Z) = 1$ if and only if $\theta + \la Z > 0$, which is equivalent to $Z > -\theta/\la$. Let $\Phi$ denote the cumulative distribution function of the standard normal distribution.  The smoothed risk evaluates to:
$$ \calR_\la(h_\vw) = \bbE_{\theta, Z}\big[\fh(\hat{y}(\theta + \la Z))\big] = \int_0^1 \bbP\left(Z > -\frac{\theta}{\la}\right) \mathrm{d}\theta = \int_0^1 \Phi\left(\frac{\theta}{\la}\right) \mathrm{d}\theta \,. $$
\noindent
The perturbation bias is the difference between the two risks:
$$ \calR_0(h_\vw) - \calR_\la(h_\vw) = \int_0^1 \left( 1 - \Phi\left(\frac{\theta}{\la}\right) \right) \mathrm{d}\theta \,. $$
\noindent
Applying the change of variables $u = \theta/\la$ (so that $\mathrm{d}\theta = \la \, \mathrm{d}u$), we obtain:
$$ \calR_0(h_\vw) - \calR_\la(h_\vw) = \la \int_0^{1/\la} \big( 1 - \Phi(u) \big) \mathrm{d}u \,. $$
\noindent
As the perturbation scale goes to zero ($\la \to 0^+$), the upper limit of the integral approaches infinity. Using the identity $1 - \Phi(u) = \bbP(Z > u)$, the integral of the tail probabilities yields the expectation of the positive part of the standard normal:
$$ \lim_{\la \to 0^+} \int_0^{1/\la} \big( 1 - \Phi(u) \big) \mathrm{d}u = \int_0^\infty \bbP(Z > u) \mathrm{d}u = \bbE[Z \mathbf{1}_{\{Z > 0\}}] = \frac{1}{\sqrt{2\pi}} \,. $$

Consequently, as $\la \to 0$, the bias behaves asymptotically as:
$$ \big| \calR_{\la}(h_\vw) - \calR_0(h_\vw) \big| = \frac{1}{\sqrt{2\pi}} \la - o(\la) \,, $$
which strictly proves that the perturbation bias is $\Omega(\la)$, concluding the proof.
\end{proof}
\begin{prop}[Bounded Density via the Coarea Formula]
\label{prop:coarea_density}
Let the instance space $\calX \subset \bbR^p$ be a compact set, and let the random instance $X$ admit a probability density function $f_X$ with respect to the Lebesgue measure that is uniformly bounded, i.e., $\|f_X\|_\infty \le \beta$ for some $\beta > 0$. Let $\psi_\vw : \bbR^p \to \bbR^d$ (with $p \ge d$) represent a deterministic neural network with smooth (Lipschitz and almost everywhere continuously differentiable) activation functions. Suppose the following hold uniformly for all $\vw \in \calW$:
\begin{enumerate}
    \item \textbf{Non-degeneracy:} The generalized Jacobian determinant $|J\psi_\vw|(\vx) \coloneqq \sqrt{\det(D\psi_\vw(\vx) D\psi_\vw(\vx)^\top)}$ is uniformly lower-bounded on $\calX$ by a constant $\alpha > 0$.
    \item \textbf{Bounded Level Sets:} The $(p-d)$-dimensional Hausdorff measure $\calH^{p-d}$ of the level sets is uniformly bounded, meaning there exists a constant $V_{\mathrm{level}} < \infty$ such that $\sup_{\theta \in \bbR^d} \calH^{p-d}\big(\psi_\vw^{-1}(\theta) \cap \calX\big) \le V_{\mathrm{level}}$.
\end{enumerate}

Then, for all $\vw \in \calW$, the push-forward law of $\psi_\vw(X)$ satisfies Property~\ref{A_UBD}: its support is compact, it is absolutely continuous with respect to the $d$-dimensional Lebesgue measure, and its probability density function $g_\vw(\theta)$ is uniformly bounded by $C_{\psi} = \frac{\beta}{\alpha} V_{\mathrm{level}}$.
\end{prop}

\begin{proof}
Since $\psi_\vw$ is a continuous mapping (due to the smooth activations) and the instance space $\calX$ is compact, the image $K_\vw = \psi_\vw(\calX) \subset \bbR^d$ is compact. Furthermore,
because $\calW$ and $\calX$ are compact, their product $\calW \times \calX$ is compact. Since the neural network mapping $(\vw, \vx) \mapsto \psi_\vw(\vx)$ is jointly continuous, its total image $K = \bigcup_{\vw \in \calW} \psi_\vw(\calX)$ is compact, and therefore closed and uniformly bounded.
Thus, the compact support requirement of Case ${\rm (i)}$ is satisfied.

To establish the existence and boundedness of the density $g_\vw(\theta)$, we use the coarea formula. For any measurable set $A \subset \bbR^d$, the probability that $\psi_\vw(X)$ falls in $A$ is:
$$ \bbP_X(\psi_\vw(X) \in A) = \int_{\calX \cap \psi_\vw^{-1}(A)} f_X(\vx) \mathrm{d}\vx \,. $$

Since $\psi_\vw$ is Lipschitz, Rademacher's theorem ensures it is differentiable almost everywhere. By the coarea formula, for any integrable function $h : \calX \to \bbR$, it holds that:
$$ \int_{\calX} h(\vx) |J\psi_\vw|(\vx) \mathrm{d}\vx = \int_{\bbR^d} \left( \int_{\psi_\vw^{-1}(\theta) \cap \calX} h(\vx) \mathrm{d}\calH^{p-d}(\vx) \right) \mathrm{d}\theta \,. $$

We choose the test function $h(\vx) = \frac{f_X(\vx)}{|J\psi_\vw|(\vx)} \mathbf{1}_{A}(\psi_\vw(\vx))$, which is well-defined almost everywhere because $|J\psi_\vw|(\vx) \ge \alpha > 0$. Substituting $h(\vx)$ into the coarea formula yields:
$$ \int_{\calX \cap \psi_\vw^{-1}(A)} f_X(\vx) \mathrm{d}\vx = \int_A \left( \int_{\psi_\vw^{-1}(\theta) \cap \calX} \frac{f_X(\vx)}{|J\psi_\vw|(\vx)} \mathrm{d}\calH^{p-d}(\vx) \right) \mathrm{d}\theta \,. $$

The left-hand side is exactly $\bbP_X(\psi_\vw(X) \in A)$. Because this holds for any measurable set $A$, the push-forward law is absolutely continuous with respect to the Lebesgue measure, and the integrand of the outer integral on the right-hand side is exactly the probability density function $g_\vw(\theta)$:
$$ g_\vw(\theta) = \int_{\psi_\vw^{-1}(\theta) \cap \calX} \frac{f_X(\vx)}{|J\psi_\vw|(\vx)} \mathrm{d}\calH^{p-d}(\vx) \,. $$
We can now bound $g_\vw(\theta)$ uniformly. By hypothesis, $f_X(\vx) \le \beta$ and $|J\psi_\vw|(\vx) \ge \alpha > 0$. Thus:
$$ g_\vw(\theta) \le \int_{\psi_\vw^{-1}(\theta) \cap \calX} \frac{\beta}{\alpha} \mathrm{d}\calH^{p-d}(\vx) = \frac{\beta}{\alpha} \calH^{p-d}\big(\psi_\vw^{-1}(\theta) \cap \calX\big) \,. $$
Finally, applying the bounded level sets assumption $\calH^{p-d}\big(\psi_\vw^{-1}(\theta) \cap \calX\big) \le V_{\mathrm{level}}$, we obtain:
$$ g_\vw(\theta) \le \frac{\beta}{\alpha} V_{\mathrm{level}} \eqqcolon C_\psi \,. $$
Because this upper bound depends only on $\alpha$, $\beta$, and $V_{\mathrm{level}}$, and holds independently of $\theta$ and $\vw$, the density $g_\vw$ is uniformly bounded, completing the proof.
\end{proof}

\newpage

\renewcommand{\arraystretch}{1.1}
\setlength{\tabcolsep}{6pt}
\begin{longtable}{p{0.1\textwidth}p{0.6\textwidth}p{0.2\textwidth}}
\toprule
Notation & Description & Ref. \\
\midrule
\endfirsthead
\toprule
Notation & Description & Ref. \\
\midrule
\endhead
\midrule \multicolumn{3}{r}{(continued)}\\\bottomrule
\endfoot
\bottomrule
\endlastfoot
$\calX$ & Instance space; $X$ random instance & — \\
$\calY(\vx)$ & Finite feasible solution set for instance $\vx$ & \eqref{eq:hardProblem} \\
$d(\vx)$ & Embedding dimension of $\calY(\vx)$ & Assump.~\ref{A_emb} \\
$\fh(\vy,\vx)$ & Original (black-box) cost & \eqref{eq:hardProblem} \\
$h, h_\vw$ & (Parametric) policy & \eqref{eq:risk}, \eqref{eq:modelClass} \\
$\calH,\calH_\calW$ & Policy class & \eqref{eq:modelClass} \\
$\vw \in \calW$ & Model parameter (compact set) & \eqref{eq:modelClass} \\
$d_\calW$ & Parameter dimension & — \\
$\featw(\vx)$ & Learned direction / feature map & \eqref{eq:modelClass} \\
$\bftheta$ & Shorthand for $\psi_\vw(\vx)$ & \eqref{eq:ylinearProblem} \\
$\hat\vy_\vx(\bftheta)$ & Linear optimization oracle solution & \eqref{eq:ylinearProblem} \\
$\calR(h)$ & Population risk & \eqref{eq:risk} \\
$\calR_n(h)$ & Empirical risk & \eqref{eq:regretMinimization} \\
$p_0(\vy|\bftheta)$ & Non-smoothed (baseline) distribution & \eqref{eq:def_p_0} \\
$p_\la(\vy|\bftheta)$ & Perturbed (smoothed) distribution & \eqref{eq:def_p_lambda} \\
$\lambda>0$ & Smoothing / perturbation scale & \eqref{eq:def_pbm_reg} \\
$Z(\vx)$ & Isotropic perturbation field (Gaussian)& Assump.~\ref{A3} \\
$\calR_{n,\la}$ & Regularized empirical risk & \eqref{eq:def_pbm_reg} \\
$\calR_{\la}$ & Regularized population risk & \eqref{eq:reg_risk_solution} \\
$\rho(\bftheta)$ & Distance to normal fan boundary & \eqref{eq:def_rho} \\
$q_\vw(\lambda)$ & Fan-crossing probability & \eqref{eq:def_Vw} \\
$\mathrm{UW}_{\varepsilon_0}$ & Uniform weak moment property & Propr.~\ref{A_law} \\
$\mathrm{UBD}$ & Uniformly bounded density property & Propr.~\ref{A_UBD} \\
$\varepsilon_0$ & Base regularization level & Prop.~\ref{prop:UW_conv} \\
$L_{\calW}$ & Lipschitz constant of $\psi_\vw$ & Assump.~\ref{lip} \\
$U$ & Uniform auxiliary variable (def. $p_0$) & \eqref{eq:def_p_0} \\
$\calF_\vy$ & Normal cone (fan cell) of $\vy$ & Around \eqref{eq:def_p_0} \\
$h^*, \vw^*$ & Optimal policy / parameter & \eqref{eq:risk}, \eqref{eq:optimal_risk} \\
$\conv(\calY)$ & Convex hull of solutions & --- \\
\caption{Main notation.}
\label{tab:notations_intro}
\end{longtable}

\end{document}